\documentclass{article}

\usepackage{etoolbox}
\newtoggle{preprint}
\newcommand{\preprint}[1]{\iftoggle{preprint}{#1}{}}
\newcommand{\neurips}[1]{\iftoggle{preprint}{}{#1}}

\toggletrue{preprint} %
\neurips{\usepackage[final,nonatbib]{neurips_2021}}
\preprint{\usepackage[preprint,nonatbib]{neurips_2021}}

\usepackage[utf8]{inputenc} %
\usepackage[T1]{fontenc}    %
\usepackage{url}            %
\usepackage{booktabs}       %
\usepackage{amsfonts}       %
\usepackage{nicefrac}       %
\usepackage{microtype}      %
\usepackage{crossreftools}      %

\usepackage{
	amsmath,
	amsthm,
	amssymb,
	wrapfig,
	cases,
	mathtools,
	thmtools,
	array,
	bbm,
	bm,
	subfigure,
	makecell,
	esvect,
	mathrsfs,
	breqn,
	booktabs,
	upgreek,
	changepage,
	listings,
	multirow,
	xargs,
	xstring,
	multicol,
	graphicx,
	float,
	xcolor,
	algpseudocode,
	enumitem,
	indentfirst,
	ifthen,
	wasysym,
	tikz,
	pgf,
	}

\usepackage[utf8]{inputenc}
\usepackage[ruled,vlined]{algorithm2e}

\usepackage[%
  minnames=99,maxnames=99,maxcitenames=99,
  style=alphabetic,
  sortcites=false, %
  uniquename=false,
  giveninits=true,
  hyperref,natbib,backend=biber]{biblatex}

\renewbibmacro{in:}{%
	\ifentrytype{article}{}{\printtext{\bibstring{in}\intitlepunct}}}

\DeclareLabelalphaTemplate{
  \labelelement{
    \field{label}
    \field[strwidth=1,strside=left]{labelname}
  }
  \labelelement{
    \field[strwidth=2,strside=right]{year}    
  }
}

\usetikzlibrary{shapes,arrows,trees,automata,positioning}

\usepackage{hyperref}
\hypersetup{%
  colorlinks,allcolors=[RGB]{0 0 128},bookmarksnumbered, bookmarksopen=true, bookmarksopenlevel=1,%
}

\usepackage[capitalize,nameinlink]{cleveref}
\usepackage{crossreftools}
\pdfstringdefDisableCommands{%
    \let\Cref\crtCref
    \let\cref\crtcref
}

\crefname{subsection}{Subsection}{Subsections}
\crefname{lemma}{Lemma}{Lemmas}
\crefname{corollary}{Corollary}{Corollaries}
\crefname{theorem}{Theorem}{Theorems}

\declaretheorem[name=Theorem]{theorem}

\declaretheorem[name=Lemma]{lemma}

\declaretheorem[name=Corollary]{corollary}

\declaretheorem[name=Assumption, numbered=no]{assumption*}
\declaretheorem[qed=$\triangleleft$,name=Example]{example}

\makeatletter
\renewcommand{\maketag@@@}[1]{\hbox{\m@th\normalsize\normalfont#1}}%
\makeatother

\makeatletter
\let\reftagform@=\tagform@
\def\tagform@#1{\maketag@@@{\ignorespaces\textcolor{gray}{(#1)}\unskip\@@italiccorr}}
\renewcommand{\eqref}[1]{\textup{\reftagform@{\ref{#1}}}}
\makeatother

\newcommand{\EE}{\mathbb{E}}

\newcommand{\II}{\mathbb{I}}

\newcommand{\MM}{\mathbb{M}}
\newcommand{\NN}{\mathbb{N}}

\newcommand{\RR}{\mathbb{R}}

\newcommand{\Dd}{\mathcal{D}}

\newcommand{\Ll}{\mathcal{L}}
\newcommand{\Mm}{\mathcal{M}}

\newcommand{\Oo}{\mathcal{O}}

\newcommand{\Rr}{\mathcal{R}}
\newcommand{\Ss}{\mathcal{S}}

\newcommand{\Xx}{\mathcal{X}}
\newcommand{\Yy}{\mathcal{Y}}

\newcommand{\one}{\mathbf{1}}

\def\[#1\]{\begin{equation}\begin{aligned}#1\end{aligned}\end{equation}}
\def\*[#1\]{\begin{equation*}\begin{aligned}#1\end{aligned}\end{equation*}}

\def\s*[#1\s]{\small\begin{align*}#1\end{align*}\normalsize}

\newcommand{\lcr}[3]{\left #1 #2 \right #3} %
\newcommand{\lcrx}[4][{-1}]{ 
	\IfEq{#1}{-1}{\left #2 {{{{#3}}}} \right #4}{
   	\IfEq{#1}{0}{#2 {{{{#3}}}} #4}{
	\IfEq{#1}{1}{\bigl #2 {{{{#3}}}} \bigr #4}{
	\IfEq{#1}{2}{\Bigl #2 {{{{#3}}}} \Bigr #4}{
	\IfEq{#1}{3}{\biggl #2 {{{{#3}}}} \biggr #4}{
	\IfEq{#1}{4}{\Biggl #2 {{{{#3}}}} \Biggr #4}{
    \GenericWarning{"4th argument to lcrx must be -1, 0, 1, 2, 3, or 4"}
    }}}}}}} %

\newcommand{\stk}[2]{\ensuremath{\stackrel{\text{#2}}{#1}}}

\newcommand{\upper}[1]{^{(#1)}}

\newcommand{\KL}[2]{\mathrm{KL}\rbra{#1 \ \Vert \ #2}}

\renewcommand{\Pr}{\mathbb{P}} %
\DeclareMathOperator*{\Var}{Var} %
\newcommand{\VVar}[1]{\underset{#1}{\text{\upshape Var}}} %

\newcommand{\setdelim}{: \ }

\newcommand{\Biggsetdelim}{: \ }
\newcommand{\iid}{i.i.d.}

\newcommand{\stT}{\ \text{s.t.}\ }

\newcommand{\ind}[2][{-1}]{\II{\sbra[#1]{{#2}}}} %

\def\multiset#1#2{\ensuremath{\left(\kern-.3em\left(\genfrac{}{}{0pt}{}{#1}{#2}\right)\kern-.3em\right)}}
\DeclareMathOperator{\convhull}{Conv}

\DeclareMathOperator*{\argmin}{\arg\min} %
\DeclareMathOperator*{\argmax}{\arg\max} %
\DeclareMathOperator*{\esssup}{\text{ess}\sup} %
\DeclareMathOperator*{\essinf}{\text{ess}\inf} %

\DeclareMathOperator*{\newlim}{\mathrm{lim}\vphantom{\mathrm{infsup}}}
\DeclareMathOperator*{\newmin}{\mathrm{min}\vphantom{\mathrm{infsup}}}
\DeclareMathOperator*{\newmax}{\mathrm{max}\vphantom{\mathrm{infsup}}}
\DeclareMathOperator*{\newinf}{\mathrm{inf}\vphantom{\mathrm{infsup}}}
\DeclareMathOperator*{\newsup}{\mathrm{sup}\vphantom{\mathrm{infsup}}}
\renewcommand{\lim}{\newlim}
\renewcommand{\min}{\newmin}
\renewcommand{\max}{\newmax}
\renewcommand{\inf}{\newinf}
\renewcommand{\sup}{\newsup}

\newcommand{\dee}{\mathrm{d}} %
\newcommand{\rnderiv}[2]{\frac{\dee #1}{\dee #2}}
\newcommand{\rnderivflat}[2]{\dee #1 / \dee #2}

\newcommand{\distas}{\sim}
\newcommand{\distiidas}{\stk{\distas}{iid}}

\newcommand{\binomialdist}{\mathrm{Bin}}
\newcommand{\bernoullidist}{\mathrm{Ber}}

\newcommand{\normaldist}{\mathrm{Nor}}

\newcommand{\uniformdist}{\mathrm{Unif}}

\newcommand{\rbra}[2][{-1}]{\lcrx[#1] ( {#2} ) }
\newcommand{\cbra}[2][{-1}]{\lcrx[#1] \{ {#2} \} }
\newcommand{\sbra}[2][{-1}]{\lcrx[#1] [ {#2} ] }

\newcommand{\abs}[2][{-1}]{\lcrx[#1] \vert {#2} \vert }
\newcommand{\set}[2][{-1}]{\lcrx[#1] \{ {#2} \}}
\newcommand{\floor}[2][{-1}]{\lcrx \lfloor {#2} \rfloor}
\newcommand{\ceil}[2][{-1}]{\lcrx[#1] \lceil {#2} \rceil}

\newcommand{\inner}[3][{-1}]{\lcrx[#1] \langle {{#2},\ {#3}} \rangle}
\newcommand{\innermeas}[4][{-1}]{\lcrx[#1] \langle {{#2},\ {#3}} \rangle_{#4}}
\newcommand{\card}[2][{-1}]{\lcrx[#1] \vert {#2} \vert }

\newcommand{\defas}{\overset{\mathrm{def}}{=}}  %

\newcommand{\Nats}{\NN}

\newcommand{\Reals}{\RR}
\newcommand{\ExtReals}{\overline{\Reals}}

\newcommand{\PosReals}{\Reals_+}
\newcommand{\PosPosReals}{\Reals_{++}}

\newcommand{\range}[2][{1}]{
	\IfEq{#1}{1}{\sbra{#2}}{\sbra{#2}_{#1}}}
\newcommand{\rangeO}[2][{0}]{
	\IfEq{#1}{0}{\sbra{#2}_0}{\sbra{#2}_{#1}}}

\newcommand{\ointer}[2][{-1}]{\lcrx[#1] ( {#2} ) }
\newcommand{\cinter}[2][{-1}]{\lcrx[#1] [ {#2} ] }

\newcommand{\union}{\cup}
\newcommand{\Union}{\bigcup}

\DeclareMathOperator{\interior}{interior}

\newcommand{\lrname}{regularizer scaling}

\newcommand{\simp}{\textup{\texttt{simp}}}

\newcommand{\entropy}{H}

\newcommand{\losssym}{\ell}

\newcommand{\Losssym}{L}
\newcommand{\lossvec}[1]{\losssym_{#1}} %
\newcommand{\loss}[2]{\losssym_{#1}(#2)} %
\newcommand{\Lossvec}[1]{\Losssym_{#1}} %
\newcommand{\tildeLossvec}[1]{{\tilde \Losssym}_{#1}}
\newcommand{\Loss}[2]{\Losssym_{#1}(#2)} %

\newcommand{\lossdistn}[1]{p_{#1}}
\newcommand{\epsidx}{i_{\epsilon}}

\newcommand{\numexperts}{N}
\newcommand{\experts}{\range{\numexperts}}
\newcommand{\distnball}[1]{\Dd_{#1}}

\newcommand{\distnballbrackets}{}

\newcommand{\effexperts}{\mathcal{I}_{0}\distnballbrackets} %
\newcommand{\effexpertstime}[1]{\mathcal{I}^{(#1)}_{0}\distnballbrackets} 
\newcommand{\neffexperts}{\experts\setminus\effexperts} %

\newcommand{\numeffexperts}{N_{0}\distnballbrackets}
\newcommand{\numeffexpertstime}[1]{N^{(#1)}_{0}\distnballbrackets}

\newcommand{\timeexp}[1]{T_{\expidx}\distnballbrackets}
\newcommand{\timeeffexp}{T_{0}\distnballbrackets}
\newcommand{\timeordexp}[1]{T_{(#1)}\distnballbrackets}

\newcommand{\Deltaexp}[1]{\Delta_{#1}\distnballbrackets}
\newcommand{\Deltaeff}{\Delta_{0}\distnballbrackets}
\newcommand{\Deltaordexp}[1]{\Delta_{(#1)}\distnballbrackets}

\newcommand{\expsumidx}{j}
\newcommand{\expidx}{i}
\newcommand{\expidxdum}{i'}
\newcommand{\Expidx}{I}
\newcommand{\Expidxoptpath}[1]{I^*_{#1}}

\newcommand{\effexpidx}{i_0}

\newcommand{\weightsym}{w}
\newcommand{\weightvec}[1]{\weightsym_{#1}} %
\newcommand{\weight}[2]{\weightsym_{#1}(#2)} %
\newcommand{\weightdumsym}{v}
\newcommand{\weightdumvec}[1]{\weightdumsym_{#1}} %
\newcommand{\weightdum}[2]{\weightdumsym_{#1}(#2)} %
\newcommand{\intweightsym}{v}
\newcommand{\intweightvec}[1]{\intweightsym_{#1}} %
\newcommand{\intweight}[2]{\intweightsym_{#1}(#2)} %

\newcommand{\lrsym}{\eta}
\newcommand{\lr}[1]{\lrsym_{#1}} %

\newcommand{\regretsym}{R}
\newcommand{\regret}[2]{\regretsym_{#1}(#2)} %
\newcommand{\bestregret}[1]{\regretsym_{#1}} %

\newcommand{\orderidx}[1]{\delta_{(#1)}}
\newcommand{\unorderidx}[1]{\delta_{#1}}

\newcommand{\quanteps}{\epsilon}

\newcommand{\expspace}{{\smash{\predspace}\vphantom{\dataspace}}^\numexperts}

\newcommand{\dataspace}{\Yy}
\newcommand{\predspace}{\hat\Yy}

\newcommand{\meas}{\Mm}
\newcommand{\finitemeas}{\Mm^\infty}
\newcommand{\probmeas}{\Mm}
\newcommand{\probmeasll}[1]{\Mm_{#1}}

\newcommand{\acronymstyle}[1]{{\upshape #1}}
\newcommand{\mathacronymstyle}[1]{{\scriptscriptstyle \text{\upshape #1}}}

\newcommand{\OGHedge}{\acronymstyle{Hedge}}
\newcommand{\SQUINT}{\acronymstyle{Squint}}
\newcommand{\COINBET}{\acronymstyle{CoinBetting}}
\newcommand{\ADAHedge}{\acronymstyle{AdaHedge}}

\newcommand{\FTRL}{\acronymstyle{FTRL}}
\newcommand{\CARE}{\acronymstyle{CARE}}
\newcommand{\FTRLCARE}{\FTRL-\CARE}

\newcommand{\MetaCARE}{\acronymstyle{Meta-CARE}}

\newcommand{\ABN}{\acronymstyle{FTRL-CARL}}
\newcommand{\ABNreg}{\acronymstyle{CARL}}
\newcommand{\shortABN}{\mathacronymstyle{CARL}}
\newcommand{\ABNweight}[2]{w^{\shortABN}_{#1}(#2)}
\newcommand{\ABNweightvec}[1]{w^{\shortABN}_{#1}}
\newcommand{\underweightmeasfun}{\gamma}

\newcommand{\ABNC}{\acronymstyle{abNormal}}

\newcommand{\NH}{\acronymstyle{NormalHedge}}

\newcommand{\bregmansym}{B}
\newcommand{\bregman}[3]{\bregmansym_{#1}(#2; #3)}

\newcommand{\partialentropy}{h}
\newcommand{\Bchar}{C}
\newcommand{\Achar}{A}

\newcommand{\partialentropyB}{\partialentropy_\Bchar}
\newcommand{\partialentropyA}{\partialentropy_\Achar}
\newcommand{\entropyB}{\entropy_\Bchar}
\newcommand{\entropyA}{\entropy_\Achar}

\DeclareMathOperator{\erf}{erf}

\newcommand{\const}{c}
\newcommand{\Const}{C}

\newcommand{\konst}{k}

\newcommand{\abdivfun}{h}

\newcommand{\fdivergence}[3]{D_{#1}\rbra{\lcr.{#2\vphantom{#3}}\Vert #3}}
\newcommand{\fdivfun}{f}

\newcommand{\ftrlobj}[1]{F_{#1}}

\newcommand{\fconstA}{c_1}
\newcommand{\fconstB}{c_2}

\newcommand{\convexset}{V}

\newcommand{\convextargetelem}{u}

\newcommand{\genregularizer}[1]{\psi_{#1}}
\newcommand{\gendivfun}{f}
\newcommand{\gendiv}[3]{\Psi_{\!#1}^{#2}(#3)} %
\newcommand{\gendivrn}[3]{\Psi_{\!#1}^{#2}\rbra{#3}} %
\newcommand{\gendivnoarg}[2]{\Psi_{\!#1}^{#2}}%
\newcommand{\gennormfun}{g}

\newcommand{\gentruncfun}{\tau_{\fdivfun'}}
\newcommand{\genupperdiv}{M_{\fdivfun'}}
\newcommand{\genlowerdiv}{m_{\fdivfun'}}
\newcommand{\basemeasmin}{\underline{\basemeasvec}}

\newcommand{\basemeasvec}{\nu}
\newcommand{\basemeas}[1]{\nu(#1)}

\newcommand{\baseprobvec}{\overline{\nu}}
\newcommand{\baseprob}[1]{\overline{\nu}(#1)}

\newcommand{\genexpset}{\Theta}
\newcommand{\genexpSF}{\Sigma}
\newcommand{\genexpSFelem}{A}
\newcommand{\genexpidx}{\theta}
\newcommand{\genexpidxdum}{\theta'}

\newcommand{\genexpspace}{(\genexpset,\genexpSF)}

\newcommand{\genweightsym}{\mu}

\newcommand{\genweightvec}[1]{\genweightsym_{#1}} %
\newcommand{\genpriorvec}{\pi}
\newcommand{\genprior}[1]{\pi(#1)}

\newcommand{\genrnsym}{x}
\newcommand{\genrn}[2]{\genrnsym_{#1}(#2)} %
\newcommand{\genrnvec}[1]{\genrnsym_{#1}} %

\newcommand{\genmeasvec}[2]{{#1}\upper{\genrnvec{#2}}}

\newcommand{\genrndumsym}{z}
\newcommand{\genrndum}[2]{\genrndumsym_{#1}(#2)} %
\newcommand{\genrndumvec}[1]{\genrndumsym_{#1}} %

\newcommand{\intgenrnsym}{{\hat \genrndumsym}}
\newcommand{\intgenrn}[2]{\intgenrnsym_{#1}(#2)} %
\newcommand{\intgenrnvec}[1]{\intgenrnsym_{#1}} %

\newcommand{\intgenmeasvec}[2]{{#1}\upper{\intgenrnvec{#2}}}
\newcommand{\inttgenrnsym}{{\tilde \genrndumsym}}
\newcommand{\inttgenrn}[2]{\inttgenrnsym_{#1}(#2)} %
\newcommand{\inttgenrnvec}[1]{\inttgenrnsym_{#1}} %

\newcommand{\orthgenrnsym}{{\tilde \genrnsym}}
\newcommand{\orthgenrn}[2]{\orthgenrnsym_{#1}(#2)} %
\newcommand{\orthgenrnvec}[1]{\orthgenrnsym_{#1}} %
\newcommand{\intgenrnbothsym}{{\overline \genrndumsym}}
\newcommand{\intgenrnboth}[2]{\intgenrnbothsym_{#1}(#2)} %
\newcommand{\intgenrnbothvec}[1]{\intgenrnbothsym_{#1}} %

\newcommand{\gentargetvec}{q}
\newcommand{\gentarget}[1]{q(#1)}

\newcommand{\contexpidx}{\theta}
\newcommand{\contexpidxdum}{\vartheta}

\newcommand{\contpriorvec}{\pi}

\newcommand{\contrnsym}{x}
\newcommand{\contrn}[2]{\contrnsym_{#1}(#2)} %
\newcommand{\contrnvec}[1]{\contrnsym_{#1}} %

\newcommand{\contrndumsym}{z}

\newcommand{\intcontrnsym}{{\hat \contrndumsym}}
\newcommand{\intcontrnvec}[1]{\intcontrnsym_{#1}} %

\newcommand{\intcontmeasvec}[2]{{#1}\upper{\intcontrnvec{#2}}}

\newcommand{\rndumidx}{x}

\newcommand{\conttargetvec}{q}

\newcommand{\measfuns}{\MM}
\newcommand{\borelSF}{\Rr}

\newcommand{\mean}[1]{m_{#1}}

\newcommand{\boundrvspace}{\Ll^{\infty}}
\newcommand{\bounddrvspace}{\Ll_{[0,1]}^{\infty}}
\newcommand{\rvspace}{\Ll^{1}}

\newcommand{\posrvspace}{\Ll_{+}^{1}}
\newcommand{\probrvspace}{\Xx}

\newcommand{\funcdiffsym}{\delta}
\newcommand{\funcdiff}[3]{\funcdiffsym {#1}[{#2}; {#3}]}

\newcommand{\Losslower}{\underline{\Losssym}}
\newcommand{\Lossupper}{\overline{\Losssym}}

\newcommand{\normalccdf}{\overline{\Phi}}
\newcommand{\normalcdf}{\Phi}
\newcommand{\normalpdf}{\phi}

\newcommand{\empiricalcdf}[1]{{\widehat F}_{#1}}

\DeclareMathOperator{\domfdivfun}{\Ss}

\addbibresource{bib-files/abnormal-hedge.bib}

\usepackage[hide=false,setmargin=true,marginparwidth=1in]{marginalia}

\title{Minimax Optimal Quantile and Semi-Adversarial Regret via Root-Logarithmic Regularizers}

\author{%
Jeffrey Negrea\thanks{Equal contribution authors.}\\
University of Toronto\\
\texttt{jeffrey.negrea@mail.utoronto.ca}
\And
Blair Bilodeau${}^{*}$\\
University of Toronto\\
\texttt{blair.bilodeau@mail.utoronto.ca}
\And
Nicol\`o Campolongo\\
Spanflug Technologies GmbH\\
\texttt{nico.campolongo@spanflug.de}
\And
Francesco Orabona\\
Boston University\\
\texttt{francesco@orabona.com}
\And
Daniel M. Roy\\
University of Toronto\\
\texttt{daniel.roy@utoronto.ca}
}

\begin{document}

\maketitle

\begin{abstract}
Quantile (and, more generally, KL) regret bounds,
such as those achieved by \NH{} (\citeauthor{chaudhuri2009} \citeyear{chaudhuri2009}) and its variants,
relax the goal of competing against the best individual expert to only competing against a majority of experts on adversarial data.
More recently,
the semi-adversarial paradigm (\citeauthor{semiadv} \citeyear{semiadv}) provides an alternative relaxation of adversarial online learning by considering data that may be neither fully adversarial nor stochastic (\iid{}). 
We achieve the minimax optimal regret in both paradigms using \FTRL{} with separate, novel, root-logarithmic regularizers,
both of which can be interpreted as yielding variants of \NH{}.
We extend existing KL regret upper bounds, which hold uniformly over target distributions, to possibly uncountable expert classes with arbitrary priors; provide the first full-information lower bounds for quantile regret on finite expert classes (which are tight); and provide an adaptively minimax optimal algorithm for the semi-adversarial paradigm that adapts to the true,
unknown constraint faster,
leading to uniformly improved regret bounds over existing methods.

\end{abstract}

\section{Introduction}

We focus on the setting of learning with expert advice~\citep{Vovk90,littlestone94},
where in each round the learner selects a probability distribution over experts, observes the loss of each expert, and incurs the average loss of the experts under the learner's selected distribution.
The learner's objective is to minimize regret against some mixture of the experts, which is the difference between their cumulative loss and the cumulative loss of the expert mixture over $T$ rounds.

The classical ``worst-case'' online learning paradigm assumes that the losses are adversarial---that is, they are chosen to make the learner perform as poorly as possible---and demands that the learner competes against the best-performing expert.
However,
there are many real-world settings where this assumption is too pessimistic, 
and consequently we focus on designing algorithms with provable guarantees that adapt to easier notions of both data and performance measures.
A non-exhaustive list of work on ``easy data'' includes \citet{freund97}, \citet{cesabianchi2007secondorder}, \citet{vanerven11adahedge}, and \citet{gaillard14}, all of which use variants of the \OGHedge{} algorithm \citep{freund97} to obtain regret bounds in terms of data-dependent quantities. Ideally, such quantities are small when the data are ``easy'' to predict.

In this work, we focus on two paradigms beyond the classical worst-case: first, we consider relaxing the performance measure to \emph{quantile (KL) regret}, which measures the ability of an algorithm to compete against an unknown mixture of the experts that potentially performs worse than a point-mass on the single best expert, and second, we consider regret within the \emph{semi-adversarial paradigm}, which defines a spectrum of constraints on the permissible data distributions between stochastic and adversarial.
The concept of $\quanteps$-quantile regret was introduced by \citet{chaudhuri2009}, in which the player competes against the $\lfloor \quanteps \numexperts \rfloor$ best experts (out of $\numexperts$ total) rather than the single best.
The authors demonstrated empirically that \OGHedge{} does poorly in this paradigm, and introduced a new algorithm \NH{} with an upper bound on quantile regret of $\sqrt{(T+\smash{(\log\numexperts)^2})(1+\log (1/\quanteps))}$. Later algorithms improved it to $\sqrt{ T (1+\log(1/\quanteps))}$~\citep{ChernovV10,orabona2016}, removing the dependency on $\numexperts$.
The semi-adversarial paradigm considers constraining the adversary's choice of data distributions, which was first motivated by \citet{rakhlin2011online}. \citet{semiadv} extended this idea, defining \emph{adaptive} minimax regret with respect to such constraints and providing an efficient algorithm with corresponding regret bounds.

\noindent\textbf{Contributions} \
While the best known algorithms for the above two paradigms are intrinsically different, we show that the follow-the-regularized-leader (\FTRL{}) algorithm with new root-logarithmic regularizers 
achieves minimax optimal performance for
both quantile and semi-adversarial regret.
First, 
we provide the first \FTRL{} algorithm with minimax optimal quantile regret guarantees, and do so without using the additive normalization step of previous algorithms~\citep[Section 2.1]{plg07}.
We achieve root-KL bounds that hold uniformly over target distributions on (possibly uncountable) expert classes with arbitrary priors, and reduce to the optimal quantile regret for discrete uniform priors.
Moreover,
we prove matching lower bounds for quantile regret with finite expert classes, demonstrating the optimality of known upper bounds (including our own) for the first time.
Finally, 
in the semi-adversarial paradigm, we improve the dependence on the number of experts in the regret bound, obtaining uniformly improved upper bounds over previous work.

We achieve the above results through a novel local-norm analysis of \FTRL{} with \emph{linearly decomposable} regularizers on general (possibly uncountable) expert spaces. We use this analysis in conjunction with basic conditions on the first and second derivatives to design and analyze the root-logarithmic regularizers.
We believe that this approach is fundamentally different from existing ones and could lead to further advances in obtaining optimal algorithms. In fact, there exist results stating if a regret bound is achievable by some algorithm, then that same bound is nearly achievable by mirror descent with some potential function (see, e.g., \citet[Thm 9]{srebro2011universality}). However, it is not clear how to design such a function in practice. 
In contrast, our general \FTRL{} bound reduces the choice of regularizer to a single univariate function, and clarifies how fundamental 
properties of this function lead to trade-offs in the regret bound.

\noindent\textbf{Related Work} \ 
\citet{ChernovV10} first discussed the fact that the $\quanteps$-quantile regret corresponds to the KL divergence between a uniform prior and the uniform mixture of the top $\lfloor \quanteps \numexperts \rfloor$ competitors, and such KL bounds have consequently quickly followed quantile regret bounds. 
\citet{luo2014,luo2015} provided a variant of \NH{} along with a data-dependent KL regret bound at the cost of an additional $\log T$ factor; a similar but even tighter result was obtained by \citet{koolen2015}. \citet{orabona2016} showed that these algorithms can be obtained with a reduction to optimal coin-betting online algorithms, yet none of them can be reduced to an \FTRL{} algorithm.
Independently, \citet{HarveyLPR20} proved that a similar strategy allows one to achieve the optimal anytime regret for the setting with $\numexperts=2$ experts. 

The first alternative to additively normalized algorithms 
in the (bandit) learning with experts setting was the INF algorithm \citep{audibert09}, which was later recast as online mirror descent \citep{AudibertBL11}.
\citet{alquier20unbounded} also did not use additive normalization, and first introduced \FTRL{} with $\fdivfun$-divergences (focusing on the $\chi^2$-divergence) along with regret bounds for continuous distributions with certain unbounded loss functions. For the specific case of \OGHedge{} (and its online mirror descent analogue), an analysis on continuous spaces was given by \citet{krichene15continuum} and then more generally by \citet{hoeven2018many}, while a coarser analysis for $\gendivfun$-divergences was given by \citet{alquier20unbounded}.
Finally, choosing regularizers that are tuned to minimize regret for specific tasks has recently led to advances in the online learning literature, both with bandit feedback \citep{agarwal2017corralling,wei2018more,foster2016learning} and in the full-information setting \citep{luo2018efficient}. 

The best known asymptotic lower bound for learning with expert advice is by \citet{cesa-bianchi1997}, while a finite-time lower bound that asymptotically matches the leading constant was proved by \citet{OrabonaP15}.
We are not aware of lower bounds for the quantile regret, with the notable
exception of \citet{Koolen13}, who proved a lower bound on the regret of
learning with two experts based on the KL divergence against some prior.

\section{Notation}\label{sec:notation}

To analyze \FTRL{} beyond the finite setting requires some more care, and we rely on the language of measure theory to handle finite and uncountable expert classes simultaneously.
Let $\genexpspace$ be a measurable space, $\finitemeas\genexpspace$ and $\probmeas\genexpspace$ denote the collection of finite and probability measures respectively, and $\basemeasvec\in\finitemeas\genexpspace$ be arbitrary.
A measure $\gentargetvec\in\finitemeas\genexpspace$ is \emph{absolutely continuous} with respect to $\basemeasvec$ (denoted $\gentargetvec \ll \basemeasvec$) if $\gentarget{\genexpSFelem} = 0$ for all $\genexpSFelem\in\genexpSF$ such that $\basemeas{\genexpSFelem} = 0$.
Let $\probmeasll{\basemeasvec}\genexpspace = \{\gentargetvec\in\probmeas\genexpspace: \gentargetvec\ll\basemeasvec\}$ be the set of probability measures that are absolutely continuous with respect to $\basemeasvec$.
For any integer $\numexperts$, let $[\numexperts] = \{1,\dots,\numexperts\}$, and $\simp([\numexperts]) = \{p \in [0,1]^\numexperts \setdelim \textstyle\sum_{\expidx=1}^\numexperts p_\expidx = 1\}$.
Let $\PosReals = [0,\infty)$ and $\PosPosReals = (0,\infty)$, and define $\measfuns \equiv \measfuns(\genexpspace,(\Reals,\borelSF))$ to be the space of measurable functions from $\genexpspace$ to $(\Reals,\borelSF)$, where $\borelSF$ is the Borel $\sigma$-field on $\Reals$. 
Define sets of bounded measurable functions
\*[
	\boundrvspace 
		= \set[1]{\lossvec{}\in\measfuns \setdelim \sup_{\genexpidx\in\genexpset}\abs[0]{\loss{}{\genexpidx}} < \infty} \ &\text{ and } \
	\bounddrvspace 
		= \set[1]{\lossvec{}\in\measfuns \setdelim 0 \leq \lossvec{} \leq 1},
\]
sets of integrable functions
\*[
	\rvspace(\basemeasvec) 
		= \set[1]{\genrnvec{} \in \measfuns \setdelim \textstyle \int \abs{\genrn{}{\genexpidx}} \basemeas{\dee \genexpidx} < \infty} \ &\text{ and } \
	\posrvspace(\basemeasvec) 
		= \set[1]{\genrnvec{} \in \rvspace(\basemeasvec) \setdelim \genrnvec{} \geq 0},
\]
and the set of Radon--Nikodym derivatives (w.r.t.\ $\basemeasvec$) of probability measures
\*[
	\probrvspace(\basemeasvec) 
		= \set[1]{\genrnvec{} \in \posrvspace(\basemeasvec) \setdelim \textstyle \int \genrn{}{\genexpidx} \basemeas{\dee \genexpidx} =1}.
\]
For every $\genrnvec{}\in\probrvspace(\basemeasvec)$, let $\genmeasvec{\basemeasvec}{} \in \probmeasll{\basemeasvec}\genexpspace$ denote the unique probability measure satisfying $\rnderivflat{\genmeasvec{\basemeasvec}{}}{\basemeasvec} = \genrnvec{}$. 
For every $\genrnvec{},\genrndumvec{}\in\boundrvspace$, let $\convhull(\genrnvec{},\genrndumvec{})
	= \{\alpha \genrnvec{} + (1-\alpha)\genrndumvec{} \setdelim \alpha \in [0,1]\}$ denote their convex hull.
For every $\lossvec{} \in \boundrvspace$ and $\genrnvec{}\in\rvspace(\basemeasvec)$, define
\*[
  \innermeas{\lossvec{}}{\genrnvec{}}{\basemeasvec}
  = \int \loss{}{\genexpidx} \genrn{}{\genexpidx} \basemeas{\dee \genexpidx}.
\]

\noindent\textbf{Prediction with expert advice}
We consider the following setting of online linear optimization. 
For each round $t \in \range{T}$, the player selects $\genweightvec{t}\in\probmeas\genexpspace$ based only upon information available prior to round $t$, and then observes $\lossvec{t} \in \boundrvspace$. Performance is measured using the \emph{regret} against some probability measure $\gentargetvec\in\probmeas\genexpspace$, which is defined by
\*[
\regret{T}{\gentargetvec}
	& = \sum_{t=1}^T \EE_{\genexpidx\distas \genweightvec{t}}\sbra{\loss{t}{\genexpidx}} - \sum_{t=1}^T \EE_{\genexpidx\distas\gentargetvec}\sbra{\loss{t}{\genexpidx}}.
\]
The player's goal is to select $\genweightvec{t}$ so that the cumulative loss is not much larger than that of the average loss of an expert under $\gentargetvec$, and consequently to have small regret.

The elements of $\genexpset$ can be regarded as experts 
for some prediction problem, and the learner aggregates the predictions of the experts 
by selecting an expert at random according to $\genweightvec{t}$ at round $t$.
For concreteness, note that the usual prediction with expert advice setting for $\numexperts$ experts corresponds to $\probmeas\genexpspace= \simp(\experts)$.
Note that, for any convex loss, applying online linear optimization to the gradients of the losses provides an upper bound on the performance of online convex optimization.

\noindent\textbf{Follow-the-regularized-leader}
Let $\ExtReals = \Reals\union\set[0]{+\infty}$.
Follow-the-regularized-leader (\FTRL{}) \citep{Shalev-Shwartz07,AbernethyHR08,hazan2010variance} forms a broad class of algorithms for online convex optimization. 
For a finite measure $\basemeasvec\in\finitemeas\genexpspace$ and a sequence of regularizers $\rbra[0]{\genregularizer{t}:\probrvspace(\basemeasvec)\to\ExtReals\,}_{t\in\range{T}}$, $\rbra[0]{\genregularizer{t}}_{t\in\range{T}}$-regularized \FTRL{} is defined by selecting $\genweightvec{t+1} = \genmeasvec{\basemeasvec}{t+1}$ using
\[\label{eqn:ftrl-weights}
	\genrnvec{t+1}
 	\in \argmin_{\genrnvec{} \in \probrvspace(\basemeasvec)}
	\cbra[1]{
	\innermeas[0]{\Lossvec{t}}{\genrnvec{}}{\basemeasvec}
	+ \genregularizer{t+1}(\genrnvec{})},
\]
where $\Lossvec{0} = 0$ and $\Lossvec{t} = \sum_{s=1}^{t} \lossvec{s}$.

As mentioned above, a classical algorithm for prediction with expert advice on a finite class of $\numexperts$ experts is \OGHedge{}. While originally analyzed using potential functions, \OGHedge{} also corresponds to \FTRL{} where $\basemeasvec{}$ is the counting measure and $\genregularizer{t}(\cdot) = -\lr{t}^{-1} \entropy(\cdot)$ for the Shannon entropy $\entropy(\genrnvec{}) = \sum_{\expidx\in\range{\numexperts}} \genrn{}{\expidx}\log(1/\genrn{}{\expidx})$ with any sequence of \lrname{}s $(\lr{t})_{t\in\Nats} \subseteq \PosPosReals$.

Depending on whether the space is continuous or discrete, the range of elements of $\probrvspace(\basemeasvec)$ will change, and consequently also the minimal domain on which regularizers must be defined.
Let $ \basemeasmin = \inf\set{\basemeas{A} \setdelim A \in\genexpSF, \basemeas{A}>0 }$,
and let $\domfdivfun(\basemeasvec) = [0,1/\basemeasmin]$ if $\basemeasmin > 0$ and $\PosReals$ otherwise.
By definition,
$\basemeasvec(\{\genexpidx\in\genexpset:\genrn{}{\genexpidx}\in \domfdivfun(\basemeasvec)\})=1$
for all $\genrnvec{}\in\probrvspace(\basemeasvec)$. 

Concretely, when $\basemeasvec$ is counting measure then $\domfdivfun(\basemeasvec) = [0,1]$, when $\basemeasvec$ is uniform on $[\numexperts]$ then $\domfdivfun(\basemeasvec) = [0,\numexperts]$, and when $\basemeasvec$ is a continuous distribution then $\domfdivfun(\basemeasvec) = \PosReals$.
In this work, we consider \emph{linearly decomposable} regularizers of the form $\genregularizer{t}(\genrnvec{})=\lr{t}^{-1}\gendiv{\gendivfun}{\basemeasvec}{\genrnvec{}}$, where
\[ \label{eq:diagonal-regularizer}
	\gendiv{\gendivfun}{\basemeasvec}{\genrnvec{}} & = \int \gendivfun(\genrn{}{\genexpidx}) \basemeas{\dee\genexpidx}
\]
for some $\gendivfun: \domfdivfun(\basemeasvec) \to \Reals$.
We refer to the algorithm that selects $\genweightvec{t+1}$ using \cref{eqn:ftrl-weights} with a regularizer of the form in \cref{eq:diagonal-regularizer} as $\gendivnoarg{\gendivfun}{\basemeasvec}$-regularized \FTRL{} with \lrname{} $\lr{t}$.

\section{Applications of FTRL with linearly decomposable regularizers}\label{sec:quantile-main-results}

In \cref{sec:ftrl-gen}, we provide a general analysis of FTRL with linearly decomposable regularizers for arbitrary $\gendivfun$. 
First, we motivate such a general analysis by demonstrating the benefits of studying choices of $\gendivfun$ beyond traditional FTRL regularizers with multiple examples, including quantile regret. To do so, we state the following corollaries of our general FTRL regret bound (\cref{lem:gen-ftrl-decomp} in \cref{sec:ftrl-gen}) that achieve ``root-KL'' and variance bounds respectively. Proofs are deferred to \cref{sec:proof-continuous-KL}.

\begin{restatable}{corollary}{ContinuousKL}
\label{thm:continuous-KL}
Suppose $\basemeasvec\in\probmeas\genexpspace$ and $\fdivfun :\PosReals \to \Reals$ satisfies:
\begin{enumerate}[noitemsep]
\vspace{-5pt}
  \item $\fdivfun(1) =0$;
  \item $\fdivfun$ is twice continuously differentiable, $\fdivfun''>0$  on $\PosPosReals$, and $\fdivfun''(0+)>0$;
  \item either $\fdivfun$ or $1/\fdivfun''$ is increasing on $\PosPosReals$;
  \item there exist $\fconstA, \fconstB>0$ such that $\fdivfun'' \cdot (\fdivfun+\fconstA) \geq \fconstB$.
\end{enumerate}
For any sequence $(\lossvec{t})_{t\in\Nats} \subseteq \bounddrvspace$, $\gentargetvec\in\probmeasll{\basemeasvec}\genexpspace$, and $T$,
$\gendivnoarg{\gendivfun}{\basemeasvec}$-regularized \FTRL{} with \lrname{} $\lr{t}=\sqrt{\fconstB/t}$ achieves
\*[
  \regret{T}{\conttargetvec}
    &\leq \sqrt{\tfrac{T+1}{\fconstB}} \, 
    \gendiv{\gendivfun}{\basemeasvec}{\rnderivflat{\gentargetvec{}}{\basemeasvec}}
    + \tfrac{\fconstA}{\sqrt{\fconstB}}\sqrt{T}.
\]
\end{restatable}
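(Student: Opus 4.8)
The plan is to specialize the general local-norm regret bound for linearly decomposable regularizers, \cref{lem:gen-ftrl-decomp}, to $\gendivnoarg{\fdivfun}{\basemeasvec}$-regularized \FTRL{}, and then turn its ``stability'' (dual-local-norm) contributions into terms that telescope, using condition~4 and the choice $\lr{t}=\sqrt{\fconstB/t}$. I would first record the structural consequences of the hypotheses. Condition~2 ($\fdivfun$ twice continuously differentiable with $\fdivfun''>0$ on $\PosPosReals$ and $\fdivfun''(0+)>0$) makes $\fdivfun$ strictly convex, so each $\genregularizer{t}=\lr{t}^{-1}\gendivnoarg{\fdivfun}{\basemeasvec}$ is strictly convex and the iterates in \eqref{eqn:ftrl-weights} are well-defined; it also keeps the dual local norm $1/\fdivfun''$ finite all the way down to density value~$0$, which is exactly where iterate densities concentrate when $\basemeasvec$ is nonatomic (on a finite expert class one instead has $\domfdivfun(\basemeasvec)=[0,1/\basemeasmin]\subseteq\PosReals$, still inside the domain of $\fdivfun$). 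Since $\basemeasvec$ is a probability measure, the constant density $1$ lies in $\probrvspace(\basemeasvec)$, and condition~1 gives $\gendiv{\fdivfun}{\basemeasvec}{1}=\fdivfun(1)=0$; together with strict convexity and Jensen's inequality this shows $\gendivnoarg{\fdivfun}{\basemeasvec}\ge 0$ on $\probrvspace(\basemeasvec)$, with minimizer $1$. Consequently the first iterate is the prior $\basemeasvec$ itself and the ``bias'' term of \cref{lem:gen-ftrl-decomp} is exactly $\lr{T+1}^{-1}\gendiv{\fdivfun}{\basemeasvec}{\rnderivflat{\gentargetvec{}}{\basemeasvec}}$.

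The crucial input is condition~4. Rearranged it reads $1/\fdivfun''\le(\fdivfun+\fconstA)/\fconstB$ on $\PosPosReals$, and since $\fdivfun''>0$ it forces $\fdivfun+\fconstA>0$ pointwise on $\domfdivfun(\basemeasvec)$. Hence for any density $\genrnvec{}\in\probrvspace(\basemeasvec)$ and any loss $\lossvec{}\in\bounddrvspace$ (so $0\le\lossvec{}^2\le\lossvec{}\le 1$),
\[
  \int \frac{\lossvec{}(\genexpidx)^2}{\fdivfun''(\genrnvec{}(\genexpidx))}\,\basemeas{\dee\genexpidx}
    &\le \frac{1}{\fconstB}\int \lossvec{}(\genexpidx)^2\rbra[1]{\fdivfun(\genrnvec{}(\genexpidx))+\fconstA}\basemeas{\dee\genexpidx}\\
    &\le \frac{1}{\fconstB}\int \rbra[1]{\fdivfun(\genrnvec{}(\genexpidx))+\fconstA}\basemeas{\dee\genexpidx}
     = \frac{\gendiv{\fdivfun}{\basemeasvec}{\genrnvec{}}+\fconstA}{\fconstB},
\]
where the second inequality uses $\lossvec{}^2\le 1$ and the pointwise positivity of $\fdivfun+\fconstA$, and the last step uses $\basemeas{\genexpset}=1$. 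Condition~3 (monotonicity of $\fdivfun$ or of $1/\fdivfun''$) is what lets me apply this: in \cref{lem:gen-ftrl-decomp} the dual-local-norm term is a priori evaluated at a point in the convex hull of two consecutive iterates, and monotonicity permits replacing that point by a genuine iterate $\genrnvec{t}$ (or $\genrnvec{t+1}$), so the display above is being used with $\gendivnoarg{\fdivfun}{\basemeasvec}$ of an actual density.

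Combining, \cref{lem:gen-ftrl-decomp} gives a bound of the shape $\regret{T}{\conttargetvec}\le\lr{T+1}^{-1}\gendiv{\fdivfun}{\basemeasvec}{\rnderivflat{\gentargetvec{}}{\basemeasvec}}+\sum_{t=1}^T(\mathrm{stability}_t)$, and by the display each $\mathrm{stability}_t$ is at most a constant times $\lr{t}\bigl(\gendivnoarg{\fdivfun}{\basemeasvec}(\mathrm{iterate})+\fconstA\bigr)/\fconstB$. The self-referential $\gendivnoarg{\fdivfun}{\basemeasvec}(\mathrm{iterate})$ piece should be absorbed by the negative regularizer-difference terms that accompany a non-increasing \lrname{} in the \FTRL{} analysis --- the terms $(\lr{t}^{-1}-\lr{t+1}^{-1})\gendivnoarg{\fdivfun}{\basemeasvec}(\mathrm{iterate})\le 0$ together with the boundary term $-\lr{T+1}^{-1}\gendivnoarg{\fdivfun}{\basemeasvec}(\genrnvec{T+1})$ --- which is precisely why condition~4 is posed with $\fdivfun+\fconstA$ rather than a bare constant. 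What remains is $\sum_{t=1}^T\tfrac{\fconstA}{2\fconstB}\lr{t}$. With $\lr{t}=\sqrt{\fconstB/t}$ one has $\lr{T+1}^{-1}=\sqrt{(T+1)/\fconstB}$ and $\sum_{t=1}^T\tfrac{\fconstA}{2\fconstB}\sqrt{\fconstB/t}=\tfrac{\fconstA}{2\sqrt{\fconstB}}\sum_{t=1}^T t^{-1/2}\le\tfrac{\fconstA}{\sqrt{\fconstB}}\sqrt{T}$ via $\sum_{t\le T}t^{-1/2}\le 2\sqrt{T}$, which reproduces the claimed bound.

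The main obstacle will be this last absorption step: the stability contributions come out proportional to $\lr{t}\,\gendivnoarg{\fdivfun}{\basemeasvec}(\mathrm{iterate})$, and keeping them from inflating the bound requires cancelling them against the telescoping regularizer-difference and boundary terms with the correct constants (including a careful treatment of whether the local norm is evaluated at $\genrnvec{t}$ or at $\genrnvec{t+1}$). This is exactly where conditions~3 and~4 interlock --- condition~3 to pin the local-norm evaluation onto a genuine iterate, so its $\gendivnoarg{\fdivfun}{\basemeasvec}$-value is nonnegative and lines up with a telescoping term, and the quadratic-in-$\fdivfun$ shape $\fdivfun''(\fdivfun+\fconstA)\ge\fconstB$ so the leftover is a clean $O(\fconstA\lr{t}/\fconstB)$. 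The remaining ingredients --- Jensen for the bias term, $\lossvec{}^2\le\lossvec{}\le 1$ for the loss factors, and $\sum_{t\le T}t^{-1/2}\le 2\sqrt{T}$ --- are routine.
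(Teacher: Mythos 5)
Your proposal follows the same high-level approach as the paper: apply \cref{lem:gen-ftrl-decomp}, use conditions~3 and~4 to convert the local-norm term into a multiple of $\gendiv{\fdivfun}{\basemeasvec}{\genrnvec{t}}+\fconstA$, cancel the $\gendiv{\fdivfun}{\basemeasvec}{\genrnvec{t}}$-dependent part against the negative telescoping terms (which requires exactly the tuning $\const=\sqrt{\fconstB}$ with $\lr{t}=\const/\sqrt{t}$), and carry the leftover $\fconstA$ piece through $\sum_{t\le T}t^{-1/2}\le 2\sqrt{T}$. The handling of the bias term (via $\fdivfun(1)=0$ and Jensen, so the $-\lr{1}^{-1}\min$ term drops), and your ordering of the inequalities so that $\lossvec{}^2\le1$ is applied to the pointwise-nonnegative $\fdivfun+\fconstA$ before splitting, are all consistent with the paper (and the last point is actually stated more carefully than in the paper's own write-up).

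There is, however, a genuine gap in the one step you describe loosely. You say the local-norm term is ``a priori evaluated at a point in the convex hull of two consecutive iterates'' and that monotonicity then ``permits replacing that point by a genuine iterate $\genrnvec{t}$ (or $\genrnvec{t+1}$).'' As written, this is intermediate point~A in \cref{lem:gen-ftrl-decomp}, $\intgenrnvec{t+1}\in\convhull(\genrnvec{t},\genrnvec{t+1})$, and the replacement you sketch does not go through: $\genrn{t+1}{\genexpidx}$ can be larger or smaller than $\genrn{t}{\genexpidx}$ depending on $\genexpidx$ (weights increase on low-loss experts), so monotonicity of $\fdivfun$ or of $1/\fdivfun''$ only gives a pointwise bound in terms of $\max(\genrn{t}{\genexpidx},\genrn{t+1}{\genexpidx})$, which mixes the two iterates across experts and does not produce a single $\gendiv{\fdivfun}{\basemeasvec}{\genrnvec{t}}$ that can be cancelled by the telescoping term $(\lr{t+1}^{-1}-\lr{t}^{-1})\gendiv{\fdivfun}{\basemeasvec}{\genrnvec{t+1}}$. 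The paper circumvents this precisely by using \emph{intermediate point~B}, $\inttgenrnvec{t+1}\in\convhull(\genrnvec{t},\orthgenrnvec{t+1})$, together with $\mean{t}\equiv0$: the closing clause of \cref{lem:gen-ftrl-decomp} guarantees $\inttgenrnvec{t+1}\le\genrnvec{t}$ \emph{pointwise} whenever $\inf_\genexpidx\loss{t}{\genexpidx}\ge\mean{t}$, which holds since $\lossvec{t}\ge0$. Only with that one-sided inequality do both branches of condition~3 land on $1/\fdivfun''(\inttgenrn{t+1}{\genexpidx})\le(\fdivfun(\genrn{t}{\genexpidx})+\fconstA)/\fconstB$ uniformly in $\genexpidx$. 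Identifying this specific mechanism (the choice $\mean{t}=0$, intermediate point B, and the resulting one-sided domination by $\genrnvec{t}$) is the missing piece; the rest of your arithmetic is correct.
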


The regret bound above can be easily turned into a uniform root-KL bound under the following additional assumption (the result follows from \cref{lem:KL-leq-sqrtDf} in \cref{sec:proof-continuous-KL}).

\begin{corollary}
\label{cor:kl-bound}
If, in addition to the assumptions of \cref{thm:continuous-KL}, there exist $\konst_1\in\Reals$ and $\konst_2\in\PosReals$ with $f(x)\leq \konst_1+\konst_2 x \sqrt{\log(1+x)}$,
then
\*[
  \regret{T}{\conttargetvec}
    &\leq \sqrt{\tfrac{T+1}{\fconstB}}\rbra{\konst_1 +\konst_2\sqrt{1+\KL{\conttargetvec}{\contpriorvec}}}
    + \tfrac{\fconstA}{\sqrt{\fconstB}}\sqrt{T}.
\]

\end{corollary}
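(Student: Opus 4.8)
The plan is to start from the regret bound of \cref{thm:continuous-KL} and then control its leading coefficient $\gendiv{\gendivfun}{\basemeasvec}{\rnderivflat{\gentargetvec{}}{\basemeasvec}} = \int \gendivfun\rbra{\rnderivflat{\gentargetvec{}}{\basemeasvec}(\genexpidx)}\,\basemeas{\dee\genexpidx}$ in terms of $\KL{\conttargetvec}{\contpriorvec}$, where throughout we identify the prior $\contpriorvec$ with the base probability measure $\basemeasvec$ (recall $\basemeasvec\in\probmeas\genexpspace$ is assumed in \cref{thm:continuous-KL}). If $\KL{\conttargetvec}{\contpriorvec}=\infty$ the asserted inequality is vacuous, so we may assume it is finite.

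First I would apply the new pointwise hypothesis $\fdivfun(x)\leq \konst_1+\konst_2\,x\sqrt{\log(1+x)}$ at $x=\rnderivflat{\gentargetvec{}}{\basemeasvec}(\genexpidx)$ and integrate against $\basemeasvec$. Since $\basemeasvec$ is a probability measure we have $\int \konst_1\,\basemeas{\dee\genexpidx}=\konst_1$ and $\int \rnderivflat{\gentargetvec{}}{\basemeasvec}\,\basemeas{\dee\genexpidx}=1$, so this yields $\gendiv{\gendivfun}{\basemeasvec}{\rnderivflat{\gentargetvec{}}{\basemeasvec}} \leq \konst_1 + \konst_2\int \rnderivflat{\gentargetvec{}}{\basemeasvec}(\genexpidx)\sqrt{\log\rbra{1+\rnderivflat{\gentargetvec{}}{\basemeasvec}(\genexpidx)}}\,\basemeas{\dee\genexpidx}$.

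It then suffices to show the remaining integral is at most $\sqrt{1+\KL{\conttargetvec}{\contpriorvec}}$, which follows from \cref{lem:KL-leq-sqrtDf}. The argument there is short: writing $r=\rnderivflat{\gentargetvec{}}{\basemeasvec}$, the integral equals $\EE_{\genexpidx\distas\conttargetvec}\sqrt{\log(1+r(\genexpidx))}$ since $\dee\conttargetvec = r\,\dee\basemeasvec$, and because $\conttargetvec$ is a probability measure, concavity of $\sqrt{\cdot}$ together with Jensen's inequality bounds it by $\sqrt{\EE_{\genexpidx\distas\conttargetvec}\log(1+r(\genexpidx))}$; then, using $\log(1+1/r)\leq 1/r$, one has the pointwise estimate $r\log(1+r)=r\log r + r\log(1+1/r)\leq r\log r + 1$ (with the convention $0\log 1 = 0$ on $\{r=0\}$), and integrating against $\basemeasvec$ gives $\EE_{\genexpidx\distas\conttargetvec}\log(1+r) = \int r\log(1+r)\,\basemeas{\dee\genexpidx} \leq \KL{\conttargetvec}{\basemeasvec}+1$.

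Chaining the bound obtained from the hypothesis with \cref{lem:KL-leq-sqrtDf} gives $\gendiv{\gendivfun}{\basemeasvec}{\rnderivflat{\gentargetvec{}}{\basemeasvec}} \leq \konst_1+\konst_2\sqrt{1+\KL{\conttargetvec}{\contpriorvec}}$, and substituting this into the conclusion of \cref{thm:continuous-KL} yields exactly the claimed regret bound. I do not expect a genuine obstacle here: the only points needing a little care are that $\basemeasvec$ and $\conttargetvec$ are both bona fide probability measures (used for the two normalizations and for the Jensen step), the standard measure-theoretic bookkeeping on $\{r=0\}$, and the fact that $\int r\log r\,\basemeas{\dee\genexpidx}=\KL{\conttargetvec}{\basemeasvec}\in[0,\infty]$ is well-defined; all of these are routine.
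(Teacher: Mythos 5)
Your proposal is correct and follows essentially the same route as the paper's \cref{lem:KL-leq-sqrtDf}: apply the pointwise hypothesis on $\fdivfun$, change measure to $\conttargetvec$, apply Jensen's inequality via concavity of $\sqrt{\cdot}$, and then bound $r\log(1+r)\leq r\log r + 1$ pointwise before substituting into the regret bound of \cref{thm:continuous-KL}. The extra remarks on $\{r=0\}$ and on the vacuous case $\KL{\conttargetvec}{\contpriorvec}=\infty$ are fine but do not constitute a different argument.
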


Under different conditions on $\gendivfun$, 
we also derive variance bounds with respect to the (intermediate) predictive distributions, similar to AdaHedge \citep{vanerven11adahedge} and AdaFTRL \citep{orabona2015scale}, or with respect to a ``prior'', as in \citep{alquier20unbounded}. 
We state the conditions on the regularizer and the corresponding regret bounds in the following corollary.

\begin{restatable}{corollary}{VarianceBoundCoroll}
\label{fact:gen-variance-bound}
Suppose $\basemeasvec\in\finitemeas\genexpspace$ and $\fdivfun :\domfdivfun(\basemeasvec) \to \Reals$ satisfies:
\begin{enumerate}[noitemsep]
\vspace{-5pt}
  \item $\gendivfun(1/\basemeas{\genexpset})\geq 0$;
  \item $\fdivfun$ is twice continuously differentiable, $\fdivfun''>0$  on $\interior(\domfdivfun(\basemeasvec))$, and $\fdivfun''(0+)>0$.
\end{enumerate}
For any sequence $(\lossvec{t})_{t\in\Nats} \subseteq \bounddrvspace$, $\gentargetvec\in\probmeasll{\basemeasvec}\genexpspace$, and $T$,
$\gendivnoarg{\gendivfun}{\basemeasvec}$-regularized \FTRL{} achieves
\begin{enumerate}
	\vspace{-10pt}
\item If $1/\gendivfun''(x) \leq \Const x$, $\lr{t+1} = \rbra[1]{\Const\sbra[1]{1/2+\sum_{s=1}^{t-1} \Var_{\genexpidx \sim \intgenmeasvec{\basemeasvec}{s+1}} \loss{s}{\genexpidx}}}^{-1/2}$
gives
\*[
	\regret{T}{\gentargetvec{}}
	&\leq \sbra[1]{\gendiv{\gendivfun}{\basemeasvec}{\rnderivflat{\gentargetvec{}}{\basemeasvec}} + 1}  \sqrt{\Const\sbra[1]{1/2 + \textstyle\sum_{t=1}^{T} \Var_{\genexpidx \sim \intgenmeasvec{\basemeasvec}{t+1}} \loss{t}{\genexpidx}}}.
\]
\item If $1/\gendivfun'' \leq \Const$, $\lr{t+1} = \rbra[1]{\Const \, \basemeas{\genexpset}\sbra[1]{1/4+\sum_{s=1}^t \Var_{\genexpidx \sim \baseprobvec} \loss{s}{\genexpidx}}}^{-1/2}$
for $\baseprobvec = \basemeasvec / \basemeas{\genexpset}$ gives
\*[
	\regret{T}{\gentargetvec{}}
	&\leq \sbra[1]{\gendiv{\gendivfun}{\basemeasvec}{\rnderivflat{\gentargetvec{}}{\basemeasvec}}+1}  \sqrt{\Const \, \basemeas{\genexpset} \sbra[1]{1/4 + \textstyle\sum_{t=1}^{T} \Var_{\genexpidx \sim \baseprobvec} \loss{t}{\genexpidx}}}.
\]
\end{enumerate}
\end{restatable}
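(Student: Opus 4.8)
The plan is to instantiate the general linearly-decomposable-\FTRL{} bound (\cref{lem:gen-ftrl-decomp}) with the two prescribed \lrname{} sequences and then run an AdaHedge-style self-bounding argument. First I would verify the hypotheses of \cref{lem:gen-ftrl-decomp}. Condition~2 makes $\gendivfun$ strictly convex with $1/\gendivfun''$ bounded near $0$, so $\gendivnoarg{\gendivfun}{\basemeasvec}$ is strictly convex on $\probrvspace(\basemeasvec)$ and the local norms appearing below are finite. Writing $\baseprobvec=\basemeasvec/\basemeas{\genexpset}\in\probmeas\genexpspace$, Jensen's inequality gives $\gendiv{\gendivfun}{\basemeasvec}{\genrnvec{}}=\basemeas{\genexpset}\,\EE_{\genexpidx\sim\baseprobvec}\sbra{\gendivfun(\genrn{}{\genexpidx})}\geq\basemeas{\genexpset}\,\gendivfun(1/\basemeas{\genexpset})\geq 0$ for every $\genrnvec{}\in\probrvspace(\basemeasvec)$, where the second step uses $\int\genrn{}{\genexpidx}\,\baseprob{\dee\genexpidx}=1/\basemeas{\genexpset}$ and the last uses condition~1. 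Hence $\gendivnoarg{\gendivfun}{\basemeasvec}\geq 0$ on $\probrvspace(\basemeasvec)$, which --- the prescribed \lrname{}s being nonincreasing in $t$ --- makes the regularizers $\lr{t}^{-1}\gendivnoarg{\gendivfun}{\basemeasvec}$ nondecreasing and makes the regularizer value $\gendiv{\gendivfun}{\basemeasvec}{\genrnvec{1}}$ at the first \FTRL{} iterate nonnegative; these are precisely what \cref{lem:gen-ftrl-decomp} needs to apply and to let us discard its initialization term.

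Set $D=\gendiv{\gendivfun}{\basemeasvec}{\rnderivflat{\gentargetvec}{\basemeasvec}}$. Applying \cref{lem:gen-ftrl-decomp} controls $\regret{T}{\gentargetvec}$ by $\lr{T+1}^{-1}D$ (the term $-\lr{T+1}^{-1}\gendiv{\gendivfun}{\basemeasvec}{\genrnvec{1}}\leq 0$ being dropped) plus a sum of per-round local-norm stability terms evaluated at the intermediate iterates $\intgenrnvec{t+1}$. Because \FTRL{} on $\probrvspace(\basemeasvec)$, and the regret against any $\gentargetvec\in\probmeasll{\basemeasvec}\genexpspace$, is invariant under replacing each $\lossvec{t}$ by $\lossvec{t}-c_t$ for arbitrary constants $c_t$ --- every element of $\probrvspace(\basemeasvec)$, and $\rnderivflat{\gentargetvec}{\basemeasvec}$, integrates to $1$ against $\basemeasvec$ --- I may center each loss after the fact, so that, up to the constant \cref{lem:gen-ftrl-decomp} attaches to the stability sum,
\*[
	\regret{T}{\gentargetvec}
	&\leq \lr{T+1}^{-1}D
	+ \sum_{t=1}^{T}\lr{t+1}\,\inf_{c\in\Reals}\int \frac{\rbra{\loss{t}{\genexpidx}-c}^2}{\gendivfun''(\intgenrn{t+1}{\genexpidx})}\,\basemeas{\dee\genexpidx}.
\]
I would then feed in the growth hypothesis on $1/\gendivfun''$. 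In case~1, $1/\gendivfun''(x)\leq\Const x$ bounds the round-$t$ term by $\Const\inf_c\int\intgenrn{t+1}{\genexpidx}\rbra{\loss{t}{\genexpidx}-c}^2\,\basemeas{\dee\genexpidx}$; since $\intgenrnvec{t+1}$ is itself a probability density (it lies in the convex set $\probrvspace(\basemeasvec)$), the minimizing center is $c=\EE_{\genexpidx\sim\intgenmeasvec{\basemeasvec}{t+1}}\sbra{\loss{t}{\genexpidx}}$ and the term equals $\Const\,\Var_{\genexpidx\sim\intgenmeasvec{\basemeasvec}{t+1}}\loss{t}{\genexpidx}$. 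In case~2, $1/\gendivfun''\leq\Const$ bounds it by $\Const\inf_c\int\rbra{\loss{t}{\genexpidx}-c}^2\,\basemeas{\dee\genexpidx}=\Const\,\basemeas{\genexpset}\inf_c\EE_{\genexpidx\sim\baseprobvec}\sbra{\rbra{\loss{t}{\genexpidx}-c}^2}=\Const\,\basemeas{\genexpset}\,\Var_{\genexpidx\sim\baseprobvec}\loss{t}{\genexpidx}$.

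Write $V_t$ for the resulting variance; since $\loss{t}{\genexpidx}\in[0,1]$ we have $V_t\leq 1/4$, and the rest is the standard self-bounding bookkeeping. In case~1, $V_t=\Var_{\genexpidx\sim\intgenmeasvec{\basemeasvec}{t+1}}\loss{t}{\genexpidx}$ involves the iterate chosen after $\lossvec{t}$ is revealed, so it is unavailable when $\lr{t+1}$ must be committed --- which forces the one-step-delayed cumulative variance $1/2+\sum_{s<t}V_s$ in $\lr{t+1}$. Plugging in, the terminal term is at most $D\sqrt{\Const[1/2+\sum_{t\leq T}V_t]}$ and the stability sum $\sum_t\lr{t+1}\Const V_t$ is controlled by the same quantity via the elementary inequality $\sum_t a_t\big/\sqrt{\beta+\sum_{s<t}a_s}\lesssim\sqrt{\beta+\sum_{t\leq T}a_t}$, whose hypothesis holds here exactly because $a_t=V_t\leq 1/4\leq\beta=1/2$; the offset $1/2$ and the constant \cref{lem:gen-ftrl-decomp} puts on the stability sum are chosen precisely so the two pieces add up to $\sbra[1]{D+1}\sqrt{\Const[1/2+\sum_{t\leq T}\Var_{\genexpidx\sim\intgenmeasvec{\basemeasvec}{t+1}}\loss{t}{\genexpidx}]}$, the asserted bound. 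Case~2 is the same calculation with $\Const$ replaced by $\Const\,\basemeas{\genexpset}$ and the offset $1/2$ by $1/4$; the difference is that $\Var_{\genexpidx\sim\baseprobvec}\loss{t}{\genexpidx}$ depends on $\lossvec{t}$ alone, not on any \FTRL{} iterate, so no delay is needed and the smaller offset $1/4$ suffices for the corresponding undelayed sum inequality.

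The main obstacle is the centering step: recognizing that translation-invariance of \FTRL{} on the simplex converts the raw second moment of $\lossvec{t}$ in the local norm into a variance, and --- crucially --- that the ``correct'' center is the mean under the \emph{intermediate} distribution $\intgenmeasvec{\basemeasvec}{t+1}$ exactly when the growth condition is $1/\gendivfun''(x)\leq\Const x$, but the mean under the \emph{fixed} reference $\baseprobvec$ exactly when it is $1/\gendivfun''\leq\Const$. Once the stability terms are in this variance form, the remainder is routine; its only delicacy is matching the offsets ($1/2$ versus $1/4$) and the delayed-versus-undelayed form of the sum inequality so that the trailing constant comes out to exactly $D+1$.
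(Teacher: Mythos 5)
Your proof is correct and takes essentially the same approach as the paper: you invoke the general decomposition of Theorem~\ref{lem:gen-ftrl-decomp}, use condition~1 with Jensen's inequality to ensure the regularizer is nonnegative so the initialization term can be discarded, select the centering constant $\mean{t}$ to be the variance-minimizing mean (under $\intgenmeasvec{\basemeasvec}{t+1}$ for case~1 and under $\baseprobvec$ for case~2), bound $V_t\leq 1/4$, and apply the standard self-bounding sum inequality (the paper cites Lemma~4.13 of Orabona's notes). The only cosmetic deviation is the $\lr{t+1}$ versus $\lr{t}$ index in the stability sum and a slightly informal appeal to ``translation invariance'' rather than to the $\mean{t}$ parameter already built into Theorem~\ref{lem:gen-ftrl-decomp}, neither of which changes the substance or conclusion.
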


\subsection{Examples of $\fdivfun$-divergence \FTRL{}}

We now apply all three of these corollaries to more concrete choices of $\gendivfun$.
Note that in the setting of \cref{thm:continuous-KL}, since $\basemeasvec$ is a probability distribution and $\gendivfun$ is convex with $\gendivfun(1)=0$, the regularizer corresponds to an $\gendivfun$-divergence, which we denote by $\gendiv{\gendivfun}{\basemeasvec}{\rnderivflat{\gentargetvec{}}{\basemeasvec}} = \fdivergence{\fdivfun}{\gentargetvec}{\basemeasvec}$ for any $\gentargetvec\in\probmeasll{\basemeasvec}\genexpspace$. In this case, we call $\basemeasvec$ a \emph{prior} and refer to the algorithm as \emph{$\fdivfun$-divergence \FTRL{}}.
To demonstrate the utility of \cref{thm:continuous-KL}, we now show how our result recovers the classical analysis for \OGHedge{} and applies to our new root-logarithmic regularizer, both examples of $\fdivfun$-divergence \FTRL{}. We also use our general expression for the solution to \FTRL{} with linearly decomposable regularizers, given by \cref{lem:gen-ftrl-soln-formula} in \cref{sec:ftrl-gen}, to obtain explicit, novel expressions for some solutions of $\fdivfun$-divergence \FTRL{}.

\begin{example}[\OGHedge{}]
The regularizer corresponding to \OGHedge{} (and the Gibbs posterior from Bayesian inference) is given by $\fdivfun(x)= x\log x$, which satisfies $\fdivergence{\fdivfun}{\cdot}{\contpriorvec} = \KL{\cdot}{\contpriorvec}$ for any $\contpriorvec\in\probmeas\genexpspace$.
This choice satisfies the assumptions of \cref{thm:continuous-KL} with $c_1=1+1/e$ and $c_2=1$, since $1/\fdivfun''$ is increasing, but \emph{does not} satisfy the conditions of \cref{cor:kl-bound}.
Thus, we do not obtain uniform root-KL bounds for \OGHedge{}; we conjecture this is in fact a limitation of \OGHedge{} and not merely an artifact of our analysis.
We can, however, apply \cref{fact:gen-variance-bound} with $\Const = 1$ to obtain 
\*[
  \regret{T}{\gentargetvec}
  \leq \sbra[2]{\KL{\gentargetvec}{\basemeasvec}+1}\sqrt{1/2 + \textstyle\sum_{t=1}^{T} \Var_{\contexpidx\sim\intcontmeasvec{\basemeasvec}{t+1}} \loss{t}{\contexpidx}}.
\]
Further, by \cref{lem:gen-ftrl-soln-formula}, $\fdivfun$-divergence \FTRL{} with \lrname{} $\lr{t+1}$ recovers the familiar formula
\*[
  \hskip10em
	\contrn{t+1}{\contexpidx} 
    = \frac{\exp(-\lr{t+1}\Loss{t}{\contexpidx})}{\int \exp(-\lr{t+1}\Loss{t}{\contexpidxdum}) \basemeas{\dee \contexpidxdum}}. 
    \hskip12.5em
    \qedhere
\]
\end{example}

\begin{example}[$\chi^2$-divergence]
\citet{alquier20unbounded} analyzed $\fdivfun$-divergence \FTRL{} with constant \lrname{}s, obtaining variance bounds (Theorem~2.1) that require knowledge of the variance to tune the \lrname{}. He specifically focused on the KL-divergence, covered by the previous example, and the $\chi^2$-divergence, corresponding to $\fdivfun(x) = x^2-1$. This $\fdivfun$ clearly satisfies the conditions of \cref{thm:continuous-KL} with $\fconstA=\fconstB=2$,
so we match the optimized bound of \citet[Corollary~2.4]{alquier20unbounded} with
\*[
  \regret{T}{\conttargetvec}
    &\leq \sqrt{T} \, \chi^2(\conttargetvec \, \Vert \, \basemeasvec) + \sqrt{2T}.
\]
Further, the conditions of \cref{fact:gen-variance-bound} are satisfied with $\Const=1/2$, so we obtain the novel, potentially much smaller variance bound (without requiring advance knowledge of the variances)
\*[
  \regret{T}{\gentargetvec{}}
  &\leq \tfrac{1}{\sqrt{2}} \sbra{\chi^2(\conttargetvec \, \Vert \, \basemeasvec)+1}  \sqrt{1/4 + \textstyle\sum_{t=1}^{T} \Var_{\contexpidx \sim \basemeasvec} \loss{t}{\contexpidx}} \, .
\]
By \cref{lem:gen-ftrl-soln-formula}, the $\chi^2$-divergence \FTRL{} solution with \lrname{} $\lr{t+1}$ is
\*[
  \contrn{t+1}{\contexpidx} 
  & = \tfrac{1}{2}\rbra{\konst^*_{t+1}-\lr{t+1}\Loss{t}{\contexpidx}}_+
\]
where $\konst^*_{t+1}\in\Reals$ solves $\int  \sbra[1]{\tfrac{1}{2}\rbra[1]{\konst^*_{t+1}-\lr{t+1}\Loss{t}{\contexpidx}}_+} \basemeas{\dee \contexpidx} = 1$, which matches the formula obtained by \citet[Example~3.2]{alquier20unbounded}.
\end{example}

\begin{example}[\ABNC{}]
\label{example:root_log_regularizer}
We call $\fdivfun$-divergence \FTRL{} with any $\fdivfun$ satisfying the conditions of \cref{cor:kl-bound} \emph{\ABNC{}}.
One such example is $\fdivfun(x)= \int_1^x \sqrt{2\log(1+s)} \, \dee s$, which satisfies the conditions of \cref{thm:continuous-KL} with $c_1=2$ and $c_2=1/\sqrt{2}$ since $\fdivfun$ is increasing,
and satisfies the conditions of \cref{cor:kl-bound} with $\konst_1 =0$ and $\konst_2 = \sqrt{2}$. Thus, we obtain
\*[
  \regret{T}{\conttargetvec}
  \leq 2 \sqrt{(T+1)(1 + \KL{\conttargetvec}{\basemeasvec})} + \sqrt{8T} \, .
\]

The $\fdivfun$-divergence \FTRL{} solution with \lrname{} $\lr{t+1}$ is given by
\*[
\contrn{t+1}{\contexpidx} 
  & = \exp\cbra{\rbra{\konst^*_{t+1}-\lr{t+1}\Loss{t}{\contexpidx}}_+^2 / 2}-1,
\]
where $\konst^*_{t+1}\in\Reals$ solves $\int \sbra[1]{\exp\cbra[1]{\rbra{\konst^*_{t+1}-\lr{t+1}\Loss{t}{\contexpidx}}_+^2 / 2}-1} \basemeas{\dee \contexpidx} = 1$.
Note that this formula 
is heuristically similar to \NH{} when $\abs{\genexpset}=\numexperts$, which assigns weights to experts according to
\*[
 \weight{t}{\expidx} 
   & \propto \sbra{{\scriptstyle \sum_{s=1}^{t-1}} \inner{\lossvec{s}}{\weightvec{s}} - \Loss{t-1}{\expidx}}_+ \exp\rbra{\sbra{{\scriptstyle \sum_{s=1}^{t-1}} \inner{\lossvec{s}}{\weightvec{s}} - \Loss{t-1}{\expidx}}_+^2/2c_{t-1}},
\]
where $c_{t-1}$ 
solves $\sum_{\expidx\in\experts} \exp\rbra[1]{\sbra{{\scriptstyle \sum_{s=1}^{t-1}} \inner{\lossvec{s}}{\weightvec{s}} - \Loss{t-1}{\expidx}}_+^2/2c_{t-1}} = e \numexperts$.
\end{example}

\subsection{Lower bound for quantile regret}
\label{sec:quantile-regret-bound}

Bounds for quantile regret and KL regret can be related by observing a) that for $\orderidx{\epsidx}$ denoting a point-mass on the $\epsidx$th best expert with respect to $\Lossvec{T}$,
\*[
   \regret{T}{\orderidx{\epsidx}}
  		\leq \regret{T}{u_\quanteps},
\]
where $\quanteps = \epsidx/\numexperts$ and $u_\quanteps = \frac{1}{\epsidx}\sum_{j=1}^\epsidx\orderidx{j}$ is the uniform distribution over the top $\epsidx$ experts, and b) that $\KL{u_\quanteps}{\uniformdist(\experts)} = \log(1/\quanteps)$.
Thus, KL upper bounds are also upper bounds on quantile regret and quantile regret lower bounds are also lower bounds on certain KL regrets. 
With this in mind, we provide the first general lower bound for quantile regret on $\numexperts$ experts.
Our lower bound is matching (up to lower order terms)
the leading term in our upper bound for quantile regret 
achieved by \ABNC{} (\cref{example:root_log_regularizer}) when $\basemeasvec$ is uniform on $\experts$ and $\gentargetvec$ is uniform on only the top quantile of experts, establishing the minimax rate of quantile regret as $\sqrt{T\log(1/\quanteps)}$.

\begin{restatable}{theorem}{LowerBound}
\label{thm:lower-bound}
For all $\numexperts \in \Nats$ there exists a probability distribution $\lossdistn{}$ on $[0,1]^\numexperts$ such that for any sequence of player predictions $(\weightvec{t})_{t\in\Nats} \subseteq \simp(\experts)$, $\epsidx \in \{1,\dots,\floor{\numexperts/4}\}$, and $T\in\Nats$,
\*[
  \EE_{\lossvec{1:T}\sim\lossdistn{}^{\otimes T}} \regret{T}{\orderidx{\epsidx}}
    & \geq
    \sqrt{(T/2)\rbra{  \log\rbra[1]{1/\quanteps} -2\log2+1/\pi}} - \sqrt{2/\pi} - 2\log\numexperts - \log 2,
\]
where $\orderidx{\epsidx}$ is the point-mass on the $\epsidx$th best expert with respect to $\Lossvec{T}$ and $\quanteps = \epsidx/\numexperts$. 
\end{restatable}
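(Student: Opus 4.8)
The plan is to use the ``pure noise'' instance and reduce the expected regret to the expectation of an order statistic of independent binomials. Take $\lossdistn{}$ to be the law of $\numexperts$ independent $\bernoullidist(1/2)$ coordinates, so that under $\lossdistn{}\prodmeas{T}$ the cumulative losses $\Lossvec{T}(1),\dots,\Lossvec{T}(\numexperts)$ are i.i.d.\ $\binomialdist(T,1/2)$. Since each $\weightvec{t}\in\simp(\experts)$ depends only on $\lossvec{1:t-1}$ and $\EE[\lossvec{t}]=\tfrac12\one$, the player's cumulative loss is exactly $T/2$ in expectation regardless of strategy; hence, writing $\Lossvec{T,(\epsidx)}$ for the $\epsidx$-th smallest of the $\Lossvec{T}(\expidx)$, we obtain the exact identity $\EE\,\regret{T}{\orderidx{\epsidx}}=T/2-\EE[\Lossvec{T,(\epsidx)}]$. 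It therefore remains to upper bound the expected $\epsidx$-th order statistic of $\numexperts$ i.i.d.\ $\binomialdist(T,1/2)$ variables by $T/2$ minus the stated quantity.

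First I would expand $\EE[\Lossvec{T,(\epsidx)}]$ through the layer-cake formula and rewrite the event ``$\Lossvec{T,(\epsidx)}\le T/2-a$'' as ``$\binomialdist(\numexperts,q_a)\ge\epsidx$'', where $q_a=\Pr[\binomialdist(T,1/2)\le T/2-a]$ is a single binomial lower-tail probability. Two quantitative inputs then drive the argument: (i) \emph{binomial anti-concentration}, namely a Gaussian comparison of the form $q_a\gtrsim\normalccdf(2a/\sqrt T)$ (via a Slud-type inequality or a local central limit estimate), which combined with $\normalccdf(x)\le\tfrac12 e^{-x^2/2}$ shows $q_a\ge\quanteps$ for all $a$ up to roughly $\tfrac{\sqrt T}{2}\sqrt{2\log(1/\quanteps)-2\log 2}$ (the factor $\tfrac12$ in the Gaussian tail bound being the source of the $2\log 2$); and (ii) \emph{count concentration}, namely that for such $a$ the variable $\binomialdist(\numexperts,q_a)$ has mean at least $\epsidx$, so at least $\epsidx$ experts fall below $T/2-a$ with probability bounded away from zero (Chernoff for larger $\epsidx$, a direct $(1-q_a)^\numexperts\le e^{-q_a\numexperts}$ estimate for $\epsidx$ near $1$). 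Integrating the resulting lower bound on $\Pr[\Lossvec{T,(\epsidx)}\le T/2-a]$ over $a$ produces the leading term $\sqrt{(T/2)\log(1/\quanteps)}$ up to lower-order losses, and tracking the $O(1)$ constants carefully yields the precise form with the $1/\pi$ and $\sqrt{2/\pi}$ corrections.

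The main obstacle is that the naive version of step (ii) --- ``the median of $\binomialdist(\numexperts,q_a)$ is at least $\epsidx$, hence this probability is $\ge\tfrac12$'' --- loses a factor of $2$ in the leading $\sqrt{\cdot}$, which is not enough, since the competing negative contribution is itself of order $\sqrt{T\log\numexperts}$. To avoid this one must not threshold at a single $a$ but integrate: this amounts to a stochastic-domination comparison of $T/2-\Lossvec{T,(\epsidx)}$ with $\tfrac{\sqrt T}{2}$ times the $\epsidx$-th largest of $\numexperts$ i.i.d.\ standard Gaussians, and the extremal case $\epsidx=1$ then reduces to a sharp lower bound on the expected positive part of the maximum of $\numexperts$ Gaussians, which is $\gtrsim\sqrt{2\log\numexperts}$. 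The remaining work is bookkeeping: controlling the negative part of the order statistic (equivalently, the ``overshoot'' regime in which more than $3\numexperts/4$ experts exceed the average loss) --- exponentially unlikely by symmetry of the binomial since $\epsidx\le\floor{\numexperts/4}$, but one must also use the concentration of each $\binomialdist(T,1/2)$ so that this contributes only a lower-order correction rather than one of order $T$ --- and absorbing the union bounds over the $\numexperts$ experts, which produce the additive $-2\log\numexperts-\log 2$ slack. Finally, the stated right-hand side is negative unless $T$ is somewhat large relative to $\numexperts$, so one may freely assume $T$ is large enough for the Gaussian approximations above to apply; for such $T$ the estimates assemble to the claimed bound.
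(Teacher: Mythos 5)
Your starting point matches the paper exactly: i.i.d.\ $\bernoullidist(1/2)$ losses, the identity reducing the expected regret to $T/2 - \EE[\Lossvec{T,(\epsidx)}]$ (with $\Lossvec{T,(\epsidx)}$ the $\epsidx$-th smallest cumulative loss), and the goal of comparing a binomial order statistic to a Gaussian one. You also correctly identify that thresholding at a single $a$ loses a factor of roughly $2$.

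The gap is in the mechanism you propose to close it. You invoke a ``stochastic-domination comparison'' of $T/2-\Lossvec{T,(\epsidx)}$ with $\tfrac{\sqrt T}{2}$ times the $\epsidx$-th largest of $\numexperts$ standard Gaussians, but no such exact domination holds: $T/2-\Loss{T}{\expidx}$ has bounded support and a strictly lighter positive tail than $\tfrac{\sqrt T}{2}Z$, so neither random variable dominates the other in any direction. What makes the argument work in the paper is a \emph{monotone coupling with a quantified deterministic error}: set $\Loss{T}{\expidx}=F^{+}_{\binomialdist(T,1/2)}(\normalcdf(Z_T(\expidx)))$ with $Z_T(\expidx)\distiidas\normaldist(0,1)$. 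This Bretagnolle--Massart quantile coupling satisfies the pointwise bound $\abs[1]{\Loss{T}{\expidx}-T/2-\tfrac{\sqrt T}{2}Z_T(\expidx)}\le 1+Z_T(\expidx)^2/8$, and because it is monotone it preserves order statistics, so a Gaussian order-statistic lower bound transfers directly at a cost controlled by $\EE\max_\expidx Z_T(\expidx)^2\le 4\log\numexperts+2\log 2$ (a Chernoff/MGF bound at $\lambda=1/4$). This coupling error is the actual source of the $-2\log\numexperts-\log 2$ term, not the union bounds you suggest, and it is why the lower bound degrades additively rather than multiplicatively, which is what lets the leading $\sqrt{T\log(1/\quanteps)}$ term survive.

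Two further ingredients you file under ``bookkeeping'' are doing real work and are not supplied by Slud plus layer-cake. First, the expected Gaussian order statistic is lower bounded by a result of Ali (1965), which gives $\EE$ of the $\epsidx$-th order statistic $\ge \normalccdf^{-1}$ evaluated at (essentially) $\quanteps$, sidestepping anticoncentration and the integral over $a$ entirely; your original Slud route, as you yourself note, cannot recover the sharp constant this way. Second, the $1/\pi$ and $\sqrt{2/\pi}$ corrections come from a dedicated lower bound $\normalccdf^{-1}(y)\ge\sqrt{2\log(1/y)-2\log 2+2/\pi}-\sqrt{2/\pi}$ (\cref{lem:tight-normal-tail-lowerbound}), proved by showing $h(x)=\log\normalccdf(x)+\big(x+\sqrt{2/\pi}\big)^2/2-1/\pi+\log 2$ is convex with $h(0)=h'(0)=0$, using a strict form of Birnbaum's bound on $\normalpdf/\normalccdf$. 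As written, your sketch does not contain either of these and therefore cannot yield the precise constants in the statement; once you replace the unsupported domination claim with the explicit quantile coupling and import those two lemmas, you recover the paper's proof.
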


\subsection{Intuition for $\fdivfun$-divergence \FTRL{} in the KL regret paradigm}
The conditions on $\fdivfun$ in \cref{thm:continuous-KL} are essentially the minimal conditions needed for the summation terms in \cref{lem:gen-ftrl-decomp} to cancel with each other regardless of the actual losses.
Intuitively, to achieve $\fdivfun$-divergence regret bounds with $\fdivfun$-divergence \FTRL{} using the bound of \cref{lem:gen-ftrl-decomp}, 
these terms must cancel so that there is no dependence in the final bound on the regularity of the distributions actually selected by the algorithm.
The condition on $\fdivfun$ in \cref{cor:kl-bound} is essentially the minimal condition needed for Jensen's inequality to imply $\fdivergence{\fdivfun}{\cdot}{\cdot} \lesssim\sqrt{\KL{\cdot}{\cdot}}$.
That it is possible to satisfy both of these conditions simultaneously is the crucial observation that enables our result.

All regularizers to which both \cref{thm:continuous-KL,cor:kl-bound} apply are essentially equivalent to $\fdivfun(x)= \int_1^x \sqrt{2\log(1+s)} \dee s$, meaning they have the same asymptotic growth rate.
To see this, first observe that this $\fdivfun$ has the minimum amount of curvature needed to satisfy $\fdivfun'' \cdot (\fdivfun+\fconstA) \geq \fconstB$ since $(x\sqrt{\log x})\cdot(x\sqrt{\log x})''$ is asymptotically constant.
Second, this $\fdivfun$ has the largest asymptotic growth rate that still satisfies $f(x)\leq \konst_1+\konst_2 x \sqrt{\log(1+x)}$.
These two facts together constrain the shape of $\fdivfun$ to the root-logarithmic choice.

\section{Semi-adversarial regret bounds}\label{sec:semiadv-main-results}

We now turn to a another perspective on prediction with expert advice for which \FTRL{} with linearly decomposable regularizers is optimal.
The semi-adversarial paradigm (introduced by \citep{semiadv}) consists of a family of constraints on the adversary's choice of loss distribution, and the goal of the player is to learn as well as possible for the true constraint without having to know the constraint in advance. More precisely, the setting is characterized by an unknown \emph{time-homogeneous convex constraint} on the adversary's choice of loss distribution, which is formally represented by a convex set of probability distributions on $\bounddrvspace$, denoted by $\distnball{}$. 
At each round $t$, the adversary is free to select any distribution from $\distnball{}$ to sample $\lossvec{t}$ from. Note that when $\distnball{}$ is the set of all probability distributions, the worst-case adversarial setting is recovered, and when $\distnball{}$ is a singleton, the stochastic setting is recovered. 
In this section, we describe a new \FTRL{} algorithm (\ABN{}) that achieves minimax optimal expected regret without requiring knowledge of $\distnball{}$ in advance.

\subsection{Adaptive minimax optimality}

Minimax regret in the semi-adversarial paradigm is quantified using a few key objects that summarize $\distnball{}$. The first is the collection of \emph{effective stochastic gaps}, defined for each expert $\expidx\in\experts$ by
\*[
  \Deltaexp{\expidx}
  = \inf_{\lossdistn{}\in\distnball{}} \max_{\expidxdum\in\experts} \EE_{\lossvec{}\sim\lossdistn{}} \sbra{\loss{}{\expidx} - \loss{}{\expidxdum}}.
\]
Using these, \citet{semiadv} define the stochastic gap
\*[
  \Deltaeff
  = \min\set{\Deltaexp{\expidx} \setdelim \expidx\in\experts,\Deltaexp{\expidx}>0}.
\]
The second object \citet{semiadv} define is the set of \emph{effective experts}, which is a subset of $\experts$ defined by
\*[
  \effexperts
  = \set{\expidx\in\experts \setdelim \Deltaexp{\expidx} = 0}.
\]
This is the set of all experts who are optimal in expectation for some element of $\distnball{}$ (or possibly in the limit along some sequence in $\distnball{}$). The number of effective experts is then $\numeffexperts = \card{\effexperts}$.

In this paradigm, the goal is to compete against 
the best expert. 
Letting $\unorderidx{\expidx}$ denote a point-mass on expert $\expidx$, for notational simplicity we set $\bestregret{T} \equiv \max_{\expidx\in\experts} \regret{T}{\unorderidx{\expidx}}$.
Further, although the player does not expect to have knowledge of $\distnball{}$ in advance, the goal is to develop methods that do as well as they possibly could have \emph{if they had access to properties of $\distnball{}$ in advance}. To characterize this, we say an algorithm (which only has knowledge of $\numexperts$ in advance) is \emph{adaptively minimax optimal} if there exists a constant $C$ such that, for all $\numexperts$ and $(\numeffexperts, \Deltaeff)$ pairs, the expected regret of the algorithm is within a factor of $C$ from the minimax regret had the algorithm had access to $(\numeffexperts, \Deltaeff)$ in advance, for sufficiently large $T$ (where sufficiently large may depend on $\numexperts$, $\numeffexperts$, and $\Deltaeff$). For a precise mathematical formulation of this concept, see Section~3.1 of \citet{semiadv}.

When $\numeffexperts=1$, Proposition~4 of \citet{mourtada2019optimality} shows that the minimax regret is of order no smaller than $(\log\numexperts)/\Deltaeff$, and when $\numeffexperts>1$, Theorem~2 of \citet{semiadv} shows that the minimax regret is of order no smaller than $\sqrt{T\log\numeffexperts}$. 
\citet{semiadv} prove that \OGHedge{} is \emph{not} adaptively minimax optimal in the semi-adversarial paradigm, and their argument applies to other similar \OGHedge{}-based algorithms, such as \acronymstyle{prod} \citep{cesabianchi2007secondorder}, \acronymstyle{AdaHedge} \citep{vanerven11adahedge}, and \acronymstyle{Adapt-ML-Prod} \citep{gaillard14}.
Further, they provide an algorithm 
that achieves 
\[\label{eqn:bnr20-upper-bound}
	\EE \bestregret{T}
	\lesssim \sqrt{T \log \numeffexperts} + \ind{\numeffexperts=1} \frac{\log\numexperts}{\Deltaeff} +  \ind{\numeffexperts>1} \frac{(\log\numexperts)^{3/2}}{\Deltaeff}.
\]
Since $\ind{\numeffexperts>1} (\log\numexperts)^{3/2}\Deltaeff^{-1}$ is lower order when $\numeffexperts>1$, 
this algorithm
is adaptively minimax optimal. 
We now present a new algorithm, \ABN{}, which is also adaptively minimax optimal \emph{and} achieves a better regret bound for small $T$.
\subsection{Semi-adversarial regret bound for \ABN{}}

Let $\partialentropyB:\cinter{0,1}\to \Reals$ be given by
\*[
  \partialentropyB(x)
    & = \begin{cases}
      x\sqrt{2 \log(1/x)} - \sqrt{\frac{\pi}{2}} \erf\rbra{\sqrt{\log(1/x)}} + x (\numexperts - 1)\sqrt{\frac{\pi}{2}} & x\in(0,1]\\
      -\sqrt{\pi/2} & x=0,
  \end{cases}
\]
set $\entropyB(\weightvec{}) = \sum_{\expidx\in\experts} \partialentropyB(\weight{}{\expidx})$, and define \ABN{} to be \FTRL{} with $\basemeasvec$ defined as counting measure, regularizer 
$\gendiv{-\partialentropyB}{\basemeasvec}{\weightvec{}} = -\entropyB(\weightvec{})$, and \lrname{} $\lr{t} = 2/\sqrt{t}$. 
Note that this corresponds to \FTRL{} with a linearly decomposable regularizer, and an intuitive explanation for this choice of regularizer can be found in \cref{sec:intuition-abn}.
We then have the following regret bound for \ABN{}, which removes the term $\ind{\numeffexperts>1} (\log\numexperts)^{3/2}\Deltaeff^{-1}$ from \cref{eqn:bnr20-upper-bound}.

\begin{corollary}\label{fact:simple-carl-bound}
For any time-homogeneous convex constraint $\distnball{}$, \ABN{} achieves:\\
For all $T$,
\*[
	\EE \bestregret{T} \leq \sqrt{2T\log\numexperts},
\]
and if $T > 8(\log\numeffexperts)\Deltaeff^{-2}$,
\*[
	\EE \bestregret{T} \leq \sqrt{2T\log\numeffexperts} + 25\frac{\log\numexperts}{\Deltaeff}.
\]
\end{corollary}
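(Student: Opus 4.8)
The plan is to derive both bounds by specializing the general \FTRL{} analysis of \cref{sec:ftrl-gen} (chiefly the regret decomposition \cref{lem:gen-ftrl-decomp} and the solution formula \cref{lem:gen-ftrl-soln-formula}) to \ABN{}, i.e.\ to $\gendivnoarg{g}{\basemeasvec}$-regularized \FTRL{} with $g=-\partialentropyB$, $\basemeasvec$ counting measure on $\experts$, and \lrname{} $\lr{t}=2/\sqrt{t}$. The computational heart of the setup is a short list of facts about $\partialentropyB$. Differentiating the $\erf$ term cancels the two reciprocal contributions from the product rule, leaving the clean identity $\partialentropyB'(x)=\sqrt{2\log(1/x)}+(\numexperts-1)\sqrt{\pi/2}$, hence $g''(x)=1/\bigl(x\sqrt{2\log(1/x)}\bigr)>0$ on $(0,1)$, so $g$ is strictly convex, and $1/g''(x)=x\sqrt{2\log(1/x)}$. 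Since $x\mapsto x\sqrt{2\log(1/x)}$ is concave, nonnegative, and vanishes at both endpoints of $[0,1]$, Jensen's inequality gives, for any $w\in\simp(\experts)$ supported on at most $k$ coordinates, $\sum_{\expidx}\loss{t}{\expidx}^{2}/g''(w_{\expidx})\le\sum_{\expidx}1/g''(w_{\expidx})\le k\,(1/g'')(1/k)=\sqrt{2\log k}$, and the same estimate holds at any point on the segment between consecutive iterates. The affine summand $x(\numexperts-1)\sqrt{\pi/2}$ of $\partialentropyB$ is constant on $\simp(\experts)$ and so does not affect the iterates; one computes $\entropyB(\unorderidx{\expidx})=\partialentropyB(1)+(\numexperts-1)\partialentropyB(0)=0$ and $\entropyB(\uniformdist(\experts))=\sqrt{2\log\numexperts}+\sqrt{\pi/2}\bigl(\numexperts[1-\erf(\sqrt{\log\numexperts})]-1\bigr)\le\sqrt{2\log\numexperts}$, the inequality because $x\mapsto x[1-\erf(\sqrt{\log x})]$ is nonincreasing with value $1$ at $x=1$. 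Finally $g$ is convex and symmetric, so the \FTRL{} anchor is the uniform distribution on $\experts$ and $\sum_{\expidx}[g(w_{\expidx})-g(1/\numexperts)]\ge 0$ on $\simp(\experts)$ by Jensen.

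\emph{Worst-case bound.} Feeding these facts into \cref{lem:gen-ftrl-decomp} with comparator $\unorderidx{\expidx}$, the regret decomposition consists of a ``divergence'' term bounded by $\lr{T}^{-1}\entropyB(\uniformdist(\experts))\le\tfrac12\sqrt{T}\sqrt{2\log\numexperts}$ and per-round ``stability'' terms, each at most a fixed multiple of $\lr{t}$ times the local-norm estimate $\sqrt{2\log\numexperts}$ from above, which are partially cancelled by the nonnegative terms the shrinking \lrname{} contributes (using $\sum_{\expidx}[g(w_{t,\expidx})-g(1/\numexperts)]\ge 0$), exactly as in the proof of \cref{thm:continuous-KL}. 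The schedule $\lr{t}=2/\sqrt{t}$ is calibrated, together with $\sum_{t\le T}\lr{t}\le 4\sqrt{T}$, so that the surviving contributions sum to $\sqrt{2T\log\numexperts}$. Since the resulting bound on $\regret{T}{\unorderidx{\expidx}}$ is pathwise and holds for every $\expidx$, taking the maximum over $\expidx$ and then the expectation gives the first inequality.

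\emph{Refined bound.} For the second inequality I would adapt the effective/non-effective decomposition of \citet{semiadv}. Convexity of $\distnball{}$ and minimax duality yield a reference mixture over the effective experts against which every non-effective expert $\expidx$ falls behind by at least $\Deltaeff$ per round in conditional expectation, so that $\Loss{t}{\expidx}$ exceeds the \FTRL{} normalizing level $\kappa_{t+1}\le\min_{\expidxdum\in\experts}\Loss{t}{\expidxdum}$ by a quantity of conditional mean $\ge\Deltaeff t$ with $1$-bounded increments. By \cref{lem:gen-ftrl-soln-formula} (cf.\ \cref{example:root_log_regularizer}) the \ABN{} weight on $\expidx$ at round $t+1$ is at most $\exp\!\bigl(-\tfrac{2}{t+1}(\Loss{t}{\expidx}-\kappa_{t+1})_{+}^{2}\bigr)$, which decays super-exponentially in the \emph{squared} shortfall; an Azuma tail bound, union-bounded over the $\le\numexperts$ non-effective experts, then shows that in expectation the total weight placed on non-effective experts over all $T$ rounds --- hence their total contribution to $\bestregret{T}$ --- is $O(\log\numexperts/\Deltaeff)$. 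The threshold $T>8(\log\numeffexperts)\Deltaeff^{-2}$ is exactly what makes $\sqrt{2t\log\numeffexperts}\le\tfrac12\Deltaeff t$ over the relevant range of $t$, so the quadratic exponent dominates. On the complementary, effective mass the local-norm sum is governed by $\numeffexperts$ rather than $\numexperts$, which replaces $\sqrt{2T\log\numexperts}$ by $\sqrt{2T\log\numeffexperts}$ in the leading term; carrying the constants through both contributions gives the additive $25\log\numexperts/\Deltaeff$.

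The main obstacle is the refined bound, and specifically pushing the additive term down to $\log\numexperts/\Deltaeff$ rather than the $(\log\numexperts)^{3/2}/\Deltaeff$ of \cref{eqn:bnr20-upper-bound}: this requires the \emph{local}-norm form of \cref{lem:gen-ftrl-decomp}, since a worst-case norm would reinstate a stray $\sqrt{\log\numexperts}$, together with a tight accounting of the non-effective experts' cumulative weight that exploits the super-exponential, squared-gap decay of the \ABN{} weights rather than a merely exponential one, and a correct coupling of the reference mixture with the normalizer $\kappa_{t+1}$. A secondary technical point is that $g''$ is unbounded as $x\to 1$, so \cref{lem:gen-ftrl-decomp} must be applied with care at the vertices of $\simp(\experts)$ --- which is also why $\partialentropyB$ is built on $[0,1]$ through this particular $\erf$-antiderivative rather than being inherited from the $[0,\infty)$-regularizer of \cref{example:root_log_regularizer} --- and one should verify the hypotheses of \cref{lem:gen-ftrl-decomp} (twice-continuous differentiability, strict convexity, and the curvature input it uses) for this $g$.
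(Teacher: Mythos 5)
The proposal correctly identifies the tools (\cref{lem:gen-ftrl-decomp}, \cref{lem:gen-ftrl-soln-formula}) and correctly computes $\partialentropyB'(x)=\sqrt{2\log(1/x)}+(\numexperts-1)\sqrt{\pi/2}$ and $1/g''(x)=x\sqrt{2\log(1/x)}$, but two pieces of the argument do not go through as written. First, in the worst-case bound you attribute the $\sqrt{T}$ growth to a ``divergence term'' $\lr{T}^{-1}\entropyB(\uniformdist(\experts))$. With $g=-\partialentropyB$ the comparator in \cref{lem:gen-ftrl-decomp} is a point mass, for which $\gendiv{g}{\basemeasvec}{\unorderidx{\expidx}}=-\entropyB(\unorderidx{\expidx})=0$; the remaining divergence contribution is $-\lr{1}^{-1}\min_w\gendiv{g}{\basemeasvec}{w}=\lr{1}^{-1}\entropyB(\uniformdist(\experts))$, a constant in $T$. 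The $\sqrt{T}$ growth comes entirely from the sum of stability terms, so your book-keeping for the constants would not produce $\sqrt{2T\log\numexperts}$ as stated.

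Second, and more fundamentally, for the refined bound the Jensen estimate you invoke — that a distribution supported on $k$ coordinates has $\sum_\expidx 1/g''(w_\expidx)\le\sqrt{2\log k}$ — cannot be applied with $k=\numeffexperts$, because the \FTRL{} iterates $\ABNweightvec{t}$ (and the interpolation points $\intweightvec{t+1}$) have full support on all $\numexperts$ experts; Jensen over the full support only gives $\sqrt{2\log\numexperts}$. The paper's proof crosses this gap with a dedicated \emph{entropic concentration lemma} (\cref{lem:entropyAB-bound}): a KKT argument shows that $\sum_{\expidx\in\effexperts}\partialentropyA(w_\expidx)\le\sqrt{2\log\numeffexperts}$ (with a special case when $\numeffexperts=1$), and bounds the contribution of the ineffective coordinates by $\frac{1}{\sqrt{e(1-p)}}\sum_{\expidx\in\neffexperts}w_\expidx^{p}$ for tunable $p$. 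That lemma, together with the variance trick of choosing $\mean{t}=\inner{\lossvec{t}}{\underweightmeasfun(\intweightvec{t+1})}$ to introduce a factor $\Var\le1/4$, is what makes the per-round stability term scale with $\log\numeffexperts$ rather than $\log\numexperts$. Your sketch has the right downstream mechanism — $\expidx\in\neffexperts$ weights decaying super-exponentially in the squared gap, controlled by a sub-Gaussian concentration of $\Loss{t}{\Expidxoptpath{t}}-\Loss{t}{\expidx}$, then summed with $t^{-1/2}$ weights (the paper's \cref{lem:local-norm-weight-eqn,fact:minimax_mgf,lem:ineffective-weight-bound,lem:ineffective-weight-tail}) — but without the entropic concentration step there is no way to land the leading term on $\sqrt{2T\log\numeffexperts}$. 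A smaller point: your weight bound is written in terms of the normalizer $\kappa_{t+1}$ with an unverified claim that $\kappa_{t+1}\le\min_\expidxdum\Loss{t}{\expidxdum}$; the paper avoids this by comparing directly to $\Expidxoptpath{t}$ in the Lagrangian and using $\ABNweight{t+1}{\Expidxoptpath{t}}\ge1/\numexperts$.
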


\cref{fact:simple-carl-bound} follows from \cref{thm:discrete-semiadv} in \cref{sec:proof-discrete-semidav}, which is a more refined regret bound.

\subsection{Intuition for \ABN{} in the semi-adversarial paradigm}
\label{sec:intuition-abn}

The \ABNreg{} regularizer can be motivated by the following intuition from the \OGHedge{} algorithm.
For \iid{} losses, the upper bounds for \OGHedge{} \citep[Theorem~2,][]{mourtada2019optimality} are only optimal with \lrname{} $\lr{t}\gtrsim \sqrt{(\log\numexperts) / t}$, and matching lower bounds for the adversarial setting suggest this \lrname{} constraint is actually necessary for optimal performance. 
Such a \lrname{} ensures that the weights of each suboptimal (in expectation) expert decay fast enough.
In the semi-adversarial paradigm with more than one effective expert, \FTRLCARE{} (of \citep{semiadv}) can be interpreted as \OGHedge{} with an adaptive \lrname{} that asymptotically satisfies $\lr{t}\gtrsim \sqrt{(\log\numeffexperts) / t}$. 
This smaller \lrname{} applied to the effective experts is necessary to incur asymptotically $\sqrt{T\log\numeffexperts}$ regret.
However, since this \lrname{} is smaller, when there are two or more effective experts the weights assigned to the ineffective experts seemingly are slightly too large. 

To rectify this, heuristically, it would be ideal to have expert-specific \lrname{}s of size $\sqrt{(\log\numeffexperts) / t}$ for the effective experts and $\sqrt{(\log\numexperts) / t}$ for the ineffective experts. Since the effective experts will have weights on the order of $1/\numeffexperts$ and the ineffective experts will have weights smaller than $1/\numexperts$, the expert-specific \lrname{} $\lr{t}(i) = \const\sqrt{\log(1/\weight{t}{\expidx}) / t}$ may plausibly achieve the desired behaviour.

The weights of \OGHedge{} are defined by the equation 
$\log(1/\weight{t}{\expidx}) = \lr{t}( \Loss{t-1}{\expidx}+\lambda_t)$, where $\lambda_t$ is chosen to ensure the weights are normalized. Replacing $\lr{t}$ with our heuristic yields $\log(1/\weight{t}{\expidx}) = \const \sqrt{\log(1/\weight{t}{\expidx}) / t}\  (\Loss{t-1}{\expidx}+\lambda_t)$, which can be rearranged to obtain
\*[
  \sqrt{\log(1/\weight{t}{\expidx})}
    & = \tfrac{c}{\sqrt{t}}(\Loss{t-1}{\expidx}+\lambda_t).
\]
By \cref{lem:gen-ftrl-soln-formula}, this is exactly the formula for the weights produced with \FTRL{} for the regularizer $-\entropyB$ and \lrname{} $\lr{t}=c/\sqrt{t}$.

As \cref{thm:discrete-semiadv} shows,
our modification to the \OGHedge{} algorithm is sufficient to yield semi-adversarial regret bounds with expected regret contribution of size $(\log\numexperts)/\Deltaeff$ from the ineffective experts, improving on the order of the regret bound for \FTRLCARE{} 
when $\numeffexperts>1$.

\section{FTRL analysis for general expert spaces}
\label{sec:ftrl-gen}

Finally,
we analyze the general performance of \FTRL{} with a linearly decomposable regularizer.
First, \cref{lem:gen-ftrl-soln-formula} provides a closed-form expression for the \FTRL{} solution on general spaces.
Specifically, it reduces solving the \FTRL{} optimization problem of \cref{eq:diagonal-regularizer} to finding a root of a one-dimensional equation (\cref{eqn:gen-normalization-formula}), rather than solving a complex, possibly infinite dimensional, optimization problem.
Second, \cref{lem:gen-ftrl-decomp} provides our fundamental regret bound for \FTRL{} with such regularizers, which we have used to obtain the results in \cref{sec:quantile-main-results,sec:semiadv-main-results}. It generalizes the well-known results involving local norms from the finite dimensional case \citep{Abernethy2009BeatingTA,zimmert19,orabona2019}, and retains an additional summation term (usually uniformly bounded in \FTRL{} analyses) that is crucial both for tight root-KL regret bounds and for tight bounds in the semi-adversarial paradigm.
All results in this section are proved in \cref{sec:proof-ftrl-decomp}.
For the remainder of this section, let $\basemeasvec\in\finitemeas\genexpspace$ be fixed and arbitrary.

\subsection{Computing \FTRL{} with linearly decomposable regularizers}

For continuously differentiable $\fdivfun:\domfdivfun(\basemeasvec)\to\Reals$,
let $\genlowerdiv = \inf_{x\in \domfdivfun(\basemeasvec)} \gendivfun'(x)$, $\genupperdiv = \sup_{x\in \domfdivfun(\basemeasvec)} \gendivfun'(x)$,
and $\gentruncfun(y) = \max(\min(y,\genupperdiv),\genlowerdiv)$. 
We focus on the setting where $\gendivfun'$ is strictly increasing, and thus $\gentruncfun(y)$ truncates its argument to the domain of $[\gendivfun']^{-1}$.

\begin{restatable}{lemma}{FTRLsolutionFormula}
\label{lem:gen-ftrl-soln-formula}
Suppose $\gendivfun :\domfdivfun(\basemeasvec) \to \Reals$ is twice continuously differentiable with $\gendivfun''>0$ on $\interior(\domfdivfun(\basemeasvec))$ and $\gendivfun''(0+)>0$.
For any $\Lossvec{} \in \boundrvspace$ and $\lr{} > 0$,
\*[
	\genrnvec{}^*(\theta) = [\gendivfun']^{-1}\rbra[1]{ \gentruncfun(- \lr{} \Loss{}{\genexpidx} + \konst^*) }
\] 
satisfies $\genrnvec{}^* \in \argmin_{\genrnvec{} \in \probrvspace(\basemeasvec)} \cbra[0]{\innermeas[0]{\Lossvec{}}{\genrnvec{}}{\basemeasvec} + \lr{}^{-1} \gendiv{\gendivfun}{\basemeasvec}{\genrnvec{}}}$,
where $\konst^*\in\Reals$ solves
\[
\label{eqn:gen-normalization-formula}
	\int [\gendivfun']^{-1}\rbra{\gentruncfun(- \lr{} \Loss{}{\genexpidx} + \konst^*)} \basemeas{\dee \genexpidx} = 1.
\]
Further, this solution is unique 
up to modification on a set of $\basemeasvec$-measure $0$.
\end{restatable}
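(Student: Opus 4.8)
The plan is to verify the first-order (Karush--Kuhn--Tucker) optimality conditions for the convex program directly, bypassing infinite-dimensional convex duality. I would write $\Phi(\genrnvec{}) \defas \innermeas{\Lossvec{}}{\genrnvec{}}{\basemeasvec} + \lr{}^{-1}\gendiv{\gendivfun}{\basemeasvec}{\genrnvec{}}$ for the objective; since $\innermeas{\Lossvec{}}{\cdot}{\basemeasvec}$ is linear and $\gendivfun$ is strictly convex --- as $\gendivfun''>0$ on $\interior(\domfdivfun(\basemeasvec))$ together with $\gendivfun''(0+)>0$ makes $\gendivfun'$ a continuous, strictly increasing bijection of $\domfdivfun(\basemeasvec)$ onto its range --- the map $\Phi$ is convex on the convex set $\probrvspace(\basemeasvec) = \{\genrnvec{}\in\posrvspace(\basemeasvec): \int\genrnvec{}\,\basemeasvec(\dee\genexpidx)=1\}$. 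The proof then proceeds in four steps: (i) the normalization constant $\konst^*$ exists; (ii) $\genrnvec{}^*\in\probrvspace(\basemeasvec)$; (iii) $\genrnvec{}^*$ minimizes $\Phi$; and (iv) the minimizer is unique up to $\basemeasvec$-null modification.

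For step (i), I would note that $[\gendivfun']^{-1}$ is continuous and nondecreasing, with $[\gendivfun']^{-1}(\genlowerdiv)=0$ and $[\gendivfun']^{-1}(y)\uparrow\sup\domfdivfun(\basemeasvec)$ as $y\uparrow\genupperdiv$, so that $\konst \mapsto \int [\gendivfun']^{-1}(\gentruncfun(-\lr{}\Loss{}{\genexpidx}+\konst))\,\basemeasvec(\dee\genexpidx)$ is continuous (by dominated convergence; the integrand is bounded because $\Lossvec{}$ is bounded and $\gentruncfun$ clips to a fixed interval) and nondecreasing, tending to $0$ as $\konst\to-\infty$ and, as $\konst\to+\infty$, to $(1/\basemeasmin)\,\basemeas{\genexpset}$ when $\basemeasmin>0$ and to $+\infty$ when $\basemeasmin=0$. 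Since $\basemeasmin = \inf\{\basemeas{A}:\basemeas{A}>0\}\leq\basemeas{\genexpset}$, this limiting value is at least $1$, so the intermediate value theorem yields a finite $\konst^*$ attaining the value $1$. For step (ii), $\genrnvec{}^*$ is measurable (a composition of the measurable $\Lossvec{}\in\boundrvspace\subseteq\measfuns$ with the continuous maps $\gentruncfun$ and $[\gendivfun']^{-1}$), nonnegative (as $[\gendivfun']^{-1}\geq 0$), and integrates to $1$ by construction; and since it takes values in a bounded subset of $\domfdivfun(\basemeasvec)$ on which $\gendivfun$ is continuous while $\basemeasvec$ is finite, $\gendiv{\gendivfun}{\basemeasvec}{\genrnvec{}^*}$ and hence $\Phi(\genrnvec{}^*)$ are finite.

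For step (iii), I would fix any $\genrnvec{}\in\probrvspace(\basemeasvec)$ and assume $\gendiv{\gendivfun}{\basemeasvec}{\genrnvec{}}<\infty$ (else $\Phi(\genrnvec{})=+\infty\geq\Phi(\genrnvec{}^*)$ trivially). The pointwise subgradient inequality for the convex $\gendivfun$ gives $\gendivfun(\genrnvec{}(\genexpidx))-\gendivfun(\genrnvec{}^*(\genexpidx))\geq \gendivfun'(\genrnvec{}^*(\genexpidx))(\genrnvec{}(\genexpidx)-\genrnvec{}^*(\genexpidx))$, so adding the linear term and integrating,
\*[
  \Phi(\genrnvec{})-\Phi(\genrnvec{}^*) \geq \int \rbra[1]{\Loss{}{\genexpidx}+\lr{}^{-1}\gendivfun'(\genrnvec{}^*(\genexpidx))}\rbra[1]{\genrnvec{}(\genexpidx)-\genrnvec{}^*(\genexpidx)}\,\basemeasvec(\dee\genexpidx),
\]
the integrand being $\basemeasvec$-integrable since $\gendivfun'(\genrnvec{}^*(\genexpidx))=\gentruncfun(-\lr{}\Loss{}{\genexpidx}+\konst^*)$ is bounded. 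I would then split according to $y(\genexpidx) \defas -\lr{}\Loss{}{\genexpidx}+\konst^*$: where $y(\genexpidx)\in[\genlowerdiv,\genupperdiv]$ the first factor equals the constant $\lr{}^{-1}\konst^*$ exactly; where $y(\genexpidx)<\genlowerdiv$ we have $\genrnvec{}^*(\genexpidx)=0\leq\genrnvec{}(\genexpidx)$ and the first factor exceeds $\lr{}^{-1}\konst^*$; and where $y(\genexpidx)>\genupperdiv$ (possible only if $\basemeasmin>0$) we have $\genrnvec{}^*(\genexpidx)=1/\basemeasmin\geq\genrnvec{}(\genexpidx)$ for $\basemeasvec$-a.e.\ $\genexpidx$ --- since any $\genrnvec{}\in\probrvspace(\basemeasvec)$ satisfies $\genrnvec{}\leq 1/\basemeasmin$ $\basemeasvec$-a.e., as otherwise $\int\genrnvec{}\,\basemeasvec(\dee\genexpidx)>1$ --- while the first factor is below $\lr{}^{-1}\konst^*$. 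In every case the product of the two factors is at least $\lr{}^{-1}\konst^*(\genrnvec{}(\genexpidx)-\genrnvec{}^*(\genexpidx))$, so integrating,
\*[
  \Phi(\genrnvec{})-\Phi(\genrnvec{}^*)\geq \lr{}^{-1}\konst^*\int\rbra[1]{\genrnvec{}(\genexpidx)-\genrnvec{}^*(\genexpidx)}\,\basemeasvec(\dee\genexpidx) = 0.
\]

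Finally, for step (iv), if $\genrnvec{}^{**}$ is another minimizer then $\Phi(\genrnvec{}^{**})=\Phi(\genrnvec{}^*)<\infty$ forces $\gendiv{\gendivfun}{\basemeasvec}{\genrnvec{}^{**}}<\infty$, and were $\genrnvec{}^{**}\neq\genrnvec{}^*$ on a set of positive $\basemeasvec$-measure, strict convexity of $\gendivfun$ would give $\Phi\rbra{\tfrac12(\genrnvec{}^*+\genrnvec{}^{**})} < \tfrac12\Phi(\genrnvec{}^*)+\tfrac12\Phi(\genrnvec{}^{**}) = \Phi(\genrnvec{}^*)$, contradicting minimality since $\tfrac12(\genrnvec{}^*+\genrnvec{}^{**})\in\probrvspace(\basemeasvec)$. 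I expect the main obstacle to be the truncation bookkeeping in step (iii): one must recognize that the construction forces $\Loss{}{\genexpidx}+\lr{}^{-1}\gendivfun'(\genrnvec{}^*(\genexpidx))$ to equal the constant $\lr{}^{-1}\konst^*$ precisely off the sets where the constraint $\genrnvec{}\geq 0$ or the implicit constraint $\genrnvec{}\leq 1/\basemeasmin$ is active, and to have the sign dictated by complementary slackness on those active sets --- and, relatedly, to isolate the fact that feasibility alone forces $\genrnvec{}\leq 1/\basemeasmin$ $\basemeasvec$-a.e. The boundary-case analysis in step (i) ($\genupperdiv$ finite versus infinite, $\basemeasmin$ positive versus zero) is fiddly but routine once the inequality $\basemeasmin\leq\basemeas{\genexpset}$ is in hand.
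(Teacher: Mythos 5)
Your proof is correct and shares the paper's core strategy: verify the first-order (KKT) optimality conditions directly via the pointwise subgradient inequality for $\gendivfun$, rather than appealing to infinite-dimensional duality. Two steps diverge in execution. For the existence of $\konst^*$, the paper evaluates the normalizing map $\gennormfun$ at the two explicit points $\lr{}\Losslower + a$ and $\lr{}\Lossupper + a$, where $a = \gendivfun'(1/\basemeas{\genexpset})$, shows the values bracket $1$, and applies the intermediate value theorem on that compact interval (after establishing Lipschitz continuity via Leibniz and the inverse function theorem); you instead deduce existence from the monotone limiting behavior of $\gennormfun$ as $\konst\to\pm\infty$. Both are sound. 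The more substantial difference is in the optimality argument: the paper's longest passage is a WLOG reduction showing one may assume $-\lr{}\Lossvec{} + \konst^* \leq \genupperdiv$ everywhere, so that only the lower constraint $\genrnvec{}^* = 0$ can be active in the subsequent inequality chain — when $\basemeasmin > 0$ this requires a fairly delicate atom analysis (that $\Lossvec{}$ is $\basemeasvec$-essentially constant on atoms, that a shifted $\tilde\konst^*$ works, etc.). You bypass this entirely by keeping the upper truncation in play and handling it directly via the sign argument, using the clean observation that feasibility alone forces $\genrnvec{} \leq 1/\basemeasmin$ $\basemeasvec$-a.e.\ for any $\genrnvec{}\in\probrvspace(\basemeasvec)$. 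This is a genuine streamlining: it replaces a case-laden structural argument with a one-line complementary-slackness check, and it treats the two boundary constraints $\genrnvec{}\geq 0$ and $\genrnvec{}\leq 1/\basemeasmin$ symmetrically. The uniqueness step is equivalent in both: the paper reads strict convexity off the equality case in its inequality chain, you read it off the midpoint inequality.
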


For finite expert classes, solving the normalizing equation provided by \cref{lem:gen-ftrl-soln-formula} to a given precision is essentially the same difficulty as normalizing the weights for the classical \OGHedge{} algorithm. For example, for \ABN{} the range of the normalizing constant scales with $\log\numexperts$, and solving this using the bisection method would require only $\Oo(\log\log\numexperts)$ times more computation than normalizing \OGHedge{}. For uncountable expert classes, many of the general algorithms for prediction with expert advice from previous work are not easily modified to apply, and the standard analyses of regret bounds rely heavily on a finite $\numexperts$. In this setting, solving the \FTRL{} normalizing equation is essentially as difficult as normalizing the posterior distribution for Bayesian inference. That is, often computationally intractable, yet regularly studied for its theoretical properties. Extending approximation techniques for Bayesian inference to approximate the solution to this optimization problem is an interesting question for future work.

\subsection{Choosing $\fdivfun$ to obtain specific regret bounds}

In order to state our generic decomposition for the regret of \FTRL{} algorithms on abstract spaces, we rely on the definition of the one-dimensional \emph{Bregman divergence}, which is defined for any continuously differentiable, strictly convex $\fdivfun: \Reals \to \Reals$ by
\*[
	\bregman{\fdivfun}{x}{y}
	= \fdivfun(x) - \fdivfun(y) - \fdivfun'(y) (x-y).
\] 
\begin{restatable}{theorem}{FTRLdecomp}
\label{lem:gen-ftrl-decomp}
Suppose $\gendivfun :\domfdivfun(\basemeasvec) \to \Reals$ is twice continuously differentiable with $\gendivfun''>0$ on $\interior(\domfdivfun(\basemeasvec))$ and $\gendivfun''(0+)>0$. 
For all sequences $(\lossvec{t})_{t\in\Nats}\subseteq\boundrvspace$ and $(\mean{t})_{t\in\Nats}\subseteq\Reals$,
there exist $\intgenrnvec{t+1}\in\convhull(\genrnvec{t},\genrnvec{t+1})$ and $\inttgenrnvec{t+1}\in\convhull(\genrnvec{t},\orthgenrnvec{t+1})$ 
such that for all $\gentargetvec{}\in\probmeasll{\basemeasvec}\genexpspace$, $(\intgenrnbothvec{t+1})_{t\in\Nats}\in\set{(\intgenrnvec{t+1})_{t\in\Nats},(\inttgenrnvec{t+1})_{t\in\Nats}}$, and $T$, $\gendivnoarg{\gendivfun}{\basemeasvec}$-regularized \FTRL{} achieves
\[\label{eqn:gen-expansion-A}
	\regret{T}{\gentargetvec{}}
	&\leq \frac{1}{\lr{T+1}}\gendivrn{\gendivfun}{\basemeasvec}{\rnderiv{\gentargetvec{}}{\basemeasvec}} - \frac{1}{\lr{1}} \min_{\genrnvec{}\in\probrvspace(\basemeasvec)}\gendiv{\gendivfun}{\basemeasvec}{\genrnvec{}} \\
	&\qquad + \sum_{t=1}^T \sbra{\int \frac{\lr{t}}{2} \frac{\rbra{\loss{t}{\genexpidx} - \mean{t}}^2}{\gendivfun''(\intgenrnboth{t+1}{\genexpidx})} \basemeas{\dee\genexpidx} - \rbra{\frac{1}{\lr{t+1}} - \frac{1}{\lr{t}}} \gendiv{\gendivfun}{\basemeasvec}{\genrnvec{t+1}}},
\]
where
\*[
	\orthgenrnvec{t+1} \in \argmin_{\genrnvec{} \in \posrvspace(\basemeasvec)} \cbra[2]{\innermeas[0]{\lossvec{t} - \mean{t}\mathbbm{1}}{\genrnvec{}}{\basemeasvec}
    + \frac{1}{\lr{t}} \int \bregman{\gendivfun}{\genrn{}{\genexpidx}}{\genrn{t}{\genexpidx}} \basemeas{\dee\genexpidx}}.
\]

Further, if $\inf_{\genexpidx\in\genexpset} \loss{t}{\genexpidx} \geq \mean{t}$, then $\inttgenrnvec{t+1} \leq \genrnvec{t}$ pointwise.
\end{restatable}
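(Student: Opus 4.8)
The proof runs the classical follow‑the‑leader/be‑the‑leader telescoping for \FTRL{} together with a per‑round stability estimate, carried out at the level of densities so that it applies to linearly decomposable regularizers on an arbitrary $\genexpspace$. The two features that go beyond the textbook finite‑dimensional argument are that we keep (rather than discard) the regularizer‑increment term $\rbra{\tfrac{1}{\lr{t+1}} - \tfrac{1}{\lr{t}}}\gendiv{\gendivfun}{\basemeasvec}{\genrnvec{t+1}}$, and that the stability term is bounded by a local‑norm quantity in which $\gendivfun''$ is evaluated at an explicit intermediate density; the mean shift $\mean{t}$ is a free parameter because $\genrnvec{t}$, $\genrnvec{t+1}$, and $\rnderiv{\gentargetvec{}}{\basemeasvec}$ are all probability densities, so $\innermeas{\mathbbm{1}}{\genrnvec{t} - \genrnvec{t+1}}{\basemeasvec} = \innermeas{\mathbbm{1}}{\genrnvec{t} - \rnderiv{\gentargetvec{}}{\basemeasvec}}{\basemeasvec} = 0$.

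\textbf{Step 1: telescoping.}
Write $\genregularizer{t} = \tfrac{1}{\lr{t}}\gendivnoarg{\gendivfun}{\basemeasvec}$ and $\Phi_t = \min_{\genrnvec{}\in\probrvspace(\basemeasvec)}\cbra{\innermeas{\Lossvec{t}}{\genrnvec{}}{\basemeasvec} + \genregularizer{t+1}(\genrnvec{})}$, which by \cref{lem:gen-ftrl-soln-formula} is attained at $\genrnvec{t+1}$, with $\Phi_0 = \genregularizer{1}(\genrnvec{1}) = \tfrac{1}{\lr{1}}\min_{\genrnvec{}\in\probrvspace(\basemeasvec)}\gendiv{\gendivfun}{\basemeasvec}{\genrnvec{}}$. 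Expanding $\Phi_T - \Phi_0 = \sum_{t=1}^T(\Phi_t - \Phi_{t-1})$, splitting $\Lossvec{t} = \Lossvec{t-1} + \lossvec{t}$, and using that $\genrnvec{t}$ minimizes $\innermeas{\Lossvec{t-1}}{\cdot}{\basemeasvec} + \genregularizer{t}$ over the convex set $\probrvspace(\basemeasvec)$ — so that the constrained first‑order condition makes its per‑step drop at least $\bregman{\genregularizer{t}}{\genrnvec{t+1}}{\genrnvec{t}}$, the linear part having zero Bregman divergence — gives
\[
  \Phi_T - \Phi_0
  &\geq \sum_{t=1}^T \sbra[1]{\innermeas{\lossvec{t}}{\genrnvec{t+1}}{\basemeasvec} + \bregman{\genregularizer{t}}{\genrnvec{t+1}}{\genrnvec{t}}} \\
  &\quad + \sum_{t=1}^T \rbra{\tfrac{1}{\lr{t+1}} - \tfrac{1}{\lr{t}}} \gendiv{\gendivfun}{\basemeasvec}{\genrnvec{t+1}}.
\]
Combining this with $\Phi_T \leq \innermeas{\Lossvec{T}}{\rnderiv{\gentargetvec{}}{\basemeasvec}}{\basemeasvec} + \genregularizer{T+1}(\rnderiv{\gentargetvec{}}{\basemeasvec})$, writing $\regret{T}{\gentargetvec{}} = \sum_t \innermeas{\lossvec{t}}{\genrnvec{t+1} - \rnderiv{\gentargetvec{}}{\basemeasvec}}{\basemeasvec} + \sum_t \innermeas{\lossvec{t}}{\genrnvec{t} - \genrnvec{t+1}}{\basemeasvec}$, and rearranging reproduces every term of \eqref{eqn:gen-expansion-A} except that the stability sum $\sum_t \sbra[1]{\innermeas{\lossvec{t}}{\genrnvec{t} - \genrnvec{t+1}}{\basemeasvec} - \bregman{\genregularizer{t}}{\genrnvec{t+1}}{\genrnvec{t}}}$ remains to be bounded.

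\textbf{Step 2: stability and the intermediate density.}
Since $\gendivnoarg{\gendivfun}{\basemeasvec}$ is an integral functional, $\bregman{\genregularizer{t}}{\genrnvec{t+1}}{\genrnvec{t}} = \tfrac{1}{\lr{t}}\int \bregman{\gendivfun}{\genrn{t+1}{\genexpidx}}{\genrn{t}{\genexpidx}} \basemeas{\dee\genexpidx}$, and by the mean‑shift observation $\innermeas{\lossvec{t}}{\genrnvec{t} - \genrnvec{t+1}}{\basemeasvec} = \int (\loss{t}{\genexpidx} - \mean{t})\rbra{\genrn{t}{\genexpidx} - \genrn{t+1}{\genexpidx}} \basemeas{\dee\genexpidx}$. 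For $\intgenrnbothvec{t+1} = \intgenrnvec{t+1}$: a scalar Taylor expansion with Lagrange remainder gives $\bregman{\gendivfun}{\genrn{t+1}{\genexpidx}}{\genrn{t}{\genexpidx}} = \tfrac12 \gendivfun''(\intgenrn{t+1}{\genexpidx})\rbra{\genrn{t+1}{\genexpidx} - \genrn{t}{\genexpidx}}^2$ with $\intgenrn{t+1}{\genexpidx}$ strictly between $\genrn{t}{\genexpidx}$ and $\genrn{t+1}{\genexpidx}$, so $\intgenrnvec{t+1} \in \convhull(\genrnvec{t}, \genrnvec{t+1})$ pointwise (and is measurable, e.g.\ by solving the remainder identity explicitly); bounding $ab - \tfrac{c}{2}b^2 \leq \tfrac{a^2}{2c}$ pointwise with $a = \loss{t}{\genexpidx} - \mean{t}$, $b = \genrn{t}{\genexpidx} - \genrn{t+1}{\genexpidx}$, $c = \tfrac{1}{\lr{t}}\gendivfun''(\intgenrn{t+1}{\genexpidx})$ and integrating delivers the claimed bound. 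For $\intgenrnbothvec{t+1} = \inttgenrnvec{t+1}$: note that $\orthgenrnvec{t+1}$ maximizes $\genrnvec{} \mapsto \innermeas{\lossvec{t} - \mean{t}\mathbbm{1}}{\genrnvec{t} - \genrnvec{}}{\basemeasvec} - \tfrac{1}{\lr{t}}\int \bregman{\gendivfun}{\genrn{}{\genexpidx}}{\genrn{t}{\genexpidx}} \basemeas{\dee\genexpidx}$ over $\posrvspace(\basemeasvec) \supseteq \probrvspace(\basemeasvec)$, hence its value there is at least its value at $\genrnvec{t+1}$, which equals the stability term; repeating the Taylor‑plus‑square estimate at $\orthgenrnvec{t+1}$ in place of $\genrnvec{t+1}$ yields the bound with $\inttgenrnvec{t+1} \in \convhull(\genrnvec{t}, \orthgenrnvec{t+1})$. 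Finally, the objective defining $\orthgenrnvec{t+1}$ is separable across $\genexpidx$, so $\orthgenrn{t+1}{\genexpidx} = [\gendivfun']^{-1}\rbra{\gentruncfun(\gendivfun'(\genrn{t}{\genexpidx}) - \lr{t}(\loss{t}{\genexpidx} - \mean{t}))}$; when $\inf_{\genexpidx\in\genexpset}\loss{t}{\genexpidx} \geq \mean{t}$ the argument of $\gentruncfun$ is at most $\gendivfun'(\genrn{t}{\genexpidx})$, so monotonicity of $\gendivfun'$ gives $\orthgenrnvec{t+1} \leq \genrnvec{t}$ and therefore $\inttgenrnvec{t+1} \leq \genrnvec{t}$ pointwise.

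\textbf{Main obstacle.}
The hard part is not any single inequality but the measure‑theoretic and functional‑analytic care: showing the \FTRL{} and positive‑cone argmins are attained in these non‑reflexive $L^1$‑type spaces and that constrained first‑order optimality yields the inequalities above (here one can lean on the closed forms in \cref{lem:gen-ftrl-soln-formula} and on separability); selecting the intermediate densities $\intgenrnvec{t+1}$, $\inttgenrnvec{t+1}$ measurably; and, most of all, controlling the behaviour at the boundary of $\domfdivfun(\basemeasvec)$ — where $\gendivfun'$ may be unbounded and the truncation $\gentruncfun$ active — so that the Bregman identities and the Lagrange remainder stay valid at densities taking the extreme values $0$ or $1/\basemeasmin$, using only $\gendivfun'' > 0$ on $\interior(\domfdivfun(\basemeasvec))$ and $\gendivfun''(0+) > 0$. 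Everything else is the routine complete‑the‑square and telescoping bookkeeping.
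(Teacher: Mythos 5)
Your proposal is correct and follows essentially the same route as the paper: the paper invokes Lemma~7.1 of Orabona (2019) for the telescoping / be‑the‑leader step that you re‑derive from first principles, proves a Gateaux first‑order optimality lemma for the constrained minimum, uses the scalar Lagrange‑remainder Taylor expansion to produce the intermediate densities, applies Fenchel–Young pointwise, and handles the two choices of intermediate point (the \FTRL{} successor $\genrnvec{t+1}$ versus the unnormalized minimizer $\orthgenrnvec{t+1}$) and the monotonicity claim exactly as you do. The one additional piece of work in the paper that you flag only as an ``obstacle'' is the bounded‑convergence argument justifying that the functional Bregman divergence of the integral regularizer reduces to the pointwise integral $\int \bregman{\gendivfun}{\genrn{t+1}{\genexpidx}}{\genrn{t}{\genexpidx}}\,\basemeas{\dee\genexpidx}$, but that is exactly the kind of check you anticipated.
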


\cref{lem:gen-ftrl-decomp} is a general result that can be used to prove regret bounds for a variety of settings under the general \FTRL{} framework we describe, as we have already done in \cref{sec:quantile-main-results,sec:semiadv-main-results}. Furthermore, we highlight that the functional form of \cref{lem:gen-ftrl-decomp} makes it clear how to select the regularizers for both the quantile regret and semi-adversarial paradigms. Specifically, in the former, the terms in summation in \cref{eqn:gen-expansion-A} are balanced to cancel, while in the latter, the terms in summation in \cref{eqn:gen-expansion-A} are balanced to contribute the same order to the regret. These two cases correspond to the relationship 
\[\label{eq:balanced-as-all-regularizers-should-be}
  \gendivfun\cdot\gendivfun'' = \pm 1.
\]

Since the generic bound we obtain in \cref{lem:gen-ftrl-decomp} yields components based upon integrals of both $\gendivfun$ and $1/\gendivfun''$, balancing these terms without free tuning parameters requires \cref{eq:balanced-as-all-regularizers-should-be} to approximately hold. Heuristically, trying to balance these terms with tuning parameters rather than by the choice of $\gendivfun$ seems to lead to non-adaptive or non-uniform bounds. \OGHedge{} provides an example of this for both the semi-adversarial and KL regret cases. In the semi-adversarial paradigm, \citet{semiadv} showed that \OGHedge{} cannot be tuned in a way that is minimax optimal and agnostic to the semi-adversarial constraint that prevails. Similarly, without tuning the \lrname{} to be dependent on the comparator distribution (equivalently, the quantile of interest), KL (and quantile) regret bounds for \OGHedge{} are suboptimal. 

\subsection{Applications of root-KL regret bounds for continuous experts}

We now briefly discuss two applications that highlight the immediate benefits of our general analysis (and consequently results of \cref{sec:quantile-main-results}) applying beyond finite expert spaces.

\paragraph{Predicting as well as the terminal posterior using $\fdivfun$-divergence \FTRL{}}
A reasonable choice of distribution to measure regret against is the posterior distribution $\hat\genpriorvec_{T}$ after having seen $T$ rounds of data. 
A consequence of \cref{thm:continuous-KL,cor:kl-bound} is that, for bounded log-likelihoods, the total loss incurred by making predictions according to $\fdivfun$-divergence \FTRL{} for suitable $\fdivfun$ is bounded by 
the loss incurred by the terminal posterior $\hat\genpriorvec_{T}$ plus an excess regret of the order $\sqrt{T \, \KL{\hat\genpriorvec_{T}}{\genpriorvec}}$. 
This excess loss is smaller than $T+\KL{\hat\genpriorvec_{T}}{\genpriorvec}$, which is the best available bound for excess loss when predicting according to the posterior $\hat\genpriorvec_{t}$ at each round $t\in\range{T}$, 
and smaller than $\sqrt{T}\ \KL{\hat\genpriorvec_{T}}{\genpriorvec}$, which is the best available bound for the excess loss when predicting with \OGHedge{} if it is not tuned with \emph{a priori} knowledge of $\KL{\hat\genpriorvec_{T}}{\genpriorvec}$.
These worse excess loss bounds follow from the analysis contained in \citet{zhang06information}, although he only compares against the ``true'' data-generating parameter.

\paragraph{Model selection using $\fdivfun$-divergence \FTRL{}}
Extending the interpretation of \citet{orabona2016} by \citet{pmlr-v125-foster20a} to countable union model classes, we can consider an infinite sequence of disjoint finite expert classes (note any nested sequence can be made disjoint), $\genexpset_1, \genexpset_2,\dots$, and let $\genexpset = \Union_{m\geq 1}\genexpset_m$.
Assigning to $\genexpset$ the prior $\genprior{\genexpidx} \propto \frac{1}{m^2 \card{\genexpset_m}}$ for each $\genexpidx\in\genexpset_m$, we recover the following regret bound for $\fdivfun$-divergence \FTRL{} with \lrname{} $\lr{t} \propto 1/\sqrt{t}$ from \cref{cor:kl-bound}:
\*[
	\regret{T}{\unorderidx{\genexpidx}} \leq \Oo\rbra{\sqrt{T(\log\card{\genexpset_m}+\log m)}}
	& \text{ for all } \genexpidx\in\genexpset_m \text{ and all } m\in\Nats.
\]

\section{Limitations}\label{sec:discussion}
One limitation of our results, which is common in the online learning literature, is that they only apply to bounded losses. 
The extension of learning theory to unbounded losses has seen increased interest in recent years \citep{grunwald2020fastrates,alquier20unbounded,mourtada21unbounded}, although it remains a major open problem to achieve guarantees for arbitrary losses. In particular, log loss for many nonparametric learners is not covered by the current unbounded loss literature, and will have important implications 
in statistical learning and density estimation 
when resolved.

A second limitation of our work is that finding the implicit normalizing constant---$\konst^*$ in \cref{lem:gen-ftrl-soln-formula}---is computationally difficult, as was also observed by \citet{alquier20unbounded}. It is at least as costly as finding the normalizing constant for a Bayesian posterior (or running \OGHedge{}), which corresponds to one evaluation of the left-hand side of \cref{eqn:gen-normalization-formula}. For \OGHedge{}, where $\gendivfun(x) = x\log(x)$, one evaluation of that equation is sufficient, but in general finding the root of \cref{eqn:gen-normalization-formula} to a fixed precision will take a number of evaluations depending on the desired precision and the range of possible values, and so \FTRL{} with a general $\gendivfun$ will be that marginally more computationally expensive than \OGHedge{}. Variational approaches for Bayesian inference, which avoid direct normalization of the posterior, may also be applicable for \FTRL{} with a general $\gendivfun$, and that line of inquiry may lead to novel, efficient, and high-performance learning algorithms.

A final limitation is that we do not obtain variance bounds together with uniform root-KL bounds, although our approach leads to both separately (\cref{fact:gen-variance-bound} and \cref{cor:kl-bound}), so we are optimistic our techniques can lead to such bounds, which would resolve multiple open problems. 

\begin{ack}

JN is supported by an NSERC Vanier Canada Graduate Scholarship and the Vector Institute. 
BB is supported by an NSERC Canada Graduate Scholarship and the Vector Institute. 
FO is partly supported by the National Science Foundation under grants no. 1908111 "AF: Small: Collaborative Research: New Representations for Learning Algorithms and Secure Computation" and no. 2046096 "CAREER: Parameter-free Optimization Algorithms for Machine Learning". 
DMR is supported in part by an NSERC Discovery Grant, an Ontario Early Researcher Award, and a stipend provided by the Charles Simonyi Endowment. 
We thank Mufan Li and Mahdi Haghifam for helpful feedback on early drafts.
\end{ack}

\printbibliography

\neurips{\input{checklist}} 

\neurips{\newpage} 
\appendix
\section{Proofs for \FTRL{} analysis for general expert spaces}
\label{sec:proof-ftrl-decomp}

We first prove the formula for the \FTRL{} solution prescribed by \cref{lem:gen-ftrl-soln-formula}, which we recall next.

\FTRLsolutionFormula*

In the proof below, we treat the formula for the optimal solution as an educated guess (inspired by the finite-dimensional case using Lagrange multipliers). The proof then verifies that the proposed solution is well-defined and in fact achieves the optimum. It is also possible to derive the optimal solution directly using a version of the Lagrange multiplier method for convex optimization problems on Banach spaces with cone constraints (see, for example, \citep{luenberger1969optimization}). However, to apply such a result, one must still verify the existence of Lagrange multipliers. For example, existence of the Lagrange multiplier for the constraint ``$\genrnvec{}$ integrates to $1$'' exactly corresponds to existence of a solution to \cref{eqn:gen-normalization-formula}. Thus it is not significantly more or less laborious to take such an approach over the ``guess and check'' method we have employed.

\begin{proof}[Proof of \cref{lem:gen-ftrl-soln-formula}]
Let $\Losslower = \inf_{\genexpidx\in\genexpset}\Loss{}{\genexpidx}$ and $\Lossupper = \sup_{\genexpidx\in\genexpset}\Loss{}{\genexpidx}$.
Since $\gendivfun''>0$, 
$\gendivfun'$ is strictly increasing and restricted to a non-negative domain, so $[\gendivfun']^{-1}$ exists, is strictly increasing, and is non-negative. 
Let $a = \gendivfun'(1 / \basemeas{\genexpset})$, and note that since $1/\basemeas{\genexpset} \in \domfdivfun(\basemeasvec{})$, $\gentruncfun(a)=a$.
Let $\gennormfun:\Reals\to \basemeas{\genexpset}\cdot\domfdivfun(\basemeasvec{}) $ be given by
\*[
	\gennormfun(\konst) = \int [\gendivfun']^{-1}\rbra{\gentruncfun(- \lr{} \Loss{}{\genexpidx} + \konst)} \basemeas{\dee \genexpidx}.
\]

First, note that for all $\konst$, $\gennormfun(\konst) \leq \basemeas{\genexpset} \sup_{\genexpidx\in\genexpset} [\gendivfun']^{-1}\rbra{\gentruncfun(- \lr{} \Loss{}{\genexpidx} + \konst)}$.
So, since $\Lossvec{} \geq \Losslower$, $\gennormfun(\konst) \leq \basemeas{\genexpset} [\gendivfun']^{-1}\rbra{\gentruncfun(- \lr{} \Losslower + \konst)}$. Thus,
\[\label{eqn:gen-ftrl-soln-1}
	1
	&=
	\int [\gendivfun']^{-1}(a) \basemeas{\dee \genexpidx} \\
	&\leq
	\int [\gendivfun']^{-1}(\gentruncfun(-\lr{} \Loss{}{\genexpidx} + \lr{}\Lossupper + a)) \basemeas{\dee \genexpidx} \\
	&= \gennormfun(\lr{}\Lossupper + a) \\
	&\leq \basemeas{\genexpset} [\gendivfun']^{-1}(\gentruncfun(\lr{}(\Lossupper-\Losslower) + a)),
\]
where the second inequality follows since $\Lossvec{} \leq \Lossupper$.
Further, non-negativity of $[\gendivfun']^{-1}$ gives
\[\label{eqn:gen-ftrl-soln-2}
	0 &\leq \gennormfun(\lr{}\Losslower + a) \\
	& = \int [\gendivfun']^{-1}(\gentruncfun(\lr{}(\Losslower - \Loss{}{\genexpidx}) + a)) \basemeas{\dee \genexpidx} \\
	&\leq \int [\gendivfun']^{-1}(a) \basemeas{\dee \genexpidx}  \\
	& = 1.
\]

By Leibniz rule and the inverse function theorem, \cref{eqn:gen-ftrl-soln-1,eqn:gen-ftrl-soln-2} imply that, for all $\konst\in \cinter{\lr{}\Losslower + a, \lr{}\Lossupper+a}$,
\*[
  \gennormfun'(\konst)
  & = \int \frac{\ind{\genupperdiv\geq- \lr{}\Loss{}{\genexpidx} + \konst \geq \genlowerdiv}} {\gendivfun''\circ[\gendivfun']^{-1}\rbra{ \gentruncfun(- \lr{}\Loss{}{\genexpidx} + \konst})} \basemeas{\dee \genexpidx} \\
  & \leq \frac{\basemeas{\genexpset}}{\inf_{\genexpidx \in \ointer{0, [\gendivfun']^{-1}(\gentruncfun(\lr{}(\Lossupper-\Losslower) + a))}} \gendivfun''(\genexpidx)} \\
  & <\infty.
\]
Hence, $g$ is (Lipschitz) continuous on this interval, and then, by the intermediate value theorem, there exists $\konst^*$ solving \cref{eqn:gen-normalization-formula}.

Let $\ftrlobj{}:\posrvspace(\basemeasvec)\to\Reals$ be given by
\*[
	\ftrlobj{}(\genrnvec{})
	  & = \innermeas[0]{\Lossvec{}}{\genrnvec{}}{\basemeasvec} + \frac{1}{\lr{}} \gendiv{\gendivfun}{\basemeasvec}{\genrnvec{}}.
\]

To avoid notational clutter, for the remainder of the proof all statements involving $\genexpidx$ implicitly hold only $\basemeasvec$-a.s.
Let $\genrn{}{\genexpidx} = [\gendivfun']^{-1}\rbra{\gentruncfun( - \lr{} \Loss{}{\genexpidx} + \konst^*)}$ and note that the definition of $\konst^*$ implies that $\int \genrn{}{\genexpidx} \basemeas{\dee \genexpidx} = 1$. We now argue that $-\lr{} \Lossvec{} + \konst^* \leq \genupperdiv$. 

If $\basemeasmin = 0$, then $\genupperdiv = \infty$, so trivially $-\lr{} \Lossvec{} + \konst^* \leq \genupperdiv$. 

If $\basemeasmin \neq 0$, then $\basemeasvec$ is purely atomic and, since $\fdivfun'$ is increasing, 
$\genupperdiv = \fdivfun'(1/\basemeasmin)$. 
By definition of $\gentruncfun$, $\genrnvec{} \leq 1/\basemeasmin$. 
If $\genrnvec{} < 1/\basemeasmin$, then $\gentruncfun( - \lr{} \Loss{}{\genexpidx} + \konst^*) < \genupperdiv$, so $- \lr{} \Loss{}{\genexpidx} + \konst^* \leq \genupperdiv$. Otherwise, there exists an atom $A\in\genexpSF$ for $\basemeasvec{}$ such that $\genrn{}{\genexpidx} = 1/\basemeasmin$ for $\genexpidx\in A$.
Since $\genrnvec{}\in\probrvspace(\basemeasvec)$, we must have $\genrn{}{\genexpidx} = \ind{\genexpidx\in A}/\basemeasmin$; equivalently $\genrnvec{}$ is the Radon--Nikodym derivative of a single-atom probability measure completely concentrated on $A$. 
Without loss of generality, $A= \set{\genexpidx\setdelim \genrn{}{\genexpidx} = 1/\basemeasmin}$ and
\*[
	\fdivfun'(\genrn{}{\genexpidx})
	= \genlowerdiv + (\genupperdiv-\genlowerdiv)\ind{\genexpidx\in A}.
\]
Then, any $\konst^*$ that satisfies
\*[
	\begin{cases}
	-\lr{} \Loss{}{\genexpidx} + \konst^* \geq \genupperdiv,
	& \genexpidx \in A \\
	-\lr{} \Loss{}{\genexpidx} + \konst^* \leq \genlowerdiv,
	& 
	\genexpidx \not\in A
	\end{cases}
\]
is valid and gives the same solution. 
In particular, since there is some $\konst^*$ satisfying this, then $\tilde{\konst}^* = \genupperdiv + \lr{} [\basemeasvec\text{-}\esssup]_{\genexpidx\in A}\Loss{}{\genexpidx} \leq \konst^* \leq \genlowerdiv + \lr{} [\basemeasvec\text{-}\essinf]_{\genexpidx\not\in A}\Loss{}{\genexpidx}$, 
which implies $\tilde{\konst}^*$ is a valid solution. 
Finally, we claim that $\Lossvec{}$ must be constant on $A$.
To see this, suppose otherwise there is some $\tildeLossvec{}$ such that for $\genexpidx\in\tilde A \subseteq A$, $\Loss{}{\genexpidx} < \tildeLossvec{}$, and for $\genexpidx\in A \setminus \tilde A$, $\Loss{}{\genexpidx} \geq \tildeLossvec{}$. Then the inverse images of $(\infty, \tildeLossvec{})$ and $[\tildeLossvec{}, \infty)$ must partition $A$ since $\Lossvec{}$ is measurable, but one of them must have $\basemeasvec$ measure zero since $A$ is an atom.
Thus, it is also true that $\tilde{\konst}^* = \genupperdiv + \lr{} [\basemeasvec\text{-}\essinf]_{\genexpidx\in A}\Loss{}{\genexpidx}$,
so we have argued that, without loss of generality, in all cases $-\lr{} \Lossvec{} + \konst^* \leq \genupperdiv$.

With this in mind, consider any $\genrndumvec{}\in\probrvspace(\basemeasvec)$.
Since $\fdivfun$ is continuously differentiable and convex,
\[\label{eqn:gen-ftrl-soln-3}
	\ftrlobj{}(\genrndumvec{})
	& = \ftrlobj{}(\genrnvec{})
	  + \innermeas[0]{\Lossvec{}}{\genrndumvec{}-\genrnvec{}}{\basemeasvec}
	  + \frac{1}{\lr{}} \int \sbra{\gendivfun(\genrndumvec{}(\genexpidx)) - \gendivfun(\genrnvec{}(\genexpidx))} \basemeas{\dee \genexpidx} \\
	& \geq \ftrlobj{}(\genrnvec{})
	  + \innermeas[0]{\Lossvec{}}{\genrndumvec{}-\genrnvec{}}{\basemeasvec}
	  + \frac{1}{\lr{}} \int \gendivfun'(\genrn{}{\genexpidx}) \sbra{\genrndum{}{\genexpidx} - \genrn{}{\genexpidx}} \basemeas{\dee \genexpidx} \\
	&= \ftrlobj{}(\genrnvec{})
	  + \int_{\genrn{}{\genexpidx}>0} \Loss{}{\genexpidx} \sbra{\genrndum{}{\genexpidx} - \genrn{}{\genexpidx}} \basemeas{\dee \genexpidx}
	  + \int_{\genrn{}{\genexpidx}=0} \Loss{}{\genexpidx} \sbra{\genrndum{}{\genexpidx} - \genrn{}{\genexpidx}} \basemeas{\dee \genexpidx} \\
	  &\qquad 
	  + \frac{1}{\lr{}} \int_{\genrn{}{\genexpidx}>0} \gendivfun'(\genrn{}{\genexpidx}) \sbra{\genrndum{}{\genexpidx} - \genrn{}{\genexpidx}} \basemeas{\dee \genexpidx} 
	  + \frac{1}{\lr{}} \int_{\genrn{}{\genexpidx}=0} \gendivfun'(\genrn{}{\genexpidx}) \sbra{\genrndum{}{\genexpidx} - \genrn{}{\genexpidx}} \basemeas{\dee \genexpidx} \\
	&= \ftrlobj{}(\genrnvec{})
	  + \int_{\genrn{}{\genexpidx}>0} \Loss{}{\genexpidx} \sbra{\genrndum{}{\genexpidx} - \genrn{}{\genexpidx}} \basemeas{\dee \genexpidx}
	  + \int_{\genrn{}{\genexpidx}=0} \Loss{}{\genexpidx} \sbra{\genrndum{}{\genexpidx} - \genrn{}{\genexpidx}} \basemeas{\dee \genexpidx} \\
	  &\qquad 
	  + \frac{1}{\lr{}} \int_{\genrn{}{\genexpidx}>0} \rbra{-\lr{} \Loss{}{\genexpidx} + \konst^*} \sbra{\genrndum{}{\genexpidx} - \genrn{}{\genexpidx}} \basemeas{\dee \genexpidx} 
	  + \frac{1}{\lr{}} \int_{\genrn{}{\genexpidx}=0} \genlowerdiv \sbra{\genrndum{}{\genexpidx} - \genrn{}{\genexpidx}} \basemeas{\dee \genexpidx} \\	
	&= \ftrlobj{}(\genrnvec{})
	  + \int_{\genrn{}{\genexpidx}>0} \frac{\konst^*}{\lr{}} \sbra{\genrndum{}{\genexpidx} - \genrn{}{\genexpidx}} \basemeas{\dee \genexpidx}
	  + \int_{\genrn{}{\genexpidx}=0} \rbra{\Loss{}{\genexpidx} +\frac{\genlowerdiv}{\lr{}}} \sbra{\genrndum{}{\genexpidx} - \genrn{}{\genexpidx}} \basemeas{\dee \genexpidx} \\
	&\geq \ftrlobj{}(\genrnvec{})
	  + \int_{\genrn{}{\genexpidx}>0} \frac{\konst^*}{\lr{}} \sbra{\genrndum{}{\genexpidx} - \genrn{}{\genexpidx}} \basemeas{\dee \genexpidx}
	  + \int_{\genrn{}{\genexpidx}=0} \frac{\konst^*}{\lr{}} \sbra{\genrndum{}{\genexpidx} - \genrn{}{\genexpidx}} \basemeas{\dee \genexpidx} \\
	&= \ftrlobj{}(\genrnvec{}),	
\]
where in the second last step we have used that if $\genrn{}{\genexpidx}=0$ then $- \lr{} \Loss{}{\genexpidx} + \konst^* \leq \genlowerdiv$ and $\genrndum{}{\genexpidx} - \genrn{}{\genexpidx} \geq 0$. 
Thus, $\genrnvec{}$ is a solution to the \FTRL{} equation. Further, since $\gendivfun$ is strictly convex, equality can only hold in \cref{eqn:gen-ftrl-soln-3} if $\basemeas{\genrnvec{} = \genrndumvec{}}=1$.
\end{proof}

To prove \cref{lem:gen-ftrl-decomp}, we need the analogue of the finite-dimensional first-order optimality condition. For any $\convexset \subseteq \rvspace(\basemeasvec)$ and any $\ftrlobj{}:\convexset\to \Reals$
and $\genrnvec{} \in \rvspace(\basemeasvec)$, the Gateaux derivative (in the direction of $\genrndumvec{}\in\rvspace(\basemeasvec)$) is
\*[
	\funcdiff{\ftrlobj{}}{\genrnvec{}}{\genrndumvec{}}
	= \lim_{\alpha \to 0}
	\frac{\ftrlobj{}(\genrnvec{} + \alpha \genrndumvec{}) - \ftrlobj{}(\genrnvec{})}{\alpha}.
\]

The following result is straightforward: we include a proof for completeness.
\begin{lemma}\label{lem:gen-first-order-optimality}
If $\convexset \subseteq \rvspace(\basemeasvec)$ is convex and $\genrnvec{} = \argmin_{\genrndumvec{}\in\convexset} \ftrlobj{}(\genrndumvec{})$, then $\funcdiff{\ftrlobj{}}{\genrnvec{}}{\genrndumvec{}-\genrnvec{}} \geq 0$ for all $\genrndumvec{}\in\convexset$ where the limit exists.
\end{lemma}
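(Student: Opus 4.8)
The plan is to use the standard one-sided difference-quotient argument enabled by convexity of $\convexset$. Fix any $\genrndumvec{} \in \convexset$ for which the limit defining $\funcdiff{\ftrlobj{}}{\genrnvec{}}{\genrndumvec{}-\genrnvec{}}$ exists. For each $\alpha \in (0,1]$, convexity of $\convexset$ gives $\genrnvec{} + \alpha(\genrndumvec{} - \genrnvec{}) = (1-\alpha)\genrnvec{} + \alpha\genrndumvec{} \in \convexset$, so minimality of $\genrnvec{}$ over $\convexset$ yields $\ftrlobj{}(\genrnvec{} + \alpha(\genrndumvec{} - \genrnvec{})) \geq \ftrlobj{}(\genrnvec{})$. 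Dividing by $\alpha > 0$ shows that the difference quotient $\bigl(\ftrlobj{}(\genrnvec{} + \alpha(\genrndumvec{} - \genrnvec{})) - \ftrlobj{}(\genrnvec{})\bigr)/\alpha$ is nonnegative for every $\alpha \in (0,1]$.

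Next I would let $\alpha \downarrow 0$: a limit of nonnegative quantities along $\alpha \downarrow 0$ is nonnegative, so the right-hand limit is $\geq 0$. Since by hypothesis the (two-sided) limit $\funcdiff{\ftrlobj{}}{\genrnvec{}}{\genrndumvec{}-\genrnvec{}}$ exists, it coincides with this right-hand limit and is therefore $\geq 0$, which is the claim.

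There is essentially no obstacle here; the only point worth a word of care is that the Gateaux derivative is defined as a two-sided limit, whereas convexity only controls the perturbed point $\genrnvec{} + \alpha(\genrndumvec{} - \genrnvec{})$ for $\alpha \geq 0$ (for $\alpha < 0$ it may leave $\convexset$), so one argues via the right-hand limit and then invokes existence of the full limit to conclude. No structure of the ambient space $\rvspace(\basemeasvec)$ beyond its being a real vector space is used, and no regularity of $\ftrlobj{}$ is needed beyond what is implicit in assuming the defining limit exists.
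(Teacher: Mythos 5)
Your proof is correct and is essentially the same argument as the paper's, merely phrased directly rather than by contradiction: the paper assumes $\funcdiff{\ftrlobj{}}{\genrnvec{}}{\genrndumvec{}-\genrnvec{}} < 0$, extracts some $\alpha > 0$ with a negative difference quotient, and contradicts optimality, whereas you show the difference quotient is nonnegative for every $\alpha \in (0,1]$ and pass to the limit. Your added remark that convexity only controls $\alpha \geq 0$, so one first bounds the right-hand limit and then uses existence of the full limit, is a correct and useful point of care that the paper leaves implicit.
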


\begin{proof}[Proof of \cref{lem:gen-first-order-optimality}]
Towards a contradiction, suppose there exists $\genrndumvec{}\in\convexset$ with $\funcdiff{\ftrlobj{}}{\genrnvec{}}{\genrndumvec{}-\genrnvec{}} < 0$. Define $\genrndumvec{\alpha} = \alpha \genrndumvec{} + (1-\alpha)\genrnvec{}$ for all $\alpha\in[0,1]$. 
By definition,
\*[
	\funcdiff{\ftrlobj{}}{\genrnvec{}}{\genrndumvec{}-\genrnvec{}}
	= \lim_{\alpha \to 0}
	\frac{\ftrlobj{}(\genrnvec{} + \alpha (\genrndumvec{}-\genrnvec{})) - \ftrlobj{}(\genrnvec{})}{\alpha}
	= \lim_{\alpha \to 0} \frac{\ftrlobj{}(\genrndumvec{\alpha}) - \ftrlobj{}(\genrnvec{})}{\alpha}. 
\]
By assumption, this implies that for some $\alpha>0$, $\ftrlobj{}(\genrndumvec{\alpha}) < \ftrlobj{}(\genrnvec{})$. However, since $\genrndumvec{\alpha} \in \convexset$ for all $\alpha \in [0,1]$ by convexity, this contradicts the optimality of $\genrnvec{}$.
\end{proof}

Next, define the functional Bregman divergence for any convex $\convexset \subseteq \rvspace(\basemeasvec)$ and any $\ftrlobj{}:\convexset\to \Reals$ and $\genrnvec{}, \genrndumvec{}\in\rvspace(\basemeasvec)$ by
\*[
	\bregman{\ftrlobj{}}{\genrnvec{}}{\genrndumvec{}}
	= \ftrlobj{}(\genrnvec{}) - \ftrlobj{}(\genrndumvec{})
	- \funcdiff{\ftrlobj{}}{\genrndumvec{}}{\genrnvec{}-\genrndumvec{}}.
\]

We are now able to prove our generic \FTRL{} bound, \cref{lem:gen-ftrl-decomp}, which we recall here for completeness.

\FTRLdecomp*

\begin{proof}[Proof of \cref{lem:gen-ftrl-decomp}]
For all $t$, let
\*[
	\ftrlobj{t}(\genrnvec{})
	  & = \innermeas[0]{\Lossvec{t-1}}{\genrnvec{}}{\basemeasvec} + \frac{1}{\lr{t}} \gendiv{\gendivfun}{\basemeasvec}{\genrnvec{}}.
\]
By the trivial extension of \citet[Lemma~7.1]{orabona2019} beyond $\Reals^d$, we have
\*[
	\regret{T}{\convextargetelem}
	&\leq \frac{1}{\lr{T+1}}\gendivrn{\gendivfun}{\basemeasvec}{\rnderiv{\gentargetvec{}}{\basemeasvec}} - \frac{1}{\lr{1}} \min_{\genrnvec{}\in\probrvspace(\basemeasvec)}\gendiv{\gendivfun}{\basemeasvec}{\genrnvec{}} + \sum_{t=1}^T \sbra{\ftrlobj{t}(\genrnvec{t}) - \ftrlobj{t+1}(\genrnvec{t+1}) +\inner[0]{\lossvec{t}}{\genrnvec{t}}}.
\]
Then,
\[\label{eqn:cts-ftrl-equality}
  &\hspace{-1em}
  \ftrlobj{t}(\genrnvec{t}) - \ftrlobj{t+1}(\genrnvec{t+1}) +\inner[0]{\lossvec{t}}{\genrnvec{t}}\\
    &= \ftrlobj{t}(\genrnvec{t}) - \ftrlobj{t}(\genrnvec{t+1}) + 
    \inner[0]{\lossvec{t}}{\genrnvec{t} - \genrnvec{t+1}}
    - \Big(\frac{1}{\lr{t+1}} - \frac{1}{\lr{t}}\Big) \gendiv{\gendivfun}{\basemeasvec}{\genrnvec{t+1}}.
\]

By \cref{lem:gen-first-order-optimality} and the definition of $\genrnvec{t}$,
\*[
	\bregman{\ftrlobj{t}}{\genrnvec{t+1}}{\genrnvec{t}}
	\leq \ftrlobj{t+1}(\genrnvec{t+1}) - \ftrlobj{t}(\genrnvec{t}).
\]
Thus, substituting this into \cref{eqn:cts-ftrl-equality},
\*[
  &\hspace{-1em}
  \ftrlobj{t}(\genrnvec{t}) - \ftrlobj{t+1}(\genrnvec{t+1}) +\inner[0]{\lossvec{t}}{\genrnvec{t}}\\
    &\leq -\frac{1}{\lr{t}}\bregman{\gendiv{\gendivfun}{\basemeasvec}{\cdot}}{\genrnvec{t+1}}{\genrnvec{t}} + 
    \inner[0]{\lossvec{t}}{\genrnvec{t} - \genrnvec{t+1}}
    - \Big(\frac{1}{\lr{t+1}} - \frac{1}{\lr{t}}\Big) \gendiv{\gendivfun}{\basemeasvec}{\genrnvec{t+1}},
\]
where we have used linearity of the functional Bregman divergence.

Next, for any $\genrnvec{}, \genrndumvec{} \in \probrvspace(\basemeasvec)$,
\[\label{eqn:gen-fdiv-lim}
	\funcdiff{\gendiv{\gendivfun}{\basemeasvec}{\cdot}}{\genrndumvec{}}{\genrnvec{}-\genrndumvec{}}
	&= \lim_{\alpha \to 0}
	\int\frac{ \sbra[1]{\gendivfun\rbra[1]{\genrndum{}{\genexpidx} + \alpha(\genrn{}{\genexpidx}-\genrndum{}{\genexpidx})} - \gendivfun\rbra[1]{\genrndum{}{\genexpidx}}}}{\alpha} \basemeas{\dee\genexpidx}.
\]
For any $\genexpidx\in\genexpset$ and $\alpha\in(0,1)$, since $\gendivfun'$ is increasing, by the mean value theorem
\[\label{eqn:gen-fdiv-bounds}
	&\hspace{-2em}\gendivfun'
	\rbra[2]{\min\cbra[2]{\inf_{\genexpidxdum\in\genexpset} \genrndum{}{\genexpidxdum} + \alpha(\genrn{}{\genexpidxdum}-\genrndum{}{\genexpidxdum}), \inf_{\genexpidxdum\in\genexpset} \genrndum{}{\genexpidxdum}}} \\
	&\leq
	\frac{ \sbra[1]{\gendivfun\rbra[1]{\genrndum{}{\genexpidx} + \alpha(\genrn{}{\genexpidx}-\genrndum{}{\genexpidx})} - \gendivfun\rbra[1]{\genrndum{}{\genexpidx}}}}{\alpha} \\
	&\leq 
	\gendivfun'
	\rbra[2]{\max\cbra[2]{\sup_{\genexpidxdum\in\genexpset} \genrndum{}{\genexpidxdum} + \alpha(\genrn{}{\genexpidxdum}-\genrndum{}{\genexpidxdum}), \sup_{\genexpidxdum\in\genexpset} \genrndum{}{\genexpidxdum}}}. 
\]

Recall the form of $\genrnvec{t}$ given by \cref{lem:gen-ftrl-soln-formula}.
For each round $t$, $\sup_{\genexpidx\in\genexpset} \cbra{-\lr{t}\Loss{t-1}{\genexpidx} + \konst^*}<\infty$ so $\sup_{\genexpidx\in\genexpset}\max\cbra{\genrn{t}{\genexpidx}, \genrn{t+1}{\genexpidx}} < \infty$. 
Thus, since by continuity $\gendivfun'$ is finite on $\domfdivfun(\basemeasvec{})$, for $\genrnvec{} = \genrnvec{t+1}$ and $\genrndumvec{}=\genrnvec{t}$ the upper bound of \cref{eqn:gen-fdiv-bounds} is finite.
If $\gendivfun'(0+) > -\infty$, the lower bound is also finite.
If $\gendivfun'(0+) = -\infty$, 
then
since $\Lossvec{t-1}$ is bounded, $\genrnvec{t}$ is bounded away from $0$ uniformly and thus the lower bound of \cref{eqn:gen-fdiv-bounds} is still finite.
Thus, for any $\gendivfun$ satisfying the conditions of the lemma, we can apply the bounded convergence theorem to \cref{eqn:gen-fdiv-lim} to obtain
\*[
	\bregman{\gendiv{\gendivfun}{\basemeasvec}{\cdot}}{\genrnvec{t+1}}{\genrnvec{t}}
	= \int \bregman{\gendivfun}{\genrn{t+1}{\genexpidx}}{\genrn{t}{\genexpidx}} \basemeas{\dee\genexpidx}.
\]
Thus,
\*[
  &\hspace{-2em}
  \ftrlobj{t}(\genrnvec{t}) - \ftrlobj{t+1}(\genrnvec{t+1}) +\innermeas[0]{\lossvec{t}}{\genrnvec{t}}{\basemeasvec}\\
    &\leq \int \rbra{\loss{t}{\genexpidx}-\mean{t}} \rbra{\genrn{t}{\genexpidx} - \genrn{t+1}{\genexpidx}} 
    - \frac{1}{\lr{t}} \bregman{\gendivfun}{\genrn{t+1}{\genexpidx}}{\genrn{t}{\genexpidx}}  \\
    &\qquad - \rbra{\frac{1}{\lr{t+1}} - \frac{1}{\lr{t}}} \gendivfun(\genrn{t+1}{\genexpidx}) \basemeas{\dee\genexpidx},
\]
where we have used that any $\mean{t}$ can be added since $\genrnvec{t}\in\probrvspace(\basemeasvec)$ for all $t$.

We split the proof now to consider the two choices of intermediate points.

\emph{Intermediate Point A}\\
By the mean value remainder form of Taylor's theorem there exists $\intgenrn{t+1}{\genexpidx}\in\convhull(\genrn{t}{\genexpidx},\genrn{t+1}{\genexpidx})$ such that
\*[
	\bregman{\gendivfun}{\genrn{t+1}{\genexpidx}}{\genrn{t}{\genexpidx}}
  & = \frac{1}{2}\gendivfun''(\intgenrn{t+1}{\genexpidx}) \rbra{\genrn{t}{\genexpidx} - \genrn{t+1}{\genexpidx}}^2.
\]
That is,
\[\label{eqn:gen-ftrl-pre-fenchel-A}
  &\hspace{-2em}
  \ftrlobj{t}(\genrnvec{t}) - \ftrlobj{t+1}(\genrnvec{t+1}) +\innermeas[0]{\lossvec{t}}{\genrnvec{t}}{\basemeasvec}\\
    &\leq \int \rbra{\loss{t}{\genexpidx}-\mean{t}} \rbra{\genrn{t}{\genexpidx} - \genrn{t+1}{\genexpidx}} 
    - \frac{1}{2\lr{t}}\gendivfun''(\intgenrn{t+1}{\genexpidx}) \rbra{\genrn{t}{\genexpidx} - \genrn{t+1}{\genexpidx}}^2  \\
    &\qquad - \rbra{\frac{1}{\lr{t+1}} - \frac{1}{\lr{t}}} \gendivfun(\genrn{t+1}{\genexpidx}) \basemeas{\dee\genexpidx}.
\]

Since $\gendivfun''$ is bounded away from zero, we can apply Fenchel-Young point-wise for each $\genexpidx\in\genexpset$ to obtain
\*[
	\rbra{\loss{t}{\genexpidx}-\mean{t}}\rbra{\genrn{t}{\genexpidx} - \genrn{t+1}{\genexpidx}} 
	\leq \frac{\lr{t}}{2} \frac{\rbra{\loss{t}{\genexpidx} - \mean{t}}^2}{\gendivfun''(\intgenrn{t+1}{\genexpidx})} + \frac{1}{2\lr{t}}\gendivfun''(\intgenrn{t+1}{\genexpidx}) \rbra{\genrn{t}{\genexpidx} - \genrn{t+1}{\genexpidx}}^2.
\]
Substituting into \cref{eqn:gen-ftrl-pre-fenchel-A} and summing over $t$ gives the result.

\emph{Intermediate Point B}\\
Alternatively, let
\*[
	\orthgenrnvec{t+1} \in \argmax_{\genrnvec{} \in \posrvspace(\basemeasvec)} \cbra[2]{-\inner[0]{\lossvec{t} - \mean{t}\mathbbm{1}}{\genmeasvec{\basemeasvec}{}}
    - \frac{1}{\lr{t}} \int \bregman{\gendivfun}{\genrn{}{\genexpidx}}{\genrn{t}{\genexpidx}} \basemeas{\dee\genexpidx}},
\]
so
\*[
  &\hspace{-2em}
  \ftrlobj{t}(\genrnvec{t}) - \ftrlobj{t+1}(\genrnvec{t+1}) +\innermeas[0]{\lossvec{t}}{\genrnvec{t}}{\basemeasvec}\\
    &\leq \int \rbra{\loss{t}{\genexpidx}-\mean{t}} \rbra{\genrn{t}{\genexpidx} - \orthgenrn{t+1}{\genexpidx}}  \\
    &\qquad - \frac{1}{\lr{t}} \bregman{\gendivfun}{\orthgenrn{t+1}{\genexpidx}}{\genrn{t}{\genexpidx}} - \rbra{\frac{1}{\lr{t+1}} - \frac{1}{\lr{t}}} \gendivfun(\genrn{t+1}{\genexpidx}) \basemeas{\dee\genexpidx}.
\]

Again by the mean value remainder form of Taylor's theorem,
there exists $\inttgenrn{t+1}{\genexpidx}\in\convhull(\genrn{t}{\genexpidx},\orthgenrn{t+1}{\genexpidx})$ such that
\*[
	\bregman{\gendivfun}{\orthgenrn{t+1}{\genexpidx}}{\genrn{t}{\genexpidx}}
  & = \frac{1}{2}\gendivfun''(\inttgenrn{t+1}{\genexpidx}) \rbra{\genrn{t}{\genexpidx} - \orthgenrn{t+1}{\genexpidx}}^2.
\]
The statement follows from the same point-wise application of Fenchel-Young as for intermediate point A. 

Finally, note that for each $t$ and $\genexpidx\in\genexpset$, solving the gradient equation and applying a very similar argument to the proof of \cref{lem:gen-ftrl-soln-formula} gives
\*[
	\orthgenrn{t+1}{\genexpidx}
	= [\gendivfun']^{-1}(\gentruncfun\rbra{-\lr{t}(\loss{t}{\genexpidx}-\mean{t}) + \gendivfun'(\genrn{t}{\genexpidx})})
	\leq \genrn{t}{\genexpidx},
\]
as long as $\inf_{\genexpidx\in\genexpset} \loss{t}{\genexpidx} \geq \mean{t}$.
\end{proof}

\section{Proofs for $\fdivfun$-divergence \FTRL{} bounds}
\label{sec:proof-continuous-KL}

\ContinuousKL*

\begin{proof}[Proof of \cref{thm:continuous-KL}]
We start by applying \cref{lem:gen-ftrl-decomp} using the intermediate point $\inttgenrnvec{t+1}$ and with $\mean{t} \equiv 0$.
Since $\fdivfun'' \cdot (\fdivfun+\fconstA) \geq \fconstB$ and either $1/\fdivfun''$ or $\fdivfun$ is increasing, this simplifies to
\*[
  \regret{T}{\conttargetvec}
    &\leq \frac{1}{\lr{T+1}} \gendiv{\gendivfun}{\basemeasvec}{\rnderivflat{\gentargetvec{}}{\basemeasvec}}
    + \frac{1}{2\fconstB} \sum_{t=1}^T \lr{t} \int \fdivfun(\contrn{t}{\contexpidx})\rbra{\loss{t}{\contexpidx}}^2  \basemeas{\dee\contexpidx} 
    + \frac{\fconstA}{2\fconstB} \sum_{t=1}^T \lr{t} \int \rbra{\loss{t}{\contexpidx}}^2  \basemeas{\dee\contexpidx} \\
    &\qquad - \sum_{t=1}^T\rbra{\frac{1}{\lr{t+1}} - \frac{1}{\lr{t}}} \gendiv{\fdivfun}{\basemeasvec}{\contrnvec{t+1}}.
\]

Taking $\lr{t}=c/\sqrt{t}$ for some $c>0$ gives
\*[
	\frac{1}{\lr{t+1}} - \frac{1}{\lr{t}}
	\geq \frac{1}{2c\sqrt{t+1}}.
\]
Finally,
\*[
	\sum_{t=1}^T \frac{1}{\sqrt{t}}
	\leq 1 + \int_1^T \frac{1}{\sqrt{x}}\dee x
	\leq 2\sqrt{T}.
\]
Thus, using that $\lossvec{t} \leq 1$ and $\gendiv{\fdivfun}{\basemeasvec}{\contrnvec{t}} \geq 0$ (by Jensen's and $\fdivfun(1)=0$),
\*[
  \regret{T}{\conttargetvec}
    &\leq \frac{\sqrt{T+1}}{c} \gendiv{\gendivfun}{\basemeasvec}{\rnderivflat{\gentargetvec{}}{\basemeasvec}}
    + \sum_{t=1}^T \frac{c}{2\fconstB\sqrt{t}} \gendiv{\fdivfun}{\basemeasvec}{\contrnvec{t}}
    + \frac{\fconstA \const}{\fconstB}\sqrt{T}
    - \sum_{t=1}^T \frac{1}{2c\sqrt{t+1}} \gendiv{\fdivfun}{\basemeasvec}{\contrnvec{t+1}} \\
    &\leq \frac{\sqrt{T+1}}{c} \gendiv{\gendivfun}{\basemeasvec}{\rnderivflat{\gentargetvec{}}{\basemeasvec}}
    + \frac{\fconstA \const}{\fconstB}\sqrt{T}
    + \sum_{t=2}^T \rbra{\frac{c}{2\fconstB\sqrt{t}} \gendiv{\fdivfun}{\basemeasvec}{\contrnvec{t}}
    - \frac{1}{2c\sqrt{t}} \gendiv{\fdivfun}{\basemeasvec}{\contrnvec{t}}},
\]
where the last step uses that $\contrnvec{1}=\basemeasvec$.
Taking $c=\sqrt{\fconstB}$ makes the final summation over $t$ non-positive while minimizing the first term.
\end{proof}

We also have the following result, which implies \cref{cor:kl-bound}.

\begin{lemma}\label{lem:KL-leq-sqrtDf}
  If $\fdivfun:\PosReals\to \Reals$ and there exists $\konst_1\in\Reals$ and $\konst_2\in\PosReals$ with $f(r)\leq \konst_1+\konst_2 r \sqrt{\log(1+r)}$ for all $r\in\PosReals$, 
  then for all $\mu\in\probmeasll{\basemeasvec}\genexpspace$
  \*[
    \fdivergence{\fdivfun}{\mu}{\basemeasvec} \leq \konst_1+\konst_2\sqrt{1+\KL{\mu}{\basemeasvec}}.
  \]
\end{lemma}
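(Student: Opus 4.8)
The plan is to reduce the statement to two elementary scalar inequalities and one application of Cauchy--Schwarz. Throughout, let $\genrnvec{}=\rnderivflat{\mu}{\basemeasvec}$ denote the Radon--Nikodym derivative, so that $\fdivergence{\fdivfun}{\mu}{\basemeasvec}=\int\fdivfun(\genrn{}{\genexpidx})\,\basemeas{\dee\genexpidx}$; since $\mu\ll\basemeasvec$ is a probability measure and (as in the invocation from \cref{cor:kl-bound}) $\basemeasvec$ is a probability measure, we also have $\int\genrn{}{\genexpidx}\,\basemeas{\dee\genexpidx}=\mu(\genexpset)=1$. If $\KL{\mu}{\basemeasvec}=\infty$ the bound is trivial, so assume it is finite. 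First I would establish the pointwise inequality $r\log(1+r)\le 1+r\log r$ for all $r\ge 0$ (with the convention $0\log 0:=0$): for $r>0$, write $\log(1+r)=\log r+\log(1+1/r)$ and use $\log(1+u)\le u$ to get $\log(1+r)\le\log r+1/r$, then multiply by $r$; the case $r=0$ is immediate. Integrating this against $\basemeasvec$ and recalling $\KL{\mu}{\basemeasvec}=\int\genrn{}{\genexpidx}\log\genrn{}{\genexpidx}\,\basemeas{\dee\genexpidx}$ gives
\*[
  \int\genrn{}{\genexpidx}\log(1+\genrn{}{\genexpidx})\,\basemeas{\dee\genexpidx}
    &\le 1+\KL{\mu}{\basemeasvec}<\infty.
\]

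Next I would apply the Cauchy--Schwarz inequality to the factorization $r\sqrt{\log(1+r)}=\sqrt{r}\cdot\sqrt{r\log(1+r)}$ (both factors nonnegative on $\PosReals$), using $\int\genrn{}{\genexpidx}\,\basemeas{\dee\genexpidx}=1$ together with the previous display:
\*[
  \int\genrn{}{\genexpidx}\sqrt{\log(1+\genrn{}{\genexpidx})}\,\basemeas{\dee\genexpidx}
    &\le \rbra{\int\genrn{}{\genexpidx}\,\basemeas{\dee\genexpidx}}^{1/2}\rbra{\int\genrn{}{\genexpidx}\log(1+\genrn{}{\genexpidx})\,\basemeas{\dee\genexpidx}}^{1/2} \\
    &\le \sqrt{1+\KL{\mu}{\basemeasvec}}.
\]
Finally, since $\konst_2\ge 0$, I would integrate the hypothesis $\fdivfun(r)\le\konst_1+\konst_2 r\sqrt{\log(1+r)}$ against the probability measure $\basemeasvec$ --- legitimate because, by the display just derived, the positive part of $\fdivfun(\genrnvec{})$ is dominated by the $\basemeasvec$-integrable function $\max\{\konst_1,0\}+\konst_2\,\genrnvec{}\sqrt{\log(1+\genrnvec{})}$, so $\fdivergence{\fdivfun}{\mu}{\basemeasvec}$ is a well-defined element of $[-\infty,\infty)$ --- to obtain
\*[
  \fdivergence{\fdivfun}{\mu}{\basemeasvec}
    &\le \konst_1+\konst_2\int\genrn{}{\genexpidx}\sqrt{\log(1+\genrn{}{\genexpidx})}\,\basemeas{\dee\genexpidx} \\
    &\le \konst_1+\konst_2\sqrt{1+\KL{\mu}{\basemeasvec}},
\]
which is the claim.

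I do not anticipate a serious obstacle: the whole argument is a short chain of $\log(1+u)\le u$, Cauchy--Schwarz, and integrability bookkeeping. The only points demanding mild care are the degenerate conventions ($0\log 0:=0$ and the trivial case $\KL{\mu}{\basemeasvec}=\infty$) and checking that the quantity under the square root is nonnegative; the latter holds because $\KL{\mu}{\basemeasvec}\ge 0$ by Jensen's inequality applied to the convex map $r\mapsto r\log r$ with $\int\genrn{}{\genexpidx}\,\basemeas{\dee\genexpidx}=1$, exactly as used to obtain $\gendiv{\fdivfun}{\basemeasvec}{\cdot}\ge 0$ in the proof of \cref{thm:continuous-KL}.
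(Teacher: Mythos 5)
Your proposal is correct and is essentially the paper's argument: both proofs apply the hypothesis pointwise, pass the square root through the integral, and finish with the same elementary bound $r\log(1+r)\le 1+r\log r$. The only cosmetic difference is that you invoke Cauchy--Schwarz on $\sqrt{r}\cdot\sqrt{r\log(1+r)}$ against $\basemeasvec$ where the paper invokes Jensen for $\sqrt{\cdot}$ against $\mu$; after the change of measure $\genrnvec{}\,\basemeas{\dee\genexpidx}=\mu(\dee\genexpidx)$ these two steps are the same inequality. Your version also spells out a short proof of the scalar inequality (via $\log(1+1/r)\le 1/r$) and records the integrability and $\KL{\mu}{\basemeasvec}=\infty$ degeneracies that the paper leaves implicit; that extra care is welcome but not a different proof.
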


\begin{proof}[Proof of \cref{lem:KL-leq-sqrtDf}]
  \*[
    \fdivergence{\fdivfun}{\mu}{\basemeasvec}
    & = \int \fdivfun\rbra{\rnderiv{\mu}{\basemeasvec}(\genexpidx)}\basemeasvec(\dee \genexpidx) \\
    & \leq \konst_1+\konst_2 \int \rnderiv{\mu}{\basemeasvec}(\genexpidx) \sqrt{1+\log\rbra{\rnderiv{\mu}{\basemeasvec}(\genexpidx)}}\, \basemeasvec(\dee \genexpidx) \\
    & = \konst_1+\konst_2 \int \sqrt{\log\rbra{1+\rnderiv{\mu}{\basemeasvec}(\genexpidx)}}\, \mu(\dee \genexpidx)\\
    &\leq \konst_1+\konst_2  \sqrt{\int\log\rbra{1+\rnderiv{\mu}{\basemeasvec}(\genexpidx)}\, \mu(\dee \genexpidx)}\\
    & = \konst_1+\konst_2  \sqrt{\int\rnderiv{\mu}{\basemeasvec}(\genexpidx)\log\rbra{1+\rnderiv{\mu}{\basemeasvec}(\genexpidx)}\, \basemeasvec(\dee \genexpidx)}\\
    &\leq \konst_1+\konst_2  \sqrt{1+\KL{\mu}{\basemeasvec}}.
  \]
  The first inequality is by assumption, the second is Jensen's, and the third is because $1+ (r\log r) \geq r\log(r+1)$ for all $r>0$.
\end{proof}

Finally, we restate and prove \cref{fact:gen-variance-bound}.

\VarianceBoundCoroll*

\begin{proof}[Proof of \cref{fact:gen-variance-bound}]
Note that, since $\gendivfun(1/\basemeas{\genexpset}) \geq 0$, Jensen's inequality implies that for any $\genrnvec{} \in \probrvspace$
\*[
  \gendiv{\gendivfun}{\basemeasvec}{\genrnvec{}}
  = \int \gendivfun(\genrn{}{\genexpidx}) \basemeas{\dee \genexpidx}
  \geq \basemeas{\genexpset} \gendivfun\rbra{\frac{1}{\basemeas{\genexpset}}\int \genrn{}{\genexpidx} \basemeas{\dee \genexpidx}}
  \geq 0.
\]

First, suppose $1/\gendivfun''(x) \leq \Const x$.
Since $\lr{t}$ is decreasing, \cref{lem:gen-ftrl-decomp} with intermediate point A implies
\*[
  \regret{T}{\gentargetvec{}}
  &\leq \frac{1}{\lr{T+1}}\gendivrn{\gendivfun}{\basemeasvec}{\rnderiv{\gentargetvec{}}{\basemeasvec}}
  + \sum_{t=1}^T \frac{\Const \lr{t}}{2} \int \intgenrn{t+1}{\genexpidx} \rbra{\loss{t}{\genexpidx} - \mean{t}}^2 \basemeas{\dee\genexpidx}.
\]
Taking $\mean{t} = \int \loss{t}{\genexpidx} \intgenrn{t+1}{\genexpidx} \basemeas{\dee\genexpidx}$ gives
\*[
  \regret{T}{\gentargetvec{}}
  &\leq \frac{1}{\lr{T+1}}\gendivrn{\gendivfun}{\basemeasvec}{\rnderiv{\gentargetvec{}}{\basemeasvec}}
  + \frac{\sqrt{\Const}}{2} \sum_{t=1}^T \frac{\Var_{\genexpidx \sim \intgenmeasvec{\basemeasvec}{t+1}} \loss{t}{\genexpidx}}{\sqrt{1/2 + \sum_{s=1}^{t-2} \Var_{\genexpidx \sim \intgenmeasvec{\basemeasvec}{s+1}} \loss{s}{\genexpidx}}} \\
  &\leq \frac{1}{\lr{T+1}}\gendivrn{\gendivfun}{\basemeasvec}{\rnderiv{\gentargetvec{}}{\basemeasvec}}
  + \frac{\sqrt{\Const}}{2} \sum_{t=1}^{T} \frac{\Var_{\genexpidx \sim \intgenmeasvec{\basemeasvec}{t+1}} \loss{t}{\genexpidx}}{\sqrt{\sum_{s=1}^{t} \Var_{\genexpidx \sim \intgenmeasvec{\basemeasvec}{s+1}} \loss{s}{\genexpidx}}},
\]
where we have used that the variance of a random variable in $[0,1]$ is bounded by $1/4$ (see, e.g., Lemma~8 of \citep{semiadv}).
By Lemma~4.13 of \citet{orabona2019}, this gives
\*[
  \regret{T}{\gentargetvec{}}
  &\leq \gendivrn{\gendivfun}{\basemeasvec}{\rnderiv{\gentargetvec{}}{\basemeasvec}}  \sqrt{\Const\sbra{1/2 + \sum_{t=1}^{T-1} \Var_{\genexpidx \sim \intgenmeasvec{\basemeasvec}{t+1}} \loss{t}{\genexpidx}}} \
  + \sqrt{\Const \sum_{t=1}^{T} \Var_{\genexpidx \sim \intgenmeasvec{\basemeasvec}{t+1}} \loss{t}{\genexpidx}}.
\]

Next, suppose $1/\gendivfun''(\genexpidx) \leq \Const$. 
Since $\lr{t}$ is decreasing, \cref{lem:gen-ftrl-decomp} with either intermediate point implies
\*[
  \regret{T}{\gentargetvec{}}
  &\leq \frac{1}{\lr{T+1}}\gendivrn{\gendivfun}{\basemeasvec}{\rnderiv{\gentargetvec{}}{\basemeasvec}}
  + \sum_{t=1}^T \frac{\basemeas{\genexpset} \Const \lr{t}}{2} \int  \rbra{\loss{t}{\genexpidx} - \mean{t}}^2 \baseprob{\dee\genexpidx}.
\]
Taking $\mean{t} = \int \loss{t}{\genexpidx} \baseprob{\genexpidx}$ gives
\*[
  \regret{T}{\gentargetvec{}}
  &\leq \frac{1}{\lr{T+1}}\gendivrn{\gendivfun}{\basemeasvec}{\rnderiv{\gentargetvec{}}{\basemeasvec}}
  + \frac{\sqrt{\basemeas{\genexpset} \Const}}{2} \sum_{t=1}^T \frac{\Var_{\genexpidx \sim \baseprobvec} \loss{t}{\genexpidx}}{\sqrt{1/4 + \sum_{s=1}^{t-1} \Var_{\genexpidx \sim \baseprobvec} \loss{s}{\genexpidx}}} \\
  &\leq \frac{1}{\lr{T+1}}\gendivrn{\gendivfun}{\basemeasvec}{\rnderiv{\gentargetvec{}}{\basemeasvec}}
  + \frac{\sqrt{\basemeas{\genexpset} \Const}}{2} \sum_{t=1}^T \frac{\Var_{\genexpidx \sim \baseprobvec} \loss{t}{\genexpidx}}{\sqrt{\sum_{s=1}^{t} \Var_{\genexpidx \sim \baseprobvec} \loss{s}{\genexpidx}}} 
\]
By Lemma~4.13 of \citet{orabona2019}, this gives
\*[
  \regret{T}{\gentargetvec{}}
  &\leq \gendivrn{\gendivfun}{\basemeasvec}{\rnderiv{\gentargetvec{}}{\basemeasvec}}  \sqrt{\basemeas{\genexpset} \Const \sbra{1/4 + \sum_{t=1}^{T} \Var_{\genexpidx \sim \baseprobvec} \loss{t}{\genexpidx}}} \
  + \sqrt{\basemeas{\genexpset} \Const \sum_{t=1}^{T} \Var_{\genexpidx \sim \baseprobvec} \loss{t}{\genexpidx}}.
\]
\end{proof}

\section{Proof of lower bound for quantile regret}
\label{sec:lower-bound}

\LowerBound*

\begin{proof}[Proof of \cref{thm:lower-bound}]
Let 
$\normalcdf(z) \defas \Pr(Z\leq z)$ for $Z\distas\normaldist(0,1)$ be the normal cumulative distribution function and
$\normalccdf(z) \defas 1-\normalcdf(z)$ be the normal complementary cumulative distribution function.
Let $\loss{t}{\expidx} \sim \bernoullidist(1/2)$ for all $t\in[T]$ and $\expidx\in\experts$.
For a sequence of real values, $S=\rbra{s_j}_{j\in\range{m}}$, denote its empirical cumulative distribution function by $\empiricalcdf{S}(x) \defas \sum_{j\in \range{m}} \ind{x\leq s_j}$.
For a non-decreasing \emph{c\`{a}dl\`{a}g} function $F$ (right continuous and left limits exist), let its improper inverse be $F^+(y) = \inf\set{x \setdelim F(x)\geq y}$.
Let $\tildeLossvec{t} = \Lossvec{t} - (t/2)\mathbbm{1}_{\numexperts}$ be the centred cumulative losses. Clearly,
\*[
  \EE_{\lossvec{1:T}\sim\lossdistn{}^{\otimes T}}\regret{T}{\orderidx{\epsidx}}
  & = - \EE\empiricalcdf{\tildeLossvec{T}}^+(\epsilon).
\]

Without loss of generality, we can enrich our probability space so that there exists \iid{} standard normal random variables $Z_T(\expidx)\distiidas\normaldist(0,1)$ with $\Loss{T}{\expidx} = F_{\binomialdist(T,1/2)}^+ \circ \normalcdf(Z_T(\expidx))$.
From \citet[Lemma~4]{bretagnolle1989hungarian} this coupling satisfies
\*[
  \abs{\Loss{T}{\expidx} - T/2 - \frac{\sqrt{T}}{2}Z_T(\expidx)} \leq 1 + \frac{Z_T(\expidx)^2}{8},
\]
which implies
\*[
  \EE \max_{\expidx\in\experts}\abs{\Loss{T}{\expidx}- T/2 - \frac{\sqrt{T}}{2}Z_T(\expidx)} \leq 1 + \frac{1}{8}
    \EE \max_{\expidx\in\experts}Z_T(\expidx)^2.
\]
Now, 
\*[
  \EE \max_{\expidx\in\experts}Z_T(\expidx)^2
    & \leq \inf_{\lambda\in\ointer{0,1/2}}\frac{1}{\lambda}\log \sum_{\expidx\in\experts}\EE \exp(\lambda Z_T(\expidx)^2)\\
    & = \inf_{\lambda\in \ointer{0,1/2}}\frac{1}{\lambda}\log \frac{\numexperts}{\sqrt{1-2\lambda}}\\
    & = \inf_{\lambda\in \ointer{0,1/2}}\rbra{\frac{\log\numexperts}{\lambda} + \frac{1}{\lambda}\log\frac{1}{\sqrt{1-2\lambda}}}.
\]
Overapproximating with $\lambda=1/4$ gives
\*[
  \EE \max_{\expidx\in\experts}Z_T(\expidx)^2
    & \leq 4\log\numexperts + 2\log 2.
\]
  
From \citet{ali1965some}, for $\epsilon > 3/4$
\*[
  - \EE\empiricalcdf{Z_T}^+(\epsilon)
    & = \EE\empiricalcdf{-Z_T}^+(1-\epsilon) \\
    & \geq \EE\normalcdf^+\rbra{\frac{\floor{(1-\epsilon) N}}{N+1}} \\
    & = \EE\normalccdf^{-1}\rbra{1-\frac{\floor{(1-\epsilon) N}}{N+1}}.
\]
Then, since the coupling is monotone,
transformations of the order statistics are equal to the order statistics of the transformation, and hence
\*[
  &\hspace{-2em}\abs{\empiricalcdf{-Z_T}^+(1-\epsilon) -  \empiricalcdf{\rbra{-\frac{2}{\sqrt{T}}\rbra{\Lossvec{T} - T/2}}}^+(1-\epsilon)} \\
    & \leq \max_{\expidx\in\experts}\abs{Z_T(\expidx) - \rbra{\frac{2}{\sqrt{T}}\rbra{\Loss{T}{\expidx} - T/2}}}.
\]
Thus
\*[
\EE \abs{\empiricalcdf{-Z_T}^+(1-\epsilon) -  \empiricalcdf{\rbra{-\frac{2}{\sqrt{T}}\rbra{\Lossvec{T} - T/2}}}^+(1-\epsilon)}
  & \leq  \frac{4\log\numexperts + 2\log 2}{\sqrt{T}},
\]
so
\*[
  \EE\empiricalcdf{\rbra{-\frac{2}{\sqrt{T}}\rbra{\Lossvec{T} - T/2}}}^+(1-\epsilon) \geq \normalccdf^{-1}\rbra{1-\frac{(1-\epsilon) \numexperts}{\numexperts+1}} - \frac{4\log\numexperts + 2\log 2}{\sqrt{T}},
\]
which implies
\*[
  -\EE\empiricalcdf{\tildeLossvec{T}}^+(\epsilon) &\geq \frac{\sqrt{T}}{2}\normalccdf^{-1}\rbra{\frac{\numexperts}{\numexperts+1}\epsilon + \frac{1}{\numexperts+1}} - 2\log\numexperts - \log 2\\
  &\geq \frac{\sqrt{T}}{2}\normalccdf^{-1}\rbra{2\epsilon} - 2\log\numexperts - \log 2.
\]

Now, applying \cref{lem:tight-normal-tail-lowerbound}, for $\epsilon<1/4$
\*[
  -\EE\empiricalcdf{\tildeLossvec{T}}^+(\epsilon) \geq \sqrt{\frac{T}{2}\rbra{  \log(1/\epsilon) -2\log2+1/\pi}} - \sqrt{2/\pi} - 2\log\numexperts - \log 2.
\]
\end{proof}

\begin{lemma}\label{lem:tight-normal-tail-lowerbound}
For $x>0$,
\*[
  \normalccdf(x) \geq \frac{\exp(1/\pi)\exp\rbra[0]{-(x+\sqrt{2/\pi})^2/2}}{2},
\]
and for $0<y<1/2$,
\*[
  \normalccdf^{-1}(y) \geq \sqrt{2 \log(1/y) -2\log2+2/\pi} - \sqrt{2/\pi}.
\]  
\end{lemma}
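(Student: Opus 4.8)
The plan is to prove the first inequality and then obtain the second simply by inverting the (strictly monotone) map $\normalccdf$. Write $a \defas \sqrt{2/\pi}$, and note $a^2/2 = 1/\pi$, so the first claim is equivalent to $\normalccdf(x)\,e^{x^2/2 + a x} \ge \tfrac12$ for $x > 0$. I would set $\rho(x) = \normalccdf(x)\,e^{x^2/2 + a x}$, so that $\rho(0) = \tfrac12$ and, using $\normalccdf' = -\normalpdf$,
\[
  \rho'(x) = e^{x^2/2 + a x}\bigl[(x+a)\normalccdf(x) - \normalpdf(x)\bigr].
\]
Hence it suffices to prove the pointwise bound $(x+a)\normalccdf(x) \ge \normalpdf(x)$ for all $x \ge 0$: granting this, $\rho$ is nondecreasing on $[0,\infty)$, so $\rho(x) \ge \rho(0) = \tfrac12$, which is the first inequality (and it in fact holds with equality at $x=0$).

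To establish $(x+a)\normalccdf(x) \ge \normalpdf(x)$, I would use $\normalpdf'(t) = -t\,\normalpdf(t)$ to compute, for $t \ge 0$ (so $t+a>0$),
\[
  \frac{d}{dt}\Bigl(-\frac{\normalpdf(t)}{t+a}\Bigr) = \normalpdf(t)\,\frac{t^2 + a t + 1}{(t+a)^2},
\]
and integrate from $x$ to $\infty$ (the boundary term at $\infty$ vanishes) to get $\normalpdf(x)/(x+a) = \int_x^\infty \normalpdf(t)\,\tfrac{t^2+at+1}{(t+a)^2}\,dt$. Subtracting this from $\normalccdf(x) = \int_x^\infty \normalpdf(t)\,dt$ gives
\[
  \phi(x) \defas \normalccdf(x) - \frac{\normalpdf(x)}{x+a} = \int_x^\infty \normalpdf(t)\,\frac{a t + a^2 - 1}{(t+a)^2}\,dt.
\]
Now $\phi(0) = \tfrac12 - \normalpdf(0)/a = \tfrac12 - \tfrac12 = 0$, both terms of $\phi$ vanish as $x \to \infty$, and $\phi'(x) = -\normalpdf(x)\,\tfrac{a x + a^2 - 1}{(x+a)^2}$ is positive on $[0,t_0)$ and negative on $(t_0,\infty)$, where $t_0 = (1 - a^2)/a > 0$ since $a^2 = 2/\pi < 1$. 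Therefore $\phi$ increases on $[0,t_0]$ and decreases on $[t_0,\infty)$; together with $\phi(0) = \lim_{x\to\infty}\phi(x) = 0$ this forces $\phi \ge 0$ on $[0,\infty)$, which is exactly the desired pointwise inequality.

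For the second inequality, fix $0 < y < \tfrac12$. Since $\normalccdf$ is continuous and strictly decreasing with $\normalccdf(0) = \tfrac12$, the point $x \defas \normalccdf^{-1}(y)$ is well-defined and strictly positive, so the first inequality applies and gives $y = \normalccdf(x) \ge \tfrac12 e^{1/\pi} e^{-(x+a)^2/2}$, i.e.\ $(x+a)^2 \ge 2\log\!\tfrac{e^{1/\pi}}{2y} = 2\log(1/y) - 2\log 2 + 2/\pi$. The right-hand side is nonnegative because $y < \tfrac12$ gives $\log(1/y) > \log 2 > \log 2 - 1/\pi$, so taking square roots (both sides nonnegative and $x + a > 0$) and rearranging yields $\normalccdf^{-1}(y) = x \ge \sqrt{2\log(1/y) - 2\log 2 + 2/\pi} - \sqrt{2/\pi}$.

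The main obstacle is the sharp pointwise bound $(x+a)\normalccdf(x) \ge \normalpdf(x)$ (equivalently, the hazard-rate bound $\normalpdf/\normalccdf \le x + \sqrt{2/\pi}$): it holds with equality at $x=0$, so no crude Gaussian tail estimate is strong enough, and the constant $\sqrt{2/\pi}$ must be used exactly. The argument above works precisely because $-\normalpdf(t)/(t+a)$ is an \emph{exact} antiderivative of $\normalpdf(t)\tfrac{t^2+at+1}{(t+a)^2}$, so the residual $\phi$ has a clean integral form; and because $\phi$ vanishes at both ends of $[0,\infty)$ and is unimodal, its sign follows with no further estimation.
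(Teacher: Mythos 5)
Your proof is correct, and it takes a genuinely different and more self-contained route than the paper's. The paper works in log-space: it sets $h(x) = \log\normalccdf(x) + (x+\sqrt{2/\pi})^2/2 - 1/\pi + \log 2$, checks $h(0)=h'(0)=0$, and proves $h''>0$ on $(0,\infty)$ by contradiction, which requires invoking Birnbaum's strict hazard-rate inequality $\normalpdf(x)/\normalccdf(x) < \tfrac{1}{2}\bigl(\sqrt{4+x^2}+x\bigr)$ \citep{birnbaum1942inequality} as an external ingredient; the differential inequality $h(0)=h'(0)=0$, $h''>0$ then gives $h\ge 0$. You instead work directly with $\rho(x)=\normalccdf(x)e^{x^2/2+ax}$ (where $a=\sqrt{2/\pi}$), observe $\rho(0)=1/2$, and reduce the target to the first-order statement $\rho'\ge 0$, i.e.\ $(x+a)\normalccdf(x)\ge\normalpdf(x)$ — a strictly weaker requirement than the log-convexity the paper establishes. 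You then prove that hazard-rate bound from scratch: you recognize $-\normalpdf(t)/(t+a)$ as an exact antiderivative of $\normalpdf(t)\,(t^2+at+1)/(t+a)^2$, express the residual $\phi(x)=\normalccdf(x)-\normalpdf(x)/(x+a)$ as an integral whose integrand changes sign exactly once, and deduce $\phi\ge 0$ from $\phi(0)=\lim_{x\to\infty}\phi(x)=0$ together with unimodality. The net effect is that your argument is fully self-contained where the paper leans on a citation, at the modest cost of the extra antiderivative computation; both approaches hinge on the same sharpness at $x=0$ (the constant $\sqrt{2/\pi}$ forces equality there, so no crude tail bound works), and your handling of the second inequality by inverting the first is the same as the paper's.
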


\begin{proof}
The second bound follows from the first, so we will only prove the first.
Let $\normalpdf(x)$ denote $\normalcdf'(x) = \exp(-x^2/2) / \sqrt{2\pi}$, and
\*[
  h(x) 
    & = \log\normalccdf(x) - \log\frac{\exp(1/\pi)\exp\rbra[0]{-(x+\sqrt{2/\pi})^2/2}}{2} \\
    &= \log\normalccdf(x)- {1}/{\pi}+\log(2) +(x+\sqrt{2/\pi})^2/2.
\]
Then, $h(0)=0$ and
\*[
  h'(x) = -\normalpdf(x)/\normalccdf(x) + (x+\sqrt{2/\pi}),
\]
so $h'(0)=0$ as well.

Next,
\*[
  h''(x) = x\normalpdf(x)/\normalccdf(x) - \rbra{\normalpdf(x)/\normalccdf(x)}^2+1,
\]
so $h''(0)=1-2/\pi>0$.

Suppose towards a contradiction that $h''(x_0)=0$ for some $x_0>0$. Rearranging gives
\*[
  x_0 = \normalpdf(x_0)/\normalccdf(x_0) -\rbra{\normalpdf(x_0)/\normalccdf(x_0)}^{-1},
\]
so
\*[
  \normalpdf(x_0)/\normalccdf(x_0) = \frac{\sqrt{4+x_0^2}+x_0}{2}.
\]
However, from \citet{birnbaum1942inequality}\footnote{The stated result has a non-strict inequality, however the proof by Jensen's inequality clearly leads to a strict inequality since the function in the application of Jensen's inequality, $t\mapsto 1/t$ is strictly convex and the distribution to which it is applied is not degenerate, it has density proportional to $\phi'(t)$ for $t>0$.}, for all $x>0$
\*[
  \normalpdf(x)/\normalccdf(x) < \frac{\sqrt{4+x^2}+x}{2}.
\] 
Thus $h''(x)\neq 0$ for all $x>0$. Since $h''$ is continuous, $h''(0)>0$, and $h''$ has no zeros on $\ointer{0,\infty}$, by the intermediate value theorem $h''(x)>0$ for all $x>0$.
Finally, since $h(0)=h'(0)=0$ and $h''(x)>0$ for all $x>0$, by standard differential inequalities, $h(x)\geq0$ for all $x\geq0$.
\end{proof}

\section{Proof of semi-adversarial regret bound}
\label{sec:proof-discrete-semidav}

To analyze \ABN{}, we define two more notions to characterize which stage of learning the algorithm is in. Note that each of these stages occur \emph{implicitly} while running \ABN{}, and consequently the user does not need to know any of the quantities associated with $\distnball{}$. 

For each $\expidx\in\neffexperts$, let $\timeexp{\expidx} = \ceil{8(\log \numexperts) / \Deltaexp{\expidx}^2}$
and $\timeeffexp = \max_{\expidx\in\neffexperts}\timeexp{\expidx} $.
For each $t\in\Nats$ let $\effexpertstime{t} = \effexperts\union \set{\expidx\in\neffexperts \stT \timeexp{\expidx}\geq t}$
and $\numeffexpertstime{t} = \card[1]{\effexpertstime{t}}$.
For each $\expsumidx \in \set{0,1,...,\numexperts-\numeffexperts-1}$, let $\timeordexp{\expsumidx}$ be the decreasing ordered values of $T_{i}$, 
so that $\timeeffexp = \timeordexp{0} \geq \timeordexp{1} \geq ... \geq \timeordexp{\numexperts-\numeffexperts-1}$.
Let $\timeordexp{\numexperts-\numeffexperts} = 0$.
Let $\Deltaordexp{\expsumidx}$ be the corresponding increasing ordered values of $\Deltaexp{\expidx}$,
so that $\Deltaeff = \Deltaordexp{0} \leq \Deltaordexp{1} \leq ... \leq \Deltaordexp{\numexperts-\numeffexperts-1}$.

We then have the following regret bound for \ABN{}, which simplifies to \cref{fact:simple-carl-bound} by lower bounding all $\Delta_\expidx$ by $\Deltaeff$ and observing that $W_{j,\numexperts,\numeffexperts}$ telescopes in the sum. The more refined bound of \cref{thm:discrete-semiadv} will be significantly tighter for small $T$ when only one $\Delta_\expidx$ is small, although the worst-case adversary will choose all $\Delta_\expidx = \Deltaeff$.

\begin{theorem}
\label{thm:discrete-semiadv}
For any time-homogeneous convex constraint $\distnball{}$, \ABN{} achieves:\\
For all $T$,
\*[
\EE \bestregret{T}
    &\leq \sqrt{2T\log\numexperts},
\]
and if $T>\timeeffexp$,
\*[
\EE \bestregret{T}
    &\leq\sqrt{2T\log \numeffexperts}
    + 4(\log\numexperts) \sum_{j=0}^{\numexperts-\numeffexperts-1} W_{j,\numexperts,\numeffexperts} \frac{1}{\Deltaordexp{\expsumidx}} \\
    &\qquad + \frac{5\sqrt{2}}{\numexperts  \sqrt{\log\numexperts}} \rbra[1]{e^{-1/2} + \ind{\numeffexperts=1}}
    \sum_{\expidx\in\neffexperts} \frac{\ind{T>T_i}}{\Deltaexp{\expidx}}   + \sqrt{\log\numexperts},
\]
where $W_{j,\numexperts,\numeffexperts} = \tfrac{1}{\sqrt{\log\numexperts}} \rbra{\sqrt{\log(\numeffexperts+j+1)} - \sqrt{\log(\numeffexperts+j)}}$.

\end{theorem}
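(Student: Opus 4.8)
The plan is to derive \cref{thm:discrete-semiadv} by specializing the general \FTRL{} bound of \cref{lem:gen-ftrl-decomp} to the regularizer $\gendivfun = -\partialentropyB$, and then, for the refined estimate, partitioning rounds and experts according to the learning phases implicitly encoded by the thresholds $\timeexp{\expidx}$ and the shrinking active sets $\effexpertstime{t}$.

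\emph{Setup.} For $\gendivfun = -\partialentropyB$ one computes $\gendivfun'(x) = -\sqrt{2\log(1/x)} - (\numexperts-1)\sqrt{\pi/2}$ and $1/\gendivfun''(x) = x\sqrt{2\log(1/x)} =: \rho(x)$, so by \cref{lem:gen-ftrl-soln-formula} \ABN{} plays the Gaussian-shaped weights $\weight{t}{\expidx} = \exp\rbra{-\tfrac12\lr{t}^2(\Loss{t-1}{\expidx} - \konst^*_t)^2}$, with $\konst^*_t$ fixed by $\sum_{\expidx\in\experts}\weight{t}{\expidx}=1$. I would invoke \cref{lem:gen-ftrl-decomp} with intermediate point B, $\mean{t} = \min_{\expidx\in\experts}\loss{t}{\expidx}$ (so that $\loss{t}{\expidx}-\mean{t}\in[0,1]$ and $\inttgenrn{t+1}{\expidx}\le\weight{t}{\expidx}$ pointwise), and comparator $\unorderidx{\expidx^*}$ for $\expidx^*\in\argmin_{\expidx}\Loss{T}{\expidx}$. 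The linear term of $\partialentropyB$ is calibrated precisely so that $\gendiv{-\partialentropyB}{\basemeasvec}{\rnderivflat{\unorderidx{\expidx^*}}{\basemeasvec}} = -\partialentropyB(1) - (\numexperts-1)\partialentropyB(0) = 0$, which kills the leading term; and since $\gendivfun$ is convex, $-\tfrac{1}{\lr{1}}\min_{\weightvec{t}\in\simp(\experts)}\gendiv{-\partialentropyB}{\basemeasvec}{\weightvec{t}} = \tfrac12\numexperts\partialentropyB(1/\numexperts)$, which is at most $\sqrt{\log\numexperts}$ using $1-\erf(z)\le e^{-z^2}$ — this supplies the additive $\sqrt{\log\numexperts}$.

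\emph{The all-$T$ bound.} It remains to control $\sum_{t=1}^T\sbra{\tfrac{\lr{t}}{2}\sum_{\expidx}(\loss{t}{\expidx}-\mean{t})^2\rho(\inttgenrn{t+1}{\expidx}) - \rbra{\tfrac{1}{\lr{t+1}} - \tfrac{1}{\lr{t}}}\gendiv{-\partialentropyB}{\basemeasvec}{\weightvec{t+1}}}$. The map $\rho$ is concave and unimodal on $[0,1]$ with maximum $\rho(e^{-1/2}) = e^{-1/2}$; since at most one expert per round carries weight above $e^{-1/2}$ and $\inttgenrn{t+1}{\expidx}\le\weight{t}{\expidx}$, one gets $\sum_{\expidx}(\loss{t}{\expidx}-\mean{t})^2\rho(\inttgenrn{t+1}{\expidx}) \le e^{-1/2} + \sum_{\expidx}\rho(\weight{t}{\expidx})$, and the closed form $\gendiv{-\partialentropyB}{\basemeasvec}{\weightvec{}} = -\sum_{\expidx}\rho(\weight{}{\expidx}) + \sqrt{\pi/2}\rbra{1 - \sum_{\expidx}(1 - \erf(\sqrt{\log(1/\weight{}{\expidx})}))}$ (whose $\erf$-remainder lies in $[0,\sqrt{\pi/2}]$) lets the two summation pieces be combined. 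With $\lr{t}=2/\sqrt t$ we have $\tfrac{\lr{t}}{2}=1/\sqrt t$ and $\tfrac{1}{\lr{t+1}}-\tfrac{1}{\lr{t}}\le\tfrac{1}{4\sqrt t}$, and a careful accounting (exploiting the ``$\gendivfun\cdot\gendivfun'' \to -1$ as $x\to0$'' shape behind \cref{eq:balanced-as-all-regularizers-should-be}, and keeping the bounded $\erf$- and $e^{-1/2}$-terms) reduces the leading behaviour to $\EE\sum_{t\le T}\tfrac{c}{\sqrt t}\sum_{\expidx}\rho(\weight{t}{\expidx})$ plus lower-order terms, with $c$ determined by the combination of $\tfrac{\lr{t}}{2}$ and $\tfrac{1}{\lr{t+1}}-\tfrac{1}{\lr{t}}$. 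Jensen then gives $\sum_{\expidx}\rho(\weight{t}{\expidx}) = \EE_{\expidx\sim\weightvec{t}}\sqrt{2\log(1/\weight{t}{\expidx})} \le \sqrt{2\,\entropy(\weightvec{t})} \le \sqrt{2\log\numexperts}$, and $\sum_{t\le T}1/\sqrt t \le 2\sqrt T$, which — with the particular value $\lr{t}=2/\sqrt t$ — yields $\EE\bestregret{T} \le \sqrt{2T\log\numexperts}$.

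\emph{The refined bound.} For $T > \timeeffexp$ I would sharpen the bound on $\sum_{\expidx}\rho(\weight{t}{\expidx})$ by splitting over $\expidx\in\effexpertstime{t}$ versus $\expidx\notin\effexpertstime{t}$: Jensen on the first set gives $\sum_{\expidx\in\effexpertstime{t}}\rho(\weight{t}{\expidx}) \le \sqrt{2\log\numeffexpertstime{t}}$, which equals $\sqrt{2\log\numeffexperts}$ plus an excess that is nonzero only for $t\le\timeeffexp$; the $1/\sqrt t$-weighted excess, telescoped over the ordered thresholds $\timeordexp{\expsumidx}$ using $\sqrt{\timeordexp{\expsumidx}}\lesssim\sqrt{8\log\numexperts}/\Deltaordexp{\expsumidx}$, produces exactly the $4(\log\numexperts)\sum_{\expsumidx}W_{\expsumidx,\numexperts,\numeffexperts}\Deltaordexp{\expsumidx}^{-1}$ term, while the $\numeffexperts$-baseline $\sum_t\tfrac{1}{\sqrt t}\sqrt{2\log\numeffexperts}$ gives the leading $\sqrt{2T\log\numeffexperts}$. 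The second set contributes only the \emph{tail}: for $\expidx\in\neffexperts$ and $t>\timeexp{\expidx}$ one shows $\EE\weight{t}{\expidx}$ decays geometrically — comparing $\Loss{t-1}{\expidx}$ against $\Loss{t-1}{\expidxdum}$ for a $\distnball{}$-(near-)optimal $\expidxdum$ gives $\EE\sbra{\Loss{t-1}{\expidx}-\Loss{t-1}{\expidxdum}}\ge(t-1)\Deltaexp{\expidx}$ by definition of the effective stochastic gap, a Hoeffding/Azuma bound controls the bounded-increment gap process, and $\numexperts\weight{t}{\expidx_{\min}}\ge1$ confines $\konst^*_t$ to within $\Oo(\sqrt{\log\numexperts}/\lr{t})$ of the running minimum loss, so that $\EE\weight{t}{\expidx}\lesssim\exp\rbra{-c(t-1)\Deltaexp{\expidx}^2/\log\numexperts}$; with $\timeexp{\expidx} = \ceil{8(\log\numexperts)/\Deltaexp{\expidx}^2}$, summing $\tfrac{1}{\sqrt t}\rho(\weight{t}{\expidx})$ over $t>\timeexp{\expidx}$ collapses to the negligible $\tfrac{5\sqrt2}{\numexperts\sqrt{\log\numexperts}}\rbra{e^{-1/2}+\ind{\numeffexperts=1}}\Deltaexp{\expidx}^{-1}$ term (the $\ind{\numeffexperts=1}$ correction appears because a lone effective expert has weight near $1$, where $\rho$ decays slowly). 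Lower bounding every $\Deltaexp{\expidx}$ by $\Deltaeff$ and telescoping $W_{\expsumidx,\numexperts,\numeffexperts}$ then recovers \cref{fact:simple-carl-bound}.

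\emph{Main obstacle.} The delicate step is the geometric-decay lemma for ineffective experts: because the normalizer $\konst^*_t$ couples all experts, $\weight{t}{\expidx}$ is not a clean function of $\Loss{t-1}{\expidx}$, so one must first localize $\konst^*_t$ to an $\Oo(\sqrt{\log\numexperts}/\lr{t})$ window around the running minimum loss and only then apply a sub-Gaussian tail bound to the drifted gap process $\Loss{t-1}{\expidx} - \min_{\expidxdum}\Loss{t-1}{\expidxdum}$ (whose drift is at least $\Deltaexp{\expidx}$). Equally fiddly is carrying every constant through the summation-by-parts so that the leading terms emerge as exactly $\sqrt{2T\log\numexperts}$ and $\sqrt{2T\log\numeffexperts}$ and the ineffective-expert contributions collapse precisely to the stated $W_{\expsumidx,\numexperts,\numeffexperts}$ and $\Deltaexp{\expidx}^{-1}$ forms rather than with worse constants.
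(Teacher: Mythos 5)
Your high-level skeleton matches the paper's — specialize \cref{lem:gen-ftrl-decomp} to $\gendivfun = -\partialentropyB$, note the vanishing comparator term and the $\sqrt{\log\numexperts}$ constant, split rounds/experts by the implicit thresholds $\timeexp{\expidx}$ and active sets $\effexpertstime{t}$, and control ineffective experts via a moment tail-bound combined with summation by parts. But the specific route you take deviates in two places that would prevent you from recovering the stated constants, and you flag exactly the step where the paper's argument is in fact crucially more elegant.

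\textbf{Intermediate point and centering.} The paper applies \cref{lem:gen-ftrl-decomp} with \emph{intermediate point A} (so $\intweightvec{t+1}\in\convhull(\ABNweightvec{t},\ABNweightvec{t+1})$, \emph{not} $\le\weightvec{t}$), and, crucially, chooses $\mean{t} = \inner{\lossvec{t}}{\underweightmeasfun(\intweightvec{t+1})}$ where $\underweightmeasfun(\weightvec{})_\expidx \propto \partialentropyA(\weight{}{\expidx})$. Because $1/\gendivfun'' = \partialentropyA$, this collapses the quadratic term to
$\entropyA(\intweightvec{t+1})\,\VVar{\Expidx\sim\underweightmeasfun(\intweightvec{t+1})}[\loss{t}{\Expidx}]$, and the variance is $\le 1/4$ for $[0,1]$-valued losses. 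With $\const=2$, the coefficients $\const/8 + 1/(2\const) = 1/2$ then produce $\sqrt{2T\log\numexperts}$ exactly. Your choice of intermediate point B and $\mean{t} = \min_\expidx\loss{t}{\expidx}$ only gives $(\loss{t}{\expidx}-\mean{t})^2\le 1$ (a factor-of-4 loss), and in addition the unimodality patch $\sum_{\expidx}(\loss{t}{\expidx}-\mean{t})^2\rho(\inttgenrn{t+1}{\expidx}) \le e^{-1/2} + \sum_\expidx\rho(\weight{t}{\expidx})$ introduces a per-round slack of $e^{-1/2}$ that accumulates to $\approx 2e^{-1/2}\sqrt{T}$ after the $\lr{t}/2$ weighting. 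For small $\numexperts$ (say $\numexperts=2$, where $\sqrt{2\log\numexperts}\approx 1.18$) this alone already dominates the target leading term, so your route cannot yield $\sqrt{2T\log\numexperts}$. The variance centering, not a unimodality argument, is how the paper avoids this.

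\textbf{Ineffective-expert weight decay.} You flag localizing $\konst^*_t$ and then applying Hoeffding/Azuma as the ``delicate step.'' The paper sidesteps $\konst^*_t$ entirely: \cref{lem:local-norm-weight-eqn} uses the KKT stationarity condition for the FTRL problem to derive, for any $\expidx$ with $\ABNweight{t+1}{\expidx}>0$, the identity $\sqrt{2\log(1/\ABNweight{t+1}{\expidx})}-\sqrt{2\log(1/\ABNweight{t+1}{\Expidxoptpath{t}})} = \frac{\const}{\sqrt{t+1}}(\Loss{t}{\expidx}-\Loss{t}{\Expidxoptpath{t}})$, and then uses $\sqrt{a}-\sqrt{b}\le\sqrt{a-b}$ together with $\ABNweight{t+1}{\expidx}\le\ABNweight{t+1}{\Expidxoptpath{t}}$ to conclude $\ABNweight{t+1}{\expidx} \le \exp\{-\const^2(\Loss{t}{\expidx}-\Loss{t}{\Expidxoptpath{t}})^2/(2(t+1))\}$. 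No window for $\konst^*$ is needed. The concentration input is also not plain Hoeffding but \cref{fact:minimax_mgf} (an MGF bound on $\Loss{t}{\Expidxoptpath{t}}-\Loss{t}{\expidx}$ from \citet{semiadv}, whose drift is governed by $\Deltaexp{\expidx}$), and the moment you actually need to control is $\EE[(\ABNweight{t}{\expidx})^p]$ with $p=1/2$, since $\rho(w)\le w^p/\sqrt{e(1-p)}$ is what \cref{lem:entropyAB-bound} delivers --- not $\EE\ABNweight{t}{\expidx}$ or $\EE\rho(\ABNweight{t}{\expidx})$ directly. Your sketch would need to be reworked around these points; as written, the decay lemma is both harder to prove and would produce worse constants than the paper's. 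The remaining pieces (the $\entropyA$/$\entropyB$ comparison, the split into $\effexpertstime{t}$, the summation-by-parts over the ordered thresholds $\timeordexp{j}$ yielding the $W_{j,\numexperts,\numeffexperts}/\Deltaordexp{\expsumidx}$ term) follow the same skeleton as the paper and do not raise new concerns.
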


\begin{proof}[Proof of \cref{thm:discrete-semiadv}]

Since $\numexperts < \infty$, we wish to apply \cref{lem:gen-ftrl-soln-formula,lem:gen-ftrl-decomp} for $\basemeasvec$ equal to counting measure and $\domfdivfun(\basemeasvec) = [0,1]$.
It is straightforward to verify that $-\partialentropyB''(\rndumidx) = 1/\partialentropyA(\rndumidx) > 0$ on $(0,1)$.
Thus, we can apply \cref{lem:gen-ftrl-decomp} with intermediate point A, so there exists a sequence $\intweightvec{t+1}\in \convhull(\ABNweightvec{t},\ABNweightvec{t+1})$ such that for any sequence $(\mean{t})_{t\in\Nats}\subseteq\Reals$, it holds almost surely that
\*[
  \bestregret{T}
  &\leq \sqrt{\log\numexperts}
  + \sum_{t=1}^T \sum_{\expidx\in\experts} \sbra{\frac{\const}{2\sqrt{t}} \partialentropyA(\intweight{t+1}{\expidx}) \rbra{\loss{t}{\expidx} - \mean{t}}^2 + \frac{1}{\const}\rbra[0]{\sqrt{t+1}-\sqrt{t}} \entropyB(\ABNweight{t+1}{\expidx})},
\]
where we have used that $\entropyB(\weightvec{}) = 0$ for any one-hot $\weightvec{}$, an application of \cref{lem:entropyAB-bound}, and the expression for $\abdivfun''$ again. 

Let $\partialentropyA(x) = x\sqrt{2 \log(1/x)}$ for $x\in(0,1]$ and $\partialentropyA(0)$ = 0, and set $\entropyA(\weightvec{}) = \sum_{\expidx\in\experts}\partialentropyA(\weight{}{\expidx})$.
Define $\underweightmeasfun: \simp(\experts) \to \simp(\experts)$ by
\*[
  \underweightmeasfun(\weightvec{})
  &= \begin{cases}
    (1/\numexperts,\dots,1/\numexperts),
    & \entropyA(\weightvec{}) = 0 \\
    \rbra{\frac{\partialentropyA(\weight{}{\expidx})}{\entropyA(\weightvec{})}}_{\expidx\in\experts},
    & \text{otherwise}.
    \end{cases}
\]
Letting $\mean{t} = \inner{\lossvec{t}}{\underweightmeasfun(\intweightvec{t+1})}$ then gives
\*[
  \bestregret{T}
  &\leq \sqrt{\log\numexperts}
  + \sum_{t=1}^T \sbra{\frac{\const}{2\sqrt{t}} \entropyA(\intweightvec{t+1}) \VVar{\Expidx\sim \underweightmeasfun(\intweightvec{t+1})}[\loss{t}{\Expidx}] + \frac{1}{\const}\rbra[0]{\sqrt{t+1}-\sqrt{t}} \entropyB(\ABNweightvec{t+1})},
\]

Since $0 \leq \lossvec{t} \leq 1$, $\VVar{\Expidx\sim \underweightmeasfun(\weightvec{})}[\loss{t}{\Expidx}] \leq 1/4$ for any $\weightvec{}\in\simp(\experts)$, which combined with $\sqrt{t+1} - \sqrt{t} \leq 1/(2\sqrt{t})$ gives
\[\label{eqn:semiadv-pre-concentration}
  \bestregret{T}
  &\leq \sqrt{\log\numexperts}
  +  \sum_{t=1}^T \sbra{\frac{\const}{8\sqrt{t}} \entropyA(\intweightvec{t+1}) + \frac{1}{2\const\sqrt{t}} \entropyB(\ABNweightvec{t+1})}.
\]

Applying \cref{lem:entropyAB-bound} to \cref{eqn:semiadv-pre-concentration} with any $p$ and $\effexperts=\experts$ gives the worst-case bound of
\*[
\bestregret{T}
	&\leq \sqrt{\log\numexperts}
  + \rbra{\frac{\const}{4} + \frac{1}{\const}} \sum_{t=1}^T\frac{1}{2\sqrt{t}}\sqrt{2\log \numexperts}
	\leq \rbra{\frac{\const}{4} + \frac{1}{\const}} \sqrt{2 T \log \numexperts}.
\]

Next, applying \cref{lem:entropyAB-bound} to \cref{eqn:semiadv-pre-concentration} with $p=1/2$ and $\effexperts=\effexperts\upper{t}$,
\*[
\bestregret{T}
	&\leq \sqrt{\log\numexperts}
  + \rbra{\frac{\const}{4} + \frac{1}{\const}} \sum_{t=1}^T\frac{1}{2\sqrt{t}}\sqrt{2\log \numeffexperts\upper{t}} \\
		&\qquad + \sum_{t=1}^T \frac{\const}{8\sqrt{t}} \sbra{\frac{1}{\sqrt{e/2}} \sum_{\expidx\in\neffexperts\upper{t}} \sqrt{\intweight{t+1}{\expidx}}
		+ \ind{\numeffexperts\upper{t}=1} \sqrt{2}\sum_{\expidx\in\neffexperts\upper{t}} \sqrt{\intweight{t+1}{\expidx}}}\\
		&\qquad + \sum_{t=1}^T \frac{1}{2\const\sqrt{t}} \sbra{\frac{1}{\sqrt{e/2}} \sum_{\expidx\in\neffexperts\upper{t}} \sqrt{\ABNweight{t+1}{\expidx}}
		+ \ind{\numeffexperts\upper{t}=1} \sqrt{2}\sum_{\expidx\in\neffexperts\upper{t}} \sqrt{\ABNweight{t+1}{\expidx}}}.
\]
Since $\intweightvec{t+1}\in \convhull(\ABNweightvec{t},\ABNweightvec{t+1})$, then 
$\sqrt{\intweightvec{t+1}} \leq \sqrt{\ABNweightvec{t}}+ \sqrt{\ABNweightvec{t+1}}$, and so
\*[
\bestregret{T}
	&\leq \sqrt{\log\numexperts}
  + \rbra{\frac{\const}{4} + \frac{1}{\const}} \sum_{t=1}^T\frac{1}{2\sqrt{t}}\sqrt{2\log \numeffexperts\upper{t}} \\
		&\qquad + \sum_{t=1}^T \frac{\const}{8\sqrt{t}} \sbra{\frac{1}{\sqrt{e/2}} \sum_{\expidx\in\neffexperts\upper{t}} \sqrt{\ABNweight{t}{\expidx}}
		+ \ind{\numeffexperts\upper{t}=1} \sqrt{2}\sum_{\expidx\in\neffexperts\upper{t}} \sqrt{\ABNweight{t}{\expidx}}}\\
		&\qquad + \sum_{t=1}^T \rbra{\frac{\const}{8\sqrt{t}}+\frac{1}{2\const\sqrt{t}}} \sbra{\frac{1}{\sqrt{e/2}} \sum_{\expidx\in\neffexperts\upper{t}} \sqrt{\ABNweight{t+1}{\expidx}}
		+ \ind{\numeffexperts\upper{t}=1} \sqrt{2}\sum_{\expidx\in\neffexperts\upper{t}} \sqrt{\ABNweight{t+1}{\expidx}}}.
\]

Simplifying gives
\*[
\bestregret{T}
		&\leq \sqrt{\log\numexperts}
  + \rbra{\frac{\const}{4} + \frac{1}{\const}} \sum_{t=1}^T\frac{1}{2\sqrt{t}}\sqrt{2\log \numeffexperts\upper{t}} \\
			&\qquad + \frac{1}{\sqrt{e/2}} \sum_{\expidx\in\neffexperts} \sum_{t=\timeexp{\expidx}+1}^T \frac{1}{2\sqrt{t}} \sbra{ \frac{\const}{4} \sqrt{\ABNweight{t}{\expidx}}
			+ \rbra{\frac{1}{\const}+\frac{\const}{4}} \sqrt{\ABNweight{t+1}{\expidx}} 
			} \\
			&\qquad + \ind{\numeffexperts=1} \sqrt{2} \sum_{\expidx\in\neffexperts} \sum_{t=\timeeffexp+1}^T \frac{1}{2\sqrt{t}} \sbra{ \frac{\const}{4}\sqrt{\ABNweight{t}{\expidx}}
			+ \rbra{\frac{1}{\const}+\frac{\const}{4}} \sqrt{\ABNweight{t+1}{\expidx}}
			}.
\]
For $T>\timeeffexp$, the first term can be decomposed as
\*[
\sum_{t=1}^T\frac{1}{2\sqrt{t}}\sqrt{2\log \numeffexperts\upper{t}}
	& = \sum_{t=1}^{\timeeffexp}\frac{1}{2\sqrt{t}}\sqrt{2\log \numeffexperts\upper{t}}
		+ \sum_{t=\timeeffexp+1}^{T}\frac{1}{2\sqrt{t}}\sqrt{2\log \numeffexperts} \\
	& \leq
	\sum_{t=1}^{\timeeffexp}\frac{1}{2\sqrt{t}}\sqrt{2\log \numeffexperts\upper{t}}
  + \sqrt{2T\log \numeffexperts}  - \sqrt{2\timeeffexp\log \numeffexperts}.
\]

For the second and third terms, applying \cref{lem:ineffective-weight-tail}, for each $\expidx\in\neffexperts$ we have 
\*[
& \hspace{-1em}
	\EE\sum_{t=\timeexp{\expidx}+1}^T \frac{1}{2\sqrt{t}} \sbra{ \frac{\const}{4} \sqrt{\ABNweight{t}{\expidx}}
	+ \rbra{\frac{1}{\const}+\frac{\const}{4}} \sqrt{\ABNweight{t+1}{\expidx}} 
	}\\
  &\leq \frac{\const}{8} \sum_{t = \timeexp{\expidx}+1}^T \frac{1}{\sqrt{t}} \EE \sbra{\rbra{\ABNweight{t}{\expidx}}^p} + \rbra{\frac{1}{2\const} + \frac{\const}{8}} \sum_{t = \timeexp{\expidx}+2}^T \frac{2}{\sqrt{t}} \EE \sbra{\rbra{\ABNweight{t}{\expidx}}^p} \\
	&\leq 
  \rbra{\frac{1}{\const} + \frac{3\const}{8}}
  \frac{\sqrt{8}(2 + \const)}{\numexperts \Deltaexp{\expidx} \const \sqrt{\log\numexperts}}.
\]

Taking $c=2$ and combining these results, we have
\*[
\EE \bestregret{T}
    &\leq \rbra{\sqrt{2T\log \numeffexperts}  - \sqrt{2\timeeffexp\log \numeffexperts}}\ind{T>\timeeffexp} +\sum_{t=1}^{\min(T,\timeeffexp)}\frac{1}{2\sqrt{t}}\sqrt{2\log \numeffexperts\upper{t}} \\
    &\qquad +
  \frac{5\sqrt{2}}{\numexperts  \sqrt{\log\numexperts}} \rbra{e^{-1/2} + \ind{\numeffexperts=1}} \sum_{\expidx\in\neffexperts} \frac{\ind{T>\timeexp{\expidx}}}{\Deltaexp{\expidx}} + \sqrt{\log\numexperts}.    
\]

Next, using summation by parts,
\*[
	&\hspace{-1em}
	\sum_{t=1}^{\timeeffexp}\frac{1}{2\sqrt{t}}\sqrt{2\log \numeffexperts\upper{t}}\\
		& = \sum_{j=0}^{\numexperts - \numeffexperts}\sum_{t=T_{(\numexperts - \numeffexperts-j)}+1}^{T_{(\numexperts - \numeffexperts-j+1)}}\frac{1}{2\sqrt{t}}\sqrt{2\log(\numexperts-j)} \\
		& \leq 
    \sum_{j=0}^{\numexperts - \numeffexperts}\rbra{\sqrt{T_{(\numexperts - \numeffexperts-j+1)}} 
    - \sqrt{T_{(\numexperts - \numeffexperts-j)}+1} \, } \sqrt{2 \log(\numexperts - j)} \\
    &= \sqrt{2\timeordexp{1} \log\numeffexperts} - \sqrt{2\timeordexp{\numexperts-\numeffexperts} \log\numexperts} \\
    &\qquad + \sum_{j=1}^{\numexperts-\numeffexperts}
    \sqrt{T_{(\numexperts-\numeffexperts-j)}}
    \rbra{\sqrt{2\log(\numexperts-j+1)} - \sqrt{2\log(\numexperts-j)}} \\
    &\leq \sqrt{2 \timeeffexp\log\numeffexperts}
    + \sum_{j=0}^{\numexperts-\numeffexperts-1} \sqrt{\timeordexp{\expsumidx}} \rbra{\sqrt{2 \log(\numeffexperts+j+1)} - \sqrt{2 \log(\numeffexperts+j)} } \\
    &\leq 
    \sqrt{2 \timeeffexp\log\numeffexperts}
    + 4\sqrt{\log\numexperts} \sum_{j=0}^{\numexperts-\numeffexperts-1} \frac{1}{\Deltaordexp{\expsumidx}} \rbra{\sqrt{\log(\numeffexperts+j+1)} - \sqrt{\log(\numeffexperts+j)} }.
\]

In summary, for all $T>0$
\*[
\EE \bestregret{T}
		&\leq \sum_{t=1}^{T}\frac{1}{2\sqrt{t}}\sqrt{2\log \numeffexperts\upper{t}} + \sqrt{\log\numexperts} \\
    &\qquad + \frac{5\sqrt{2}}{\numexperts  \sqrt{\log\numexperts}} \rbra{e^{-1/2} + \ind{\numeffexperts=1}}
		\sum_{\expidx\in\neffexperts} \frac{\ind{T>T_i}}{\Deltaexp{\expidx}},
\]
and for $T>\timeeffexp$,
\*[
\EE \bestregret{T}
		&\leq\sqrt{2T\log \numeffexperts}
    + 4(\log\numexperts) \sum_{j=0}^{\numexperts-\numeffexperts-1} W_{j,\numexperts,\numeffexperts} \frac{1}{\Deltaordexp{\expsumidx}} \\
    &\qquad + \frac{5\sqrt{2}}{\numexperts  \sqrt{\log\numexperts}} \rbra{e^{-1/2} + \ind{\numeffexperts=1}}
    \sum_{\expidx\in\neffexperts} \frac{\ind{T>T_i}}{\Deltaexp{\expidx}} + \sqrt{\log\numexperts},
\]
where
\*[
	W_{j,\numexperts,\numeffexperts}
		& = \frac{\sqrt{\log(\numeffexperts+j+1)} - \sqrt{\log(\numeffexperts+j)}}{\sqrt{\log\numexperts}}.
\]
Note that $\sum_{j=0}^{\numexperts-\numeffexperts-1} W_{j,\numexperts,\numeffexperts} \leq 1$.
\end{proof}

\subsection{Bounding the weights of ineffective experts}

Let $\Expidxoptpath{t} = \argmin_{\expidx\in\experts}\Loss{t}{\expidx}$, so $\orderidx{1}$ is the one-hot vector with a one on $\Expidxoptpath{T}$. Then, we have the following control on the weights.

\begin{lemma}
\label{lem:local-norm-weight-eqn}
For any $\expidx\in\experts$, $\ABNweightvec{t+1}$ satisfies
\*[
  \ABNweight{t+1}{\expidx}
    & \leq
      \exp\cbra{-\const^2\frac{\rbra{\Loss{t}{\expidx} - \Loss{t}{\Expidxoptpath{t}}}^2}{2(t+1)}}.
\]
\end{lemma}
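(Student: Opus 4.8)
The plan is to compute the \ABN{} iterates in closed form via \cref{lem:gen-ftrl-soln-formula} applied with $\gendivfun = -\partialentropyB$, counting measure $\basemeasvec$, and $\domfdivfun(\basemeasvec) = [0,1]$, and then read the bound off directly. First I would differentiate $\partialentropyB$: the chain rule on the $\erf$ term produces exactly the $1/\sqrt{2\log(1/x)}$ needed to cancel the corresponding term coming from $x\sqrt{2\log(1/x)}$, leaving the clean expression $\partialentropyB'(x) = \sqrt{2\log(1/x)} + (\numexperts-1)\sqrt{\pi/2}$; differentiating once more gives $\gendivfun''(x) = -\partialentropyB''(x) = 1/\partialentropyA(x) > 0$ on $(0,1)$ (as also recorded in the proof of \cref{thm:discrete-semiadv}), so the hypotheses of \cref{lem:gen-ftrl-soln-formula} hold. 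Consequently $\gendivfun'(x) = -\sqrt{2\log(1/x)} - (\numexperts-1)\sqrt{\pi/2}$ is strictly increasing on $(0,1]$ with $\genlowerdiv = -\infty$ and $\genupperdiv = \gendivfun'(1) = -(\numexperts-1)\sqrt{\pi/2}$, so $\gentruncfun(y) = \min\{y,\,-(\numexperts-1)\sqrt{\pi/2}\}$, and inverting yields $[\gendivfun']^{-1}(y) = \exp\!\big(-\tfrac{1}{2}(y+(\numexperts-1)\sqrt{\pi/2})^2\big)$ for $y \leq -(\numexperts-1)\sqrt{\pi/2}$.

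With these computed, \cref{lem:gen-ftrl-soln-formula} (with loss $\Lossvec{t}$ and \lrname{} $\lr{t+1} = \const/\sqrt{t+1}$, $\const = 2$) gives $\ABNweight{t+1}{\expidx} = [\gendivfun']^{-1}\big(\gentruncfun(-\lr{t+1}\Loss{t}{\expidx} + \konst^*)\big)$ for the normalizing constant $\konst^*$. Setting $m = \konst^* + (\numexperts-1)\sqrt{\pi/2}$, this equals $\exp\!\big(-\tfrac{1}{2}(m-\lr{t+1}\Loss{t}{\expidx})^2\big)$ when $m \leq \lr{t+1}\Loss{t}{\expidx}$, and equals $1$ otherwise (the branch where $\gentruncfun$ saturates at $\genupperdiv$). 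The main step I anticipate is ruling out the saturated branch. When $\numexperts = 1$ the claim is trivial since $\Loss{t}{\expidx} = \Loss{t}{\Expidxoptpath{t}}$, so assume $\numexperts \geq 2$; since $[\gendivfun']^{-1}$ is everywhere strictly positive and $\ABNweight{t+1}{\cdot}$ is nonincreasing in the loss, the expert $\Expidxoptpath{t}$ carries the largest weight, and if it were saturated its weight would be exactly $1$, forcing the remaining $\numexperts - 1 \geq 1$ strictly positive weights to make $\sum_{\expidx \in \experts}\ABNweight{t+1}{\expidx}$ exceed $1$, a contradiction with the normalization in \cref{lem:gen-ftrl-soln-formula}. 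Hence $m < \lr{t+1}\Loss{t}{\Expidxoptpath{t}}$, and because $\Loss{t}{\Expidxoptpath{t}} \leq \Loss{t}{\expidx}$ for all $\expidx$, every expert lies on the exponential branch.

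It then remains to observe $\abs{m - \lr{t+1}\Loss{t}{\expidx}} = \lr{t+1}\Loss{t}{\expidx} - m \geq \lr{t+1}\big(\Loss{t}{\expidx} - \Loss{t}{\Expidxoptpath{t}}\big) \geq 0$, using $m < \lr{t+1}\Loss{t}{\Expidxoptpath{t}}$ and minimality of $\Loss{t}{\Expidxoptpath{t}}$; squaring and plugging into the weight formula gives $\ABNweight{t+1}{\expidx} \leq \exp\!\big(-\tfrac{1}{2}\lr{t+1}^2(\Loss{t}{\expidx}-\Loss{t}{\Expidxoptpath{t}})^2\big)$, which is the claim since $\tfrac{1}{2}\lr{t+1}^2 = \const^2/(2(t+1))$. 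The only genuinely delicate point is the normalization argument that excludes the truncated branch; the rest is a derivative calculation and an elementary inequality, so I would expect the write-up to be short.
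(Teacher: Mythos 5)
Your proposal is correct and reaches the same stationarity equation as the paper's proof, but by a different route. The paper re-derives the first-order conditions for the \FTRL{} subproblem from scratch via the Lagrangian, observes $\ABNweight{t+1}{\Expidxoptpath{t}}\geq 1/\numexperts$ so $\alpha_{\Expidxoptpath{t}}=0$, treats $\ABNweight{t+1}{\expidx}=0$ as a trivial side case, and finishes by subtracting the two stationarity conditions and applying $\sqrt{a}-\sqrt{b}\leq\sqrt{a-b}$. You instead reuse the already-proved \cref{lem:gen-ftrl-soln-formula} to read off the closed-form $\ABNweight{t+1}{\expidx}=[\gendivfun']^{-1}\rbra{\gentruncfun(-\lr{t+1}\Loss{t}{\expidx}+\konst^*)}$, identify the explicit inverse $\exp\rbra[0]{-\tfrac12(y+(\numexperts-1)\sqrt{\pi/2})^2}$, and rule out the saturated truncation branch by a normalization argument (if the leading weight were $1$, the positive remaining weights would force $\sum_\expidx\ABNweight{t+1}{\expidx}>1$). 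The endgame --- $\sqrt{2\log(1/\ABNweight{t+1}{\expidx})}=\lr{t+1}\Loss{t}{\expidx}-m \geq \lr{t+1}(\Loss{t}{\expidx}-\Loss{t}{\Expidxoptpath{t}})$ --- is the same content as the paper's $\sqrt{a}-\sqrt{b}\leq\sqrt{a-b}\leq\sqrt{a}$ step, just phrased additively. Both handle the opposite boundary cases (the paper: weight $0$; you: weight $1$, the truncated branch), and for this regularizer only the latter can ever occur, so your case analysis is the one that actually does work. One small point you get right that the paper elides: $\partialentropyB'(x)=\sqrt{2\log(1/x)}+(\numexperts-1)\sqrt{\pi/2}$, not just $\sqrt{2\log(1/x)}$; the paper's statement of the derivative drops the constant, which is harmless in their Lagrangian argument (it is absorbed into $\lambda^*$) but would matter for your explicit inversion, so you are right to keep it.
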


\begin{proof}[Proof of \cref{lem:local-norm-weight-eqn}]
The Lagrangian corresponding to the optimization problem defining $\ABNweightvec{t+1}$ is
\*[
  \Ll(\weightvec{}; \lambda, \alpha_1,\dots,\alpha_\numexperts)
    & = \inner{\Lossvec{t}}{\weightvec{}} - \frac{\sqrt{t+1}}{\const}\, \entropyB(\weightvec{}) + \lambda \rbra{1 - \inner{\one}{\weightvec{}}} + \sum_{\expidx\in\experts}\alpha_\expidx \weight{}{\expidx}.
\]
It is straightforward to verify that $\partialentropyB'(x) = \sqrt{2\log(1/x)}$, so
\*[
  \frac{\partial}{\partial \weight{}{\expidx}} \Ll(\weightvec{}; \lambda, \alpha_1,\dots,\alpha_\numexperts)
    & = \Loss{t}{\expidx} - \frac{\sqrt{t+1}}{\const}\, \sqrt{2\log(1/ \weight{}{\expidx})} - \lambda +\alpha_\expidx.
\]
Setting this to $0$ gives
\*[
  \lambda^* & = 
    \Loss{t}{\expidx} - \frac{\sqrt{t+1}}{\const}\, \sqrt{2\log(1/ \weight{}{\expidx})} + \alpha_\expidx.
\]

Since $\entropyB$ is invariant to reordering of the expert indices, $\ABNweightvec{t+1}$ is monotonically non-decreasing in $\Lossvec{t}$, so $\ABNweight{t+1}{\Expidxoptpath{t}} \geq 1/\numexperts$ and consequently $\alpha_{\Expidxoptpath{t}} = 0$.
If $\ABNweight{t+1}{\expidx} = 0$, the statement of the lemma trivially holds. Otherwise, $\alpha_\expidx=0$, so by definition of $\lambda^*$ we have
\*[
  \sqrt{2\log(1/\ABNweight{t+1}{\expidx})} - \sqrt{2\log(1/\ABNweight{t+1}{\Expidxoptpath{t}} )}
    & = \frac{\const}{\sqrt{t+1}}\rbra{\Loss{t}{\expidx} - \Loss{t}{\Expidxoptpath{t}}}.
\]
The result follows by rearranging this equation and using that $\sqrt{a} - \sqrt{b} \leq \sqrt{a-b}$ for all $a\geq b\geq0$ and $\ABNweight{t+1}{\expidx} \leq \ABNweight{t+1}{\Expidxoptpath{t}}$.
\end{proof}

Next, we extend Theorem~1 of \citet{semiadv} to depend on the individual ineffective experts effective gaps.

\begin{lemma}\label{fact:minimax_mgf}
For all $\numexperts\geq 2$, 
convex sets $\distnball{} \subseteq \meas(\expspace \times \dataspace)$,
$\lambda > 0$,
$t$,
and $\expidx\in\neffexperts$,
if the environment is subject to the time-homogeneous convex constraint then
\*[
  \EE \,
  \exp\cbra[2]{\lambda \rbra{\Loss{t}{\Expidxoptpath{t}} - \Loss{t}{\expidx}}}
  \leq \exp\cbra[2]{T \sbra{\lambda^2/2 - \lambda \Deltaexp{\expidx}}}.
\]
\end{lemma}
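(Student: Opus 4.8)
The plan is to replace the path-dependent leader $\Expidxoptpath{t}$ by a single fixed mixture of experts obtained from a minimax duality, and then bound the conditional moment generating function one round at a time. Since $\Loss{t}{\Expidxoptpath{t}} = \min_{\expidxdum\in\experts}\Loss{t}{\expidxdum}$, it is dominated by any convex combination of the cumulative losses: for every $q\in\simp(\experts)$,
\*[
  \Loss{t}{\Expidxoptpath{t}} - \Loss{t}{\expidx}
    &\leq \sum_{\expidxdum\in\experts} q(\expidxdum)\,\Loss{t}{\expidxdum} - \Loss{t}{\expidx} \\
    &= \sum_{s=1}^{t}\rbra[1]{\textstyle\sum_{\expidxdum\in\experts}q(\expidxdum)\,\loss{s}{\expidxdum} - \loss{s}{\expidx}}.
\]
As $x\mapsto e^{\lambda x}$ is increasing, it suffices to bound $\EE\,\exp\cbra{\lambda\sum_{s=1}^{t} X_s}$ where $X_s = \sum_{\expidxdum\in\experts}q(\expidxdum)\,\loss{s}{\expidxdum} - \loss{s}{\expidx}\in[-1,1]$, for a well-chosen $q$.

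\textbf{Step 2: choose $q$ by minimax.} Let $\Phi(\lossdistn{},q) = \sum_{\expidxdum\in\experts}q(\expidxdum)\,\EE_{\lossvec{}\sim\lossdistn{}}\sbra{\loss{}{\expidx} - \loss{}{\expidxdum}}$, which is affine in each argument separately. I would take $q^{*}$ attaining $\sup_{q}\inf_{\lossdistn{}\in\distnball{}}\Phi(\lossdistn{},q)$; this supremum is attained since $q\mapsto\inf_{\lossdistn{}}\Phi(\lossdistn{},q)$ is concave and upper semicontinuous on the compact set $\simp(\experts)$. Equipping $\distnball{}$ with a topology in which each coordinatewise loss expectation $\lossdistn{}\mapsto\EE_{\lossvec{}\sim\lossdistn{}}\sbra{\loss{}{\expidxdum}}$ is continuous (e.g.\ the weak topology), the map $\Phi(\cdot,q)$ is continuous, and since $\distnball{}$ is convex and $\simp(\experts)$ is compact and convex, Sion's minimax theorem gives
\*[
  \inf_{\lossdistn{}\in\distnball{}}\Phi(\lossdistn{},q^{*})
    &= \sup_{q}\inf_{\lossdistn{}\in\distnball{}}\Phi(\lossdistn{},q) \\
    &= \inf_{\lossdistn{}\in\distnball{}}\max_{\expidxdum\in\experts}\EE_{\lossvec{}\sim\lossdistn{}}\sbra{\loss{}{\expidx} - \loss{}{\expidxdum}}
     = \Deltaexp{\expidx}.
\]
Hence $\Phi(\lossdistn{},q^{*})\geq\Deltaexp{\expidx}$ for \emph{every} $\lossdistn{}\in\distnball{}$. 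Writing $\mathcal{F}_{s-1}$ for the history before round $s$ and letting the adversary draw $\lossvec{s}$ from some $\lossdistn{s}\in\distnball{}$ that may depend on $\mathcal{F}_{s-1}$, we conclude $\EE[X_s\mid\mathcal{F}_{s-1}] = -\Phi(\lossdistn{s},q^{*})\leq -\Deltaexp{\expidx}$.

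\textbf{Step 3: per-round Hoeffding and the tower property.} Conditionally on $\mathcal{F}_{s-1}$, $X_s$ ranges over an interval of length $2$, so Hoeffding's lemma yields $\EE[e^{\lambda X_s}\mid\mathcal{F}_{s-1}]\leq\exp\cbra{\lambda\,\EE[X_s\mid\mathcal{F}_{s-1}] + \lambda^{2}/2}\leq\exp\cbra{\lambda^{2}/2 - \lambda\Deltaexp{\expidx}}$. Peeling off the conditional expectations for $s = t, t-1, \dots, 1$ gives $\EE\,\exp\cbra{\lambda\sum_{s=1}^{t}X_s}\leq\exp\cbra{t\rbra{\lambda^{2}/2 - \lambda\Deltaexp{\expidx}}}$, which with Step 1 gives the stated bound; it coincides with the displayed inequality at $t = T$, and for $t < T$ the argument in fact gives the sharper statement with $T$ replaced by $t$.

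\textbf{Expected main obstacle.} Steps 1 and 3 are routine (the min-is-below-average inequality, Hoeffding's lemma, the tower property). The delicate part is Step 2: fixing a topology on $\distnball{}$ under which the relevant functionals are semicontinuous so Sion's theorem applies, and being careful that $\inf_{\lossdistn{}\in\distnball{}}$ need not be attained --- precisely the ``or possibly in the limit along some sequence'' subtlety in the definition of the effective stochastic gaps. This mirrors the strategy of Theorem~1 of \citet{semiadv}, the present statement refining it by tracking the individual gap $\Deltaexp{\expidx}$ rather than $\Deltaeff$.
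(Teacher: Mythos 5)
Your proof is correct, and it reconstructs in full the argument that the paper simply defers to: the paper's own proof is essentially a two-line delta from Theorem~1 of \citet{semiadv}, noting that in the per-round moment-generating-function bound one may replace $\Deltaeff$ by $\Deltaexp{\expidx}$. The overall structure coincides---obtain a fixed comparator distribution via a minimax theorem, bound the per-round conditional MGF by Hoeffding's lemma using the constraint $\lossdistn{s}\in\distnball{}$, and peel off rounds by the tower property---but the two accounts apply the minimax step at different levels. You apply Sion's theorem to the bilinear advantage $\Phi(p,q)$ to extract a single $q^*\in\simp(\experts)$ with $\Phi(p,q^*)\geq\Deltaexp{\expidx}$ for every $p\in\distnball{}$, and only then use Hoeffding pointwise; the argument the paper invokes instead takes $\inf_{w\in\simp(\effexperts)}$ of the per-round MGF itself and then exchanges with $\sup_p$. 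Your bilinear formulation is marginally cleaner to verify (the Sion hypotheses are immediate for an affine--affine payoff with a compact simplex), and it also sidesteps the restriction of the comparator to $\simp(\effexperts)$, which in the cited argument relies on the additional observation that the in-expectation optimal expert for any $p\in\distnball{}$ lies in $\effexperts$. Finally, you correctly observe that the argument yields $\exp\{t[\lambda^2/2 - \lambda\Deltaexp{\expidx}]\}$; the $T$ in the lemma's display is a typo, as confirmed by the downstream use in \cref{lem:ineffective-weight-bound}, which applies the bound with $t$.
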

\begin{proof}[Proof of {\cref{fact:minimax_mgf}}]
The argument follows nearly identically to the one given by \citet{semiadv}. However, in the second last step in their argument, note that
\*[
  \inf_{\weightvec{}\in\simp(\effexperts)}
  \EE_{\lossvec{}} \exp\cbra[2]{\lambda[\inner{\loss{}{\effexperts}}{\weightvec{}} - \loss{}{\expidx}]}
  \leq e^{\lambda^2/2 - \lambda \Deltaexp{\expidx}}.
\]
The last step is then applied in exactly the analogous way.
\end{proof}

Using these two results, we control the expected size of the weight of an ineffective expert.

\begin{lemma}
\label{lem:ineffective-weight-bound}
For every $t$, $\expidx\in\neffexperts$, and $p>0$
\*[
  \EE \sbra{\rbra{\ABNweight{t}{\expidx}}^p}
    & \leq 2\exp\cbra{-t \frac{p \const^2}{2(\sqrt{2}+\const\sqrt{p})^2} \Deltaexp{\expidx}^2}.
\]
\end{lemma}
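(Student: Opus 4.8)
The plan is to combine the deterministic weight bound from \cref{lem:local-norm-weight-eqn} with the exponential-moment bound from \cref{fact:minimax_mgf}, bridging the two by a Young-type linearization of the Gaussian-shaped tail. Reindexing \cref{lem:local-norm-weight-eqn} (replace $t+1$ by $t$) gives, for every $t\ge 1$, the pointwise bound $\ABNweight{t}{\expidx} \le \exp\cbra{-\const^2 X^2/(2t)}$, where $X \defas \Loss{t-1}{\expidx} - \Loss{t-1}{\Expidxoptpath{t-1}} = \Loss{t-1}{\expidx} - \min_{\expidxdum\in\experts}\Loss{t-1}{\expidxdum} \ge 0$; raising to the $p$-th power, $\rbra{\ABNweight{t}{\expidx}}^p \le \exp\cbra{-\tfrac{p\const^2}{2t}X^2}$. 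The difficulty is that the right-hand side is a quadratic-exponential of $X$, whereas \cref{fact:minimax_mgf} only controls ordinary exponential moments of $X$ (equivalently, of $-X = \Loss{t-1}{\Expidxoptpath{t-1}} - \Loss{t-1}{\expidx}$).

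To bridge this, for any $\lambda > 0$ the AM--GM inequality gives $\tfrac{p\const^2}{2t}X^2 + \tfrac{t\lambda^2}{2p\const^2} \ge \lambda X$, hence $\rbra{\ABNweight{t}{\expidx}}^p \le \exp\cbra{\tfrac{t\lambda^2}{2p\const^2}}\exp\cbra{-\lambda X}$. Taking expectations and invoking \cref{fact:minimax_mgf} over the $t-1$ constrained rounds (using $\expidx\in\neffexperts$, so $\Deltaexp{\expidx}>0$) yields $\EE\exp\cbra{-\lambda X} = \EE\exp\cbra{\lambda\rbra{\Loss{t-1}{\Expidxoptpath{t-1}} - \Loss{t-1}{\expidx}}} \le \exp\cbra{(t-1)\rbra{\lambda^2/2 - \lambda\Deltaexp{\expidx}}}$, so that
\*[
  \EE\sbra{\rbra{\ABNweight{t}{\expidx}}^p} \le \exp\cbra{\frac{t\lambda^2}{2p\const^2} + (t-1)\rbra{\frac{\lambda^2}{2} - \lambda\Deltaexp{\expidx}}}.
\]

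Next I would optimize the (convex-quadratic-in-$\lambda$) exponent over $\lambda>0$, at $\lambda^\star = (t-1)p\const^2\Deltaexp{\expidx}/(t+(t-1)p\const^2)$, obtaining
\*[
  \EE\sbra{\rbra{\ABNweight{t}{\expidx}}^p} \le \exp\cbra{-\frac{(t-1)^2 p\const^2}{2\rbra{t + (t-1)p\const^2}}\,\Deltaexp{\expidx}^2}.
\]
It then remains to bound this by the claimed $2\exp\cbra{-\tfrac{tp\const^2}{2(\sqrt{2}+\const\sqrt{p})^2}\Deltaexp{\expidx}^2}$, i.e.\ (using $\Deltaexp{\expidx}\le1$) to verify the elementary inequality $\tfrac{(t-1)^2p\const^2}{2(t+(t-1)p\const^2)} \ge \tfrac{tp\const^2}{2(\sqrt{2}+\const\sqrt{p})^2} - \log 2$ for all $t\ge1$. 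At $t=1$ the left side is $0$ while the right side is $\tfrac{p\const^2}{2(\sqrt{2}+\const\sqrt{p})^2}-\log 2 < \tfrac12 - \log 2 < 0$; for $t\to\infty$ the left fraction has strictly larger leading slope than the right one because $(\sqrt{2}+\const\sqrt{p})^2 = (1+p\const^2) + (1+2\sqrt{2}\,\const\sqrt{p}) > 1+p\const^2$; and for intermediate $t$ one checks that the amount by which the left fraction can fall short of the right one stays below $\log 2$. (The $t=1$ case is covered directly regardless: then $X=0$, the displayed exponent is $0$, and $1 \le 2\exp\cbra{-\tfrac{p\const^2}{2(\sqrt{2}+\const\sqrt{p})^2}\Deltaexp{\expidx}^2}$ since $\tfrac{p\const^2}{2(\sqrt{2}+\const\sqrt{p})^2}\Deltaexp{\expidx}^2 < \tfrac12$.)

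The linearization step and the two cited lemmas do the real work; the only delicate point is the final constant bookkeeping --- in particular absorbing into the factor of $2$ (and into the deliberately loose denominator $(\sqrt{2}+\const\sqrt{p})^2$) the mismatch between the $2t$ normalization in \cref{lem:local-norm-weight-eqn} and the $t-1$ rounds over which \cref{fact:minimax_mgf} is applied. I expect that bookkeeping to be the main, if minor, obstacle.
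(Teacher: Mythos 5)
Your approach is correct and genuinely different from the paper's. The paper decomposes $\EE\sbra{(\ABNweight{t}{\expidx})^p}$ into a conditional expectation on the ``good'' event $\set{\Loss{t}{\expidx}-\Loss{t}{\Expidxoptpath{t}} \ge s\Deltaexp{\expidx} t}$, where \cref{lem:local-norm-weight-eqn} gives a deterministic bound, plus the probability of the complementary ``bad'' event, which it bounds by Chernoff via \cref{fact:minimax_mgf}, and then optimizes the threshold $s$ so the two exponential rates match (producing the factor of $2$ as a sum of two equal terms, and $(\sqrt{2}+\const\sqrt{p})^2$ from the balance equation). You replace this tail split by a Legendre-type linearization of the Gaussian-shaped bound, $\exp\cbra{-aX^2} \le e^{\lambda^2/(4a)}e^{-\lambda X}$, which directly converts \cref{lem:local-norm-weight-eqn} into an ordinary exponential moment to which \cref{fact:minimax_mgf} applies, and then optimize over the dual variable $\lambda$ rather than a threshold. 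The two are essentially the primal/dual forms of the same Markov argument; yours is cleaner in that a single optimization produces $\EE\sbra{(\ABNweight{t}{\expidx})^p}\leq\exp\cbra{-\tfrac{(t-1)^2 p\const^2}{2(t+(t-1)p\const^2)}\Deltaexp{\expidx}^2}$, and you are more careful with indexing (correctly pairing $\ABNweightvec{t}$ with $\Lossvec{t-1}$ and $t-1$ rounds in \cref{fact:minimax_mgf}; the paper's proof writes $\Lossvec{t}$ and $t+1$ in a way that is formally off by one, though this only perturbs constants). The cost is that your bound, unlike the paper's, is not already in the target form and requires a final comparison.

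That final comparison, which you hedged on, works out with room to spare --- and in fact $e^{1/2}$ replaces $2$. Writing $u=\const\sqrt{p}$, the claim reduces to showing
\[
\frac{t u^2}{2(\sqrt{2}+u)^2}-\frac{(t-1)^2 u^2}{2\rbra{t+(t-1)u^2}}\leq \frac12
\qquad\text{for all integers } t\geq 1 \text{ and } u>0.
\]
Clearing denominators (both are positive) and expanding, the $u^4$ terms cancel and the remaining inequality simplifies, after dividing by $2t$, to
\[
\frac{(1-t)u^2}{2}\;\leq\; \sqrt{2}\,(t-1)\,u^3 + \sqrt{2}\,u + 1,
\]
which is immediate since the left-hand side is $\leq 0$ and the right-hand side is $>0$ for $t\geq1$, $u>0$. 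Thus $\exp\cbra{-\tfrac{(t-1)^2 p\const^2}{2(t+(t-1)p\const^2)}\Deltaexp{\expidx}^2}\le e^{\Deltaexp{\expidx}^2/2}\exp\cbra{-\tfrac{t p\const^2}{2(\sqrt{2}+\const\sqrt{p})^2}\Deltaexp{\expidx}^2}\le e^{1/2}\exp\cbra{-\tfrac{t p\const^2}{2(\sqrt{2}+\const\sqrt{p})^2}\Deltaexp{\expidx}^2}$, using $\Deltaexp{\expidx}\le1$, and $e^{1/2}<2$, so your argument gives a slightly sharper constant than the stated lemma (your separate $t=1$ check remains correct but is subsumed by this). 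In short: a valid, alternative proof; the only action item is to replace your informal ``one checks'' with the short algebraic verification above.
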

\begin{proof}[Proof of \cref{lem:ineffective-weight-bound}]
For any $s\in\ointer{0,1}$,
  \*[
    \EE \sbra{\rbra{\ABNweight{t}{\expidx}}^p}
      & \leq \EE \sbra{\rbra{\ABNweight{t}{\expidx}}^p \setdelim \Loss{t}{\expidx} - \Loss{t}{\Expidxoptpath{t}} \geq s \Deltaexp{\expidx} t }
        + \Pr\sbra{\Loss{t}{\expidx} - \Loss{t}{\Expidxoptpath{t}} < s \Deltaexp{\expidx} t }.
  \]
  From \cref{lem:local-norm-weight-eqn},
  \*[
  &\hspace{-1em}
    \EE \sbra{\rbra{\ABNweight{t}{\expidx}}^p \setdelim \Loss{t}{\expidx} - \Loss{t}{\Expidxoptpath{t}} \geq s \Deltaexp{\expidx} t } \\
      & \leq \EE \sbra{\exp\cbra{-p\const^2\frac{\rbra{\Loss{t}{\expidx} - \Loss{t}{\Expidxoptpath{t}}}^2}{2(t+1)}} \Biggsetdelim \Loss{t}{\expidx} - \Loss{t}{\Expidxoptpath{t}} \geq s \Deltaexp{\expidx} t } \\
      & \leq \exp\cbra{- p s^2 \const^2 \Deltaexp{\expidx}^2 \frac{t^2}{2(t+1)}} \\
      & \leq \exp\cbra{- tp s^2 \const^2 \Deltaexp{\expidx}^2 /4} .
  \]
  Next, using the Cram\`er-Chernoff Method and \cref{fact:minimax_mgf},
  \*[
  &\hspace{-1em}
  \Pr\sbra{\Loss{t}{\expidx} - \Loss{t}{\Expidxoptpath{t}} < s \Deltaexp{\expidx} t }\\
    & = \Pr\sbra{\Loss{t}{\Expidxoptpath{t}} - \Loss{t}{\expidx} > - s \Deltaexp{\expidx} t }\\
    & \leq \inf_{\lambda>0} \frac{\EE \exp\cbra{\lambda\rbra{\Loss{t}{\Expidxoptpath{t}} - \Loss{t}{\expidx}}}}{\exp\cbra{-\lambda s \Deltaexp{\expidx} t }}\\
    & \leq \inf_{\lambda>0} \frac{\exp\cbra{t\rbra{\lambda^2/2 - \lambda\Deltaexp{\expidx}}}}{\exp\cbra{-\lambda s \Deltaexp{\expidx} t }} \\
    &= \inf_{\lambda>0} \exp\cbra{t\rbra{\lambda^2/2 - (1-s)\lambda\Deltaexp{\expidx}}} \\
    &= \exp\cbra{-t\ (1-s)^2\Deltaexp{\expidx}^2 /2},
  \]
  where the last step follows from taking $\lambda = \Deltaexp{\expidx} (1-s)$.
  These two terms decay at the same rate if $p s^2 \const^2= 2(1-s)$; that is, $s = \sqrt{2} /(\sqrt{2} +\const\sqrt{p})$. Using this value,
  \*[
    \EE \sbra{\rbra{\ABNweight{t}{\expidx}}^p}
      & \leq 2\exp\cbra{-t \frac{p \const^2}{2(\sqrt{2}+\const\sqrt{p})^2} \Deltaexp{\expidx}^2}.
  \]
\end{proof}

This single-round bound can now be summed to compute the overall regret contribution of ineffective experts.
\begin{lemma}
\label{lem:ineffective-weight-tail}
For every $\expidx\in\neffexperts$ and $p>0$,
\*[
  \sum_{t = \timeexp{\expidx}+1}^\infty \frac{1}{\sqrt{t}} \EE \sbra{\rbra{\ABNweight{t}{\expidx}}^p}
  &\leq 
  \frac{2\sqrt{2}(\sqrt{2} + \const\sqrt{p})}{\numexperts \Deltaexp{\expidx} \const \sqrt{p \, \log\numexperts}}.
\]
\end{lemma}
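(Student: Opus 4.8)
The plan is to bound the sum term by term using the single-round estimate of \cref{lem:ineffective-weight-bound}, collapse the resulting exponentially-decaying series to a closed form, and then substitute the precise value $\timeexp{\expidx}=\ceil{8(\log\numexperts)/\Deltaexp{\expidx}^2}$ to extract the $1/\numexperts$ factor. Concretely, \cref{lem:ineffective-weight-bound} gives $\EE\sbra{(\ABNweight{t}{\expidx})^p}\le 2e^{-t\beta}$ with $\beta \defas \frac{p\const^2}{2(\sqrt2+\const\sqrt p)^2}\Deltaexp{\expidx}^2$, so it suffices to control $\sum_{t=\timeexp{\expidx}+1}^\infty 2t^{-1/2}e^{-t\beta}$.

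For the tail sum I would use that $x\mapsto x^{-1/2}e^{-x\beta}$ is decreasing on $(0,\infty)$, so the sum is at most $\int_{\timeexp{\expidx}}^\infty 2x^{-1/2}e^{-x\beta}\,dx$; the substitution $x=u^2$ converts this into $4\int_{\sqrt{\timeexp{\expidx}}}^\infty e^{-\beta u^2}\,du$, and the elementary Gaussian-tail bound $\int_a^\infty e^{-\beta u^2}\,du\le\frac{1}{2a\beta}e^{-\beta a^2}$ (from $1\le u/a$ on $[a,\infty)$) yields
\[
\sum_{t=\timeexp{\expidx}+1}^\infty \frac{2}{\sqrt t}\,e^{-t\beta}\ \le\ \frac{2\,e^{-\beta\,\timeexp{\expidx}}}{\beta\,\sqrt{\timeexp{\expidx}}}.
\]
Equivalently one can bound $t^{-1/2}\le\timeexp{\expidx}^{-1/2}$ on the summation range, sum the resulting geometric series, and use $1-e^{-\beta}\ge\beta e^{-\beta}$; this produces the same right-hand side, because the exponential weight already concentrates the sum near $t\approx\timeexp{\expidx}$ so the crude step costs essentially nothing.

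It then remains to insert $\timeexp{\expidx}\ge 8(\log\numexperts)/\Deltaexp{\expidx}^2$ in two places: in the exponent this gives $\beta\,\timeexp{\expidx}\ge\frac{4p\const^2}{(\sqrt2+\const\sqrt p)^2}\log\numexperts$, and in the prefactor $\beta\sqrt{\timeexp{\expidx}}\ge\frac{\sqrt2\,p\,\const^2}{(\sqrt2+\const\sqrt p)^2}\,\Deltaexp{\expidx}\sqrt{\log\numexperts}$. Substituting both and simplifying collapses the bound to $\frac{2\sqrt2(\sqrt2+\const\sqrt p)}{\numexperts\,\Deltaexp{\expidx}\,\const\,\sqrt{p\log\numexperts}}$: the $\numexperts^{-1}$ emerges because the exponent coefficient $\frac{4p\const^2}{(\sqrt2+\const\sqrt p)^2}$ is at least $1$ once $\const^2p\ge 2$ (so $e^{-\beta\timeexp{\expidx}}\le\numexperts^{-1}$), and the leftover algebraic factor collapses to $1$ under the same inequality — which, in the regime where the lemma is invoked (the \ABN{} constant $\const=2$ and $p=\tfrac12$), holds with equality, so the estimate is in fact tight.

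The analytic steps are all routine; the real work is constant bookkeeping, so that the factor $8$ hidden in $\timeexp{\expidx}$, the normalization of the exponent in \cref{lem:ineffective-weight-bound}, and the Gaussian-tail constant combine to give precisely the stated right-hand side and a clean $1/\numexperts$. The two points to be careful about are that \emph{both} uses of $\timeexp{\expidx}\ge 8(\log\numexperts)/\Deltaexp{\expidx}^2$ are needed (one to turn $e^{-\beta\timeexp{\expidx}}$ into $\numexperts^{-1}$, one to lower-bound $\sqrt{\timeexp{\expidx}}$), and that the single-round input must be the already-averaged bound of \cref{lem:ineffective-weight-bound}, not the cruder pointwise bound of \cref{lem:local-norm-weight-eqn}.
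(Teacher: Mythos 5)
Your proposal is correct and follows essentially the same route as the paper's proof: apply \cref{lem:ineffective-weight-bound} termwise, pass to the integral $\int_{\timeexp{\expidx}}^\infty t^{-1/2}e^{-\beta t}\,\dee t$, apply the Gaussian-tail bound after a quadratic substitution to obtain $2e^{-\beta \timeexp{\expidx}}/(\beta\sqrt{\timeexp{\expidx}})$, and then substitute $\timeexp{\expidx}\geq 8(\log\numexperts)/\Deltaexp{\expidx}^2$ in both the exponent and the prefactor. You have also made explicit a condition ($\const\sqrt{p}\geq\sqrt{2}$) that the paper's final inequality silently relies upon and that holds with equality for the parameters $(\const,p)=(2,1/2)$ actually used downstream, which is a helpful clarification rather than a departure.
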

\begin{proof}[Proof of \cref{lem:ineffective-weight-tail}]
  Using \cref{lem:ineffective-weight-bound}, we have
\*[
  \sum_{t = \timeexp{\expidx}+1}^\infty \frac{1}{\sqrt{t}} \EE \sbra{\rbra{\ABNweight{t}{\expidx}}^p}
  &\leq 2 \sum_{t = \timeexp{\expidx}+1}^\infty \frac{1}{\sqrt{t}} \exp\cbra{-t \frac{p \const^2}{2(\sqrt{2}+\const\sqrt{p})^2} \Deltaexp{\expidx}^2} \\
  &\leq 2 \int_{\timeexp{\expidx}}^\infty \frac{1}{\sqrt{t}} \exp\cbra{-t \frac{p \const^2}{2(\sqrt{2}+\const\sqrt{p})^2} \Deltaexp{\expidx}^2} \dee t.
\]
For any $\Const>0$, substituting $r^2/2 = t\Const$ and using $\Pr(Z>a)\leq \frac{\exp(-a^2/2)}{a\sqrt{2\pi}}$ for $Z\distas\normaldist(0,1)$ gives
\*[
  \int_{\timeexp{\expidx}}^\infty \frac{1}{\sqrt{t}}\exp\cbra{-t\Const} \dee t
    & = \sqrt{2/\Const} \int_{\sqrt{2 \Const \timeexp{\expidx}}}^\infty \exp\cbra{-r^2/2} \dee r \\
    & \leq \sqrt{2/\Const}\ \frac{\exp\cbra{-\Const \timeexp{\expidx}}}{\sqrt{2\Const \timeexp{\expidx}}} \\
    & = \frac{\exp\cbra{-\Const \timeexp{\expidx}}}{\Const \sqrt{\timeexp{\expidx}}}.
\]
Thus,
\*[
  \sum_{t = \timeexp{\expidx}+1}^\infty \frac{1}{\sqrt{t}} \EE \sbra{\rbra{\ABNweight{t}{\expidx}}^p}
  &\leq 2 \frac{\exp\cbra{-\timeexp{\expidx} \frac{p \const^2}{2(\sqrt{2}+\const\sqrt{p})^2} \Deltaexp{\expidx}^2}}{\frac{p \const^2}{2(\sqrt{2}+\const\sqrt{p})^2} \Deltaexp{\expidx}^2 \sqrt{\timeexp{\expidx}}} \\
  &\leq \frac{2\sqrt{2}(\sqrt{2} + \const\sqrt{p})}{\numexperts \Deltaexp{\expidx} \const \sqrt{p \, \log\numexperts}}.
\]
\end{proof}

\subsection{Entropic concentration of \ABN{} regularizer}

We now show that the functions used to define the regularizer for $\ABNweightvec{}$ concentrate like an entropy. This result is the analogue of Lemma~1 of \citet{semiadv}.

\begin{lemma}
  \label{lem:entropyAB-bound}
  For every $\weightvec{} \in \simp(\experts)$, $\effexperts\subseteq\experts$ with $\card{\effexperts}=\numeffexperts$, and $p \in (0,1)$,
  \*[\label{eqn:modular_entropyA}
  	0\leq
  	\entropyB(\weightvec{})\leq \entropyA(\weightvec{})
  	& \leq \sqrt{2\log\numeffexperts}
	+ \frac{1}{\sqrt{e(1-p)}} \sum_{\expidx\in\neffexperts} \weight{}{\expidx}^p
	+ \ind{\numeffexperts=1}\sqrt{2}\sum_{\expidx\in\neffexperts} \sqrt{\weight{}{\expidx}}.
  \]
\end{lemma}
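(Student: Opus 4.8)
The plan is to establish the three displayed inequalities separately, each reducing to an elementary one-variable convexity fact once a few derivatives are recorded. First I would compute, for $x\in(0,1)$, that $\partialentropyA'(x)=\sqrt{2\log(1/x)}-1/\sqrt{2\log(1/x)}$ and $\partialentropyA''(x)<0$, so that $\partialentropyA$ is concave on $[0,1]$; since $\partialentropyA(0)=\partialentropyA(1)=0$, this also gives $\partialentropyA\ge 0$ there. A direct differentiation (the terms involving $\erf$ cancel those in $\partialentropyA'$) gives $\partialentropyB'(x)=\sqrt{2\log(1/x)}+(\numexperts-1)\sqrt{\pi/2}$ and hence $\partialentropyB''(x)=-1/\partialentropyA(x)<0$, so $\partialentropyB$ is concave on $[0,1]$ with endpoint values $\partialentropyB(0)=-\sqrt{\pi/2}$ and $\partialentropyB(1)=(\numexperts-1)\sqrt{\pi/2}$.

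For $0\le\entropyB(\weightvec{})$: concavity of $\partialentropyB$ bounds it below by its chord on $[0,1]$, $\partialentropyB(x)\ge(1-x)\partialentropyB(0)+x\partialentropyB(1)=\sqrt{\pi/2}\,(\numexperts x-1)$, and summing over $\expidx\in\experts$ with $\sum_\expidx\weight{}{\expidx}=1$ gives $\entropyB(\weightvec{})\ge\sqrt{\pi/2}\,(\numexperts-\numexperts)=0$. For $\entropyB(\weightvec{})\le\entropyA(\weightvec{})$: a direct computation collapses the difference to $\entropyA(\weightvec{})-\entropyB(\weightvec{})=\sqrt{\pi/2}\,\bigl[1-\sum_{\expidx\in\experts}\psi_0(\weight{}{\expidx})\bigr]$, where $\psi_0(x)=1-\erf(\sqrt{\log(1/x)})$ satisfies $\psi_0(0)=0$ and $\psi_0(1)=1$. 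I would then check that $\psi_0$ is convex on $[0,1]$ because $\psi_0'(x)=(\pi\log(1/x))^{-1/2}$ is increasing on $(0,1)$; a convex function vanishing at $0$ is superadditive, so $\sum_{\expidx\in\experts}\psi_0(\weight{}{\expidx})\le\psi_0\bigl(\sum_\expidx\weight{}{\expidx}\bigr)=\psi_0(1)=1$, which is the desired inequality.

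For the remaining (hardest) inequality, split $\entropyA(\weightvec{})=\sum_{\expidx\in\effexperts}\partialentropyA(\weight{}{\expidx})+\sum_{\expidx\in\neffexperts}\partialentropyA(\weight{}{\expidx})$. For the ineffective experts, the one-variable maximization of $x^{1-p}\sqrt{\log(1/x)}$ over $(0,1]$, with maximum $(2e(1-p))^{-1/2}$ attained at $x=e^{-1/(2(1-p))}$, yields $\partialentropyA(x)\le\frac{1}{\sqrt{e(1-p)}}\,x^p$ for all $x\in[0,1]$, hence $\sum_{\expidx\in\neffexperts}\partialentropyA(\weight{}{\expidx})\le\frac{1}{\sqrt{e(1-p)}}\sum_{\expidx\in\neffexperts}\weight{}{\expidx}^p$. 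For the effective experts when $\numeffexperts\ge 2$: concavity of $\partialentropyA$ and Jensen give $\sum_{\expidx\in\effexperts}\partialentropyA(\weight{}{\expidx})\le s\sqrt{2\log(\numeffexperts/s)}$ with $s:=\sum_{\expidx\in\effexperts}\weight{}{\expidx}\in[0,1]$, and since $\numeffexperts/s\ge 2$ the map $s\mapsto s\sqrt{2\log(\numeffexperts/s)}$ is nondecreasing on $[0,1]$, so this is at most $\sqrt{2\log\numeffexperts}$. When $\numeffexperts=1$, say $\effexperts=\{\effexpidx\}$ and $r:=\sum_{\expidx\in\neffexperts}\weight{}{\expidx}$, so $\weight{}{\effexpidx}=1-r$; the elementary inequality $t^2\log(1/t)\le 1-t$ on $[0,1]$ gives $\partialentropyA(\weight{}{\effexpidx})^2=2(1-r)^2\log(1/(1-r))\le 2r$, so $\partialentropyA(\weight{}{\effexpidx})\le\sqrt{2r}\le\sqrt 2\sum_{\expidx\in\neffexperts}\sqrt{\weight{}{\expidx}}$, which matches $\ind{\numeffexperts=1}\sqrt 2\sum_{\expidx\in\neffexperts}\sqrt{\weight{}{\expidx}}$ since $\sqrt{2\log\numeffexperts}=0$. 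Adding the effective and ineffective contributions completes the proof.

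The main obstacle is the $\numeffexperts=1$ case just described: one has to realize that the extra $\sqrt 2\sum_{\expidx\in\neffexperts}\sqrt{\weight{}{\expidx}}$ summand (present only when $\numeffexperts=1$) is exactly what absorbs the single effective expert's contribution, and that this reduces to $t^2\log(1/t)\le 1-t$ (itself provable by showing $t\mapsto 1-t+t^2\log t$ is nonincreasing on $[0,1]$ with value $0$ at $t=1$). A lesser point of care is tracking the signs so that $\entropyA-\entropyB$ collapses precisely to the superadditivity statement for $\psi_0$, and checking that $\psi_0'$ is indeed increasing, which hinges on the cancellation in $\partialentropyB'$.
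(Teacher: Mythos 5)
Your proof is correct and establishes all three inequalities, but the first two use genuinely more elementary machinery than the paper's proof. For $0\leq\entropyB$, the paper invokes Bauer's maximum principle (a concave function on a compact convex set attains its minimum at an extreme point) to reduce to checking the value of $\entropyB$ at one-hot vectors, whereas you simply bound $\partialentropyB$ below by its chord on $[0,1]$ and sum, using $\sum_\expidx \weight{}{\expidx}=1$; no appeal to an extreme-point theorem is needed. For $\entropyB\leq\entropyA$, the paper again writes $\entropyA-\entropyB=\sum_\expidx g(\weight{}{\expidx})$ with a concave $g$ and applies Bauer, while you rephrase the needed inequality as $\sum_\expidx\psi_0(\weight{}{\expidx})\leq\psi_0(1)$ for a convex $\psi_0$ with $\psi_0(0)=0$ and finish with superadditivity of convex functions vanishing at the origin---again a self-contained one-line argument. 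For the third inequality the two proofs share the same decomposition into effective and ineffective experts, the same one-variable optimization of $x^{1-p}\sqrt{2\log(1/x)}$ for the ineffective part, and the same elementary inequality $(1-t)\geq t^2\log(1/t)$ in the $\numeffexperts=1$ case (the paper uses the slightly stronger $(1-t)\geq t\log(1/t)$ followed by one more step, but the content is the same); the only real difference is that for the effective part you replace the paper's Lagrangian/KKT analysis of a small constrained optimization with Jensen's inequality applied to the concave $\partialentropyA$ followed by the monotonicity of $s\mapsto s\sqrt{2\log(\numeffexperts/s)}$ on $[0,1]$ when $\numeffexperts\geq 2$, which is cleaner. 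Both approaches are sound; yours avoids citing Bauer's theorem and the KKT apparatus, at the cost of a couple of explicit derivative computations, and is arguably easier to verify line by line.
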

\begin{proof}[Proof of \cref{lem:entropyAB-bound}]
Using the previously observed expressions for the first and second derivatives, $\partialentropyB$ is concave on $(0,1)$, so $\entropyB$ is concave on $\simp(\experts)$.
Bauer's maximum principle \citep{bauer60} says that a concave function on a convex, compact set $S$ attains its minimum at an extreme point of $S$.
This means that $\entropyB$ attains its minimum in the set of extreme points of $\simp([\numexperts])$, which corresponds to the set of standard basis vectors. In particular, letting $e_\expidx$ denote the $\expidx$th basis vector,
\*[
	\entropyB(\weightvec{})
	&\geq \min_{\expidx \in \experts} \entropyB(e_\expidx)
	= 0.
\]

For the second inequality, let
\*[
  g(x) = 
  \begin{cases}
  \sqrt{\frac{\pi}{2}} \erf\Big(\sqrt{\log(1/x)} \Big) - x (\numexperts-1)\sqrt{\frac{\pi}{2}},
  & x\in(0,1] \\
  \sqrt{\pi/2},
  & x=0.
  \end{cases}
\]
so that
\*[
  \entropyB(\weightvec{})
  = \entropyA(\weightvec{})
  - \sum_{\expidx\in\experts} g(\weight{}{\expidx}).
\]
Then,
\*[
	\frac{\dee^2}{\dee x^2} g(x)
	= - \frac{1}{2\sqrt{2}x\log^{3/2}(1/x)}.
\]
This is negative for $x \in (0,1)$, so $\sum_{\expidx\in\experts} g(\weight{}{\expidx})$ is concave. Again, by Bauer's maximum principle, we have
\*[
	\sum_{\expidx\in\experts} g(\weight{}{\expidx})
	\geq 0,
\]
so
\*[
	\entropyB(\weightvec{})
	\leq \entropyA(\weightvec{}).
\]

For the third inequality, observe that
\*[
	\entropyA(\weightvec{})
	= \sum_{\effexpidx\in\effexperts} \weight{}{\effexpidx} \sqrt{2\log(1/\weight{}{\effexpidx})}
	+ \sum_{\expidx\in\neffexperts} \weight{}{\expidx} \sqrt{2\log(1/\weight{}{\expidx})}.
\]
Consider the convex optimization problem
\*[
  \min_{\substack{\weightdumvec{}\in\PosReals^{\numeffexperts} \\ \inner{\one}{\weightdumvec{}} \leq 1}}
  \bigg\{-\sum_{\effexpidx\in\effexperts} \weightdum{}{\effexpidx} \sqrt{2\log(1/\weightdum{}{\effexpidx})} \bigg\}.
\]
This has the Lagrangian
\*[
	\Ll(\weightdumvec{}; \alpha, \lambda_1,\dots,\lambda_{\numeffexperts})
	= -\sum_{\effexpidx\in\effexperts} \weightdum{}{\effexpidx} \sqrt{2\log(1/\weightdum{}{\effexpidx})} + \alpha(\inner{1}{\weightdumvec{}}-1) + \sum_{\effexpidx\in\effexperts} \lambda_{\effexpidx} \weightdum{}{\effexpidx}.
\]
The partial derivatives for $\effexpidx\in\effexperts$ are
\*[
	\partial_\effexpidx \Ll(\weightdumvec{}; \alpha, \lambda_1,\dots,\lambda_{\numeffexperts})
	= -\sqrt{2\log(1/\weight{}{\effexpidx})} + \frac{1}{\sqrt{2\log(1/\weight{}{\effexpidx})}} + \alpha + \lambda_{\effexpidx}.
\]
Note that this is undefined at $\weight{}{\effexpidx}=0$, so $\lambda_{\effexpidx}=0$ for all $\effexpidx$.
If $\alpha=0$, then for each $\effexpidx\in\effexperts$, $\weight{}{\effexpidx} = e^{-1/2}$, and this is only feasible when $\numeffexperts=1$. In this case,
\*[
	\sum_{\effexpidx\in\effexperts} \weight{}{\effexpidx} \sqrt{2\log(1/\weight{}{\effexpidx})}
	= \frac{1}{\sqrt{e}}.
\]
Otherwise, $\alpha>0$, and since $-\sqrt{2\log(1/x)} + \frac{1}{\sqrt{2\log(1/x)}}$ is monotonic in $x$, all the $\weight{}{\effexpidx}$ are equal. By the K.K.T.\ condition, this implies $\weight{}{\effexpidx} = 1/\numeffexperts$ for each $\effexpidx$, so
\*[
	\sum_{\effexpidx\in\effexperts} \weight{}{\effexpidx} \sqrt{2\log(1/\weight{}{\effexpidx})}
	= \sqrt{2\log\numeffexperts}.
\]
That is, if $\numeffexperts \geq 2$,
\[\label{eqn:ent-conc-bigN0}
	\entropyA(\weightvec{})
	\leq \sqrt{2\log\numeffexperts}
	+ \sum_{\expidx\in\neffexperts} \weight{}{\expidx} \sqrt{2\log(1/\weight{}{\expidx})}.
\]
Next, if $\effexperts=\{\effexpidx\}$, then we have
\[\label{eqn:ent-conc-smallN0}
	\entropyA(\weightvec{})
	&= \weight{}{\effexpidx} \sqrt{2\log(1/\weight{}{\effexpidx})}
	+ \sum_{\expidx\in\neffexperts} \weight{}{\expidx} \sqrt{2\log(1/\weight{}{\expidx})} \\
  &\leq \sqrt{2\weight{}{\effexpidx} \log(1/\weight{}{\effexpidx})}
  + \sum_{\expidx\in\neffexperts} \weight{}{\expidx} \sqrt{2\log(1/\weight{}{\expidx})} \\
	&\leq \sqrt{2(1-\weight{}{\effexpidx})}
	+ \sum_{\expidx\in\neffexperts} \weight{}{\expidx} \sqrt{2\log(1/\weight{}{\expidx})} \\
	&= \sqrt{2\sum_{\expidx\in\neffexperts} \weight{}{\expidx} }
	+ \sum_{\expidx\in\neffexperts} \weight{}{\expidx} \sqrt{2\log(1/\weight{}{\expidx})} \\
	&\leq \sum_{\expidx\in\neffexperts} \left[\sqrt{2\weight{}{\expidx}} + \weight{}{\expidx} \sqrt{2\log(1/\weight{}{\expidx})} \right],
\]
where we have used $x\log(1/x) \leq 1-x$ for all $x>0$ and that $\sqrt{x+y} \leq \sqrt{x} + \sqrt{y}$ for all $x,y>0$.

Finally, for any $p \in (0,1)$, let $f(x) = x^{1-p}\sqrt{2\log(1/x)}$. Observe that $f(0^+)=0$, $f(1) = 0$, and
\*[
	f'(x) = (1-p)x^{-p}\sqrt{2\log(1/x)} - \frac{x^{-p}}{\sqrt{2\log(1/x)}}.
\]
Thus, the only critical point of $f$ occurs at $x_0 = e^{-\frac{1}{2(1-p)}}$. Substituting this gives
\*[
	f(x) \leq \frac{1}{\sqrt{e(1-p)}}.
\]
It follows that for $x\in\cinter{0,1}$
\[
\label{eqn:ent-conc-func}
	x\sqrt{2\log(1/x)} \leq \frac{x^p}{\sqrt{e(1-p)}}
\]
Combining \cref{eqn:ent-conc-bigN0,eqn:ent-conc-smallN0,eqn:ent-conc-func} gives the third inequality.
\end{proof}

\section{Experiments}\label{sec:experiments}

In this section, we provide an empirical evaluation of our proposed algorithms and compare them to existing algorithms in the respective paradigms. In particular, for the quantile regret paradigm we implement the version of \ABNC{} proposed in \cref{example:root_log_regularizer}, while for the semi-adversarial paradigm we implement \ABN{} and compare it to \MetaCARE{} \citep{semiadv} and \OGHedge{} with a decreasing \lrname{} \citep{mourtada2019optimality}. In both cases we generate synthetic data, with specific details in the respective sections.
All algorithms use standard tuning for hyperparameters; the specific details can be found in the code, 
available at \url{https://github.com/blairbilodeau/neurips-2021}.

\subsection{Quantile regret}

For this experiment, we follow the same setup as \citet{chaudhuri2009}. The losses are generated from the \emph{Hadamard} matrix of dimension 64, where the row with constant values is removed, the remaining rows are duplicated with their signs inverted, and all the rows are repeated horizontally in order to get $T=32768$ columns and $n=126$ rows. Each row $i$ represents the losses for expert $i$ from $t=1, \dots, T$. Next, given a parameter $K$, which represents the number of \emph{good} experts (out of $n$), the value 0.025 is subtracted from each entry in the first $K$ rows. Each row is then duplicated $N / n$ times in order to have $N$ total experts. Finally, the values in the matrix are shifted and normalized in order to be bounded in the range $[0, 1]$. For more details, see \citet[Section 3]{chaudhuri2009}. As noted by \citet{chaudhuri2009}, the replication factor affects the behaviour of algorithms that are tuned in terms of the total number of experts $N$, such as \OGHedge{} with a decreasing \lrname{} \citep{mourtada2019optimality} or \ADAHedge{} \citep{derooij14FTL}. On the other hand, parameter-free algorithms such as \SQUINT{} \citep{koolen2015}, \NH{} \citep{chaudhuri2009} or \COINBET{} \citep{orabona2016} are not affected by the replication process. The results illustrated in \cref{fig:quantile} show that \ABNC{} follows the same behaviour as parameter-free algorithms. In particular, we plot the cumulative regret after $T=32768$ time-steps against the best expert. Note that the cumulative loss is equal for the $K \times N / n$ \emph{good} experts, so we are effectively plotting the regret against the top $\varepsilon=K/n$-quantile, which is kept constant despite the total number of experts $N$ increasing. We adopt an initial uniform distribution over experts for all the algorithms.

\begin{figure}[t]
\centering
\includegraphics[width=.45\linewidth]{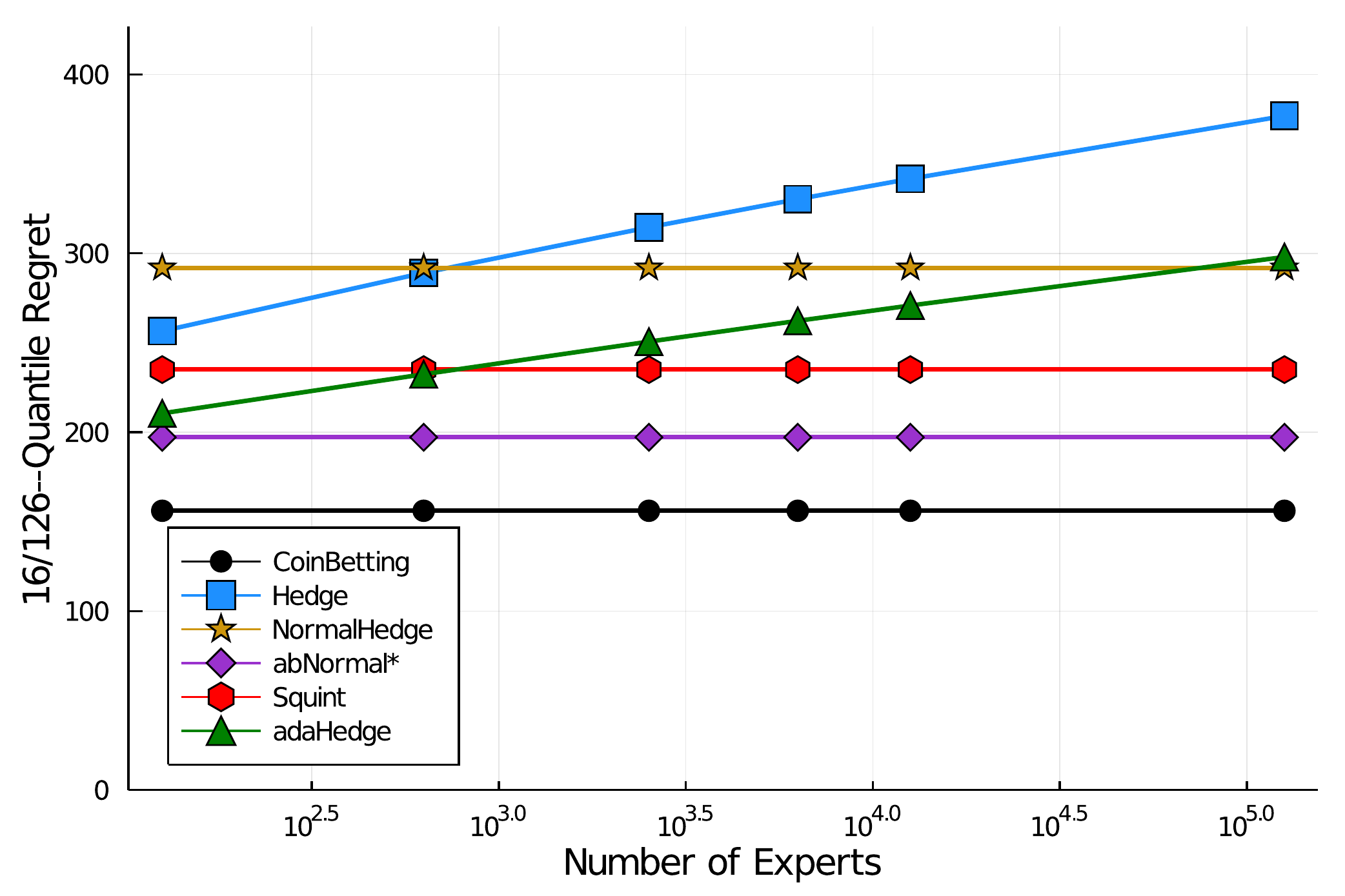}
\includegraphics[width=.45\linewidth]{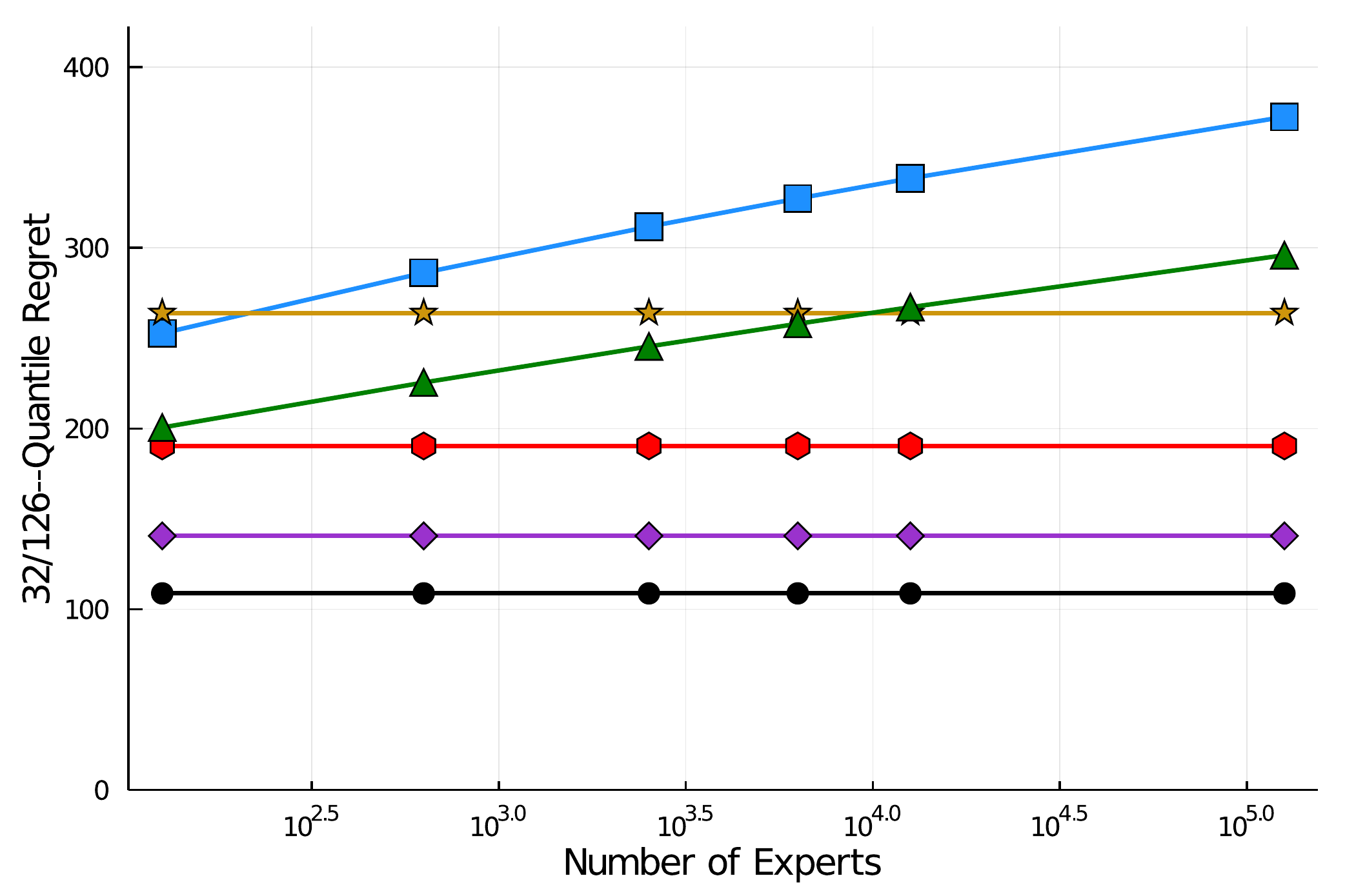}
\caption{Quantile regret to the top $K/n$ proportion of experts versus number of experts $\numexperts$.
 }
\label{fig:quantile}
\end{figure}

\subsection{Semi-adversarial losses}

For this experiment, we compare \OGHedge{} with decreasing \lrname{} $\eta_t \propto 1 / \sqrt{t}$, \MetaCARE{} \citep{semiadv} and \ABN{} from \cref{sec:semiadv-main-results}. In this case, we keep the number of total experts fixed to $\numexperts = 1000$, and use deterministic losses. When $\numeffexperts=1$, the best expert has loss 0.4 and the rest have loss 0.5 on every round. When $\numeffexperts=2$, expert 1 alternates between loss of 0 and loss of 1, expert 2 alternates (on opposite rounds from expert 1) between loss of 1 and loss of 0, and the rest have loss 0.6 on every round.
Thus, both $\numeffexperts=1$ and $\numeffexperts=2$ correspond to $\Deltaeff=0.1$. 
When $\numeffexperts=\numexperts$, the first half and second half of the experts alternate between loss 1 and loss 0 on opposite rounds.

As the theory of \citet{semiadv} prescribes, \OGHedge{} does well for $\numeffexperts=1$ but does equally poorly for $\numeffexperts=2$ as $\numeffexperts=\numexperts$, while \MetaCARE{} improves on \OGHedge{} for $1 < \numeffexperts < \numexperts$. As \cref{thm:discrete-semiadv} prescribes, \ABN{} does well in all settings, improving on \MetaCARE{} by adapting faster in the $\numeffexperts=2$ case.

\begin{figure}[t]
\centering
\includegraphics[scale=0.35]{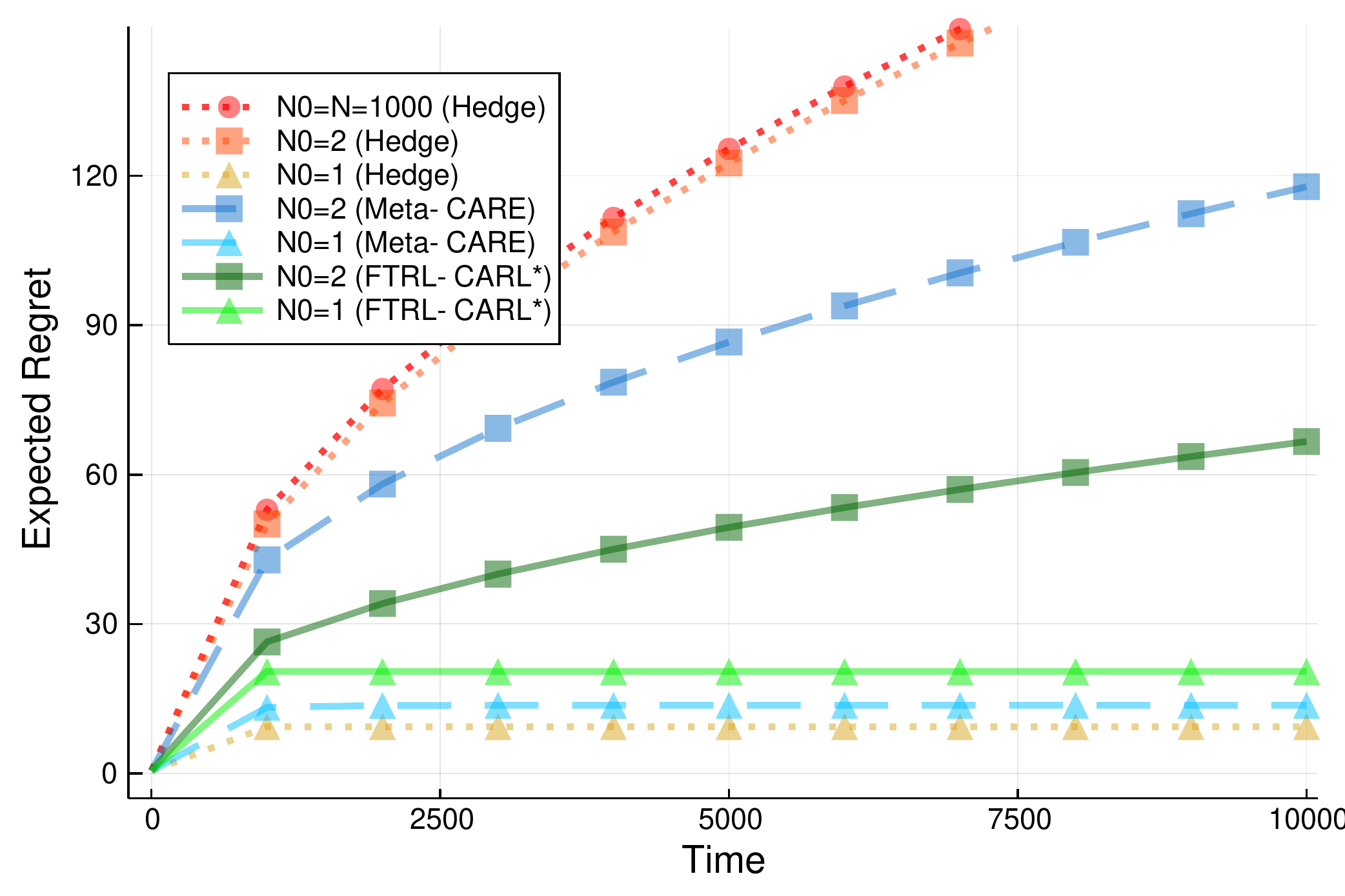}
\caption{Semi-adversarial paradigm: Expected regret versus time for various values of $\numeffexperts$.}
\label{fig:semi-adv}
\end{figure}

\end{document}